\newcolumntype{Y}{>{\hsize=0.8\hsize\centering\arraybackslash}X} % 稍窄列
\newcolumntype{Z}{>{\hsize=1.2\hsize\centering\arraybackslash}X} % 稍宽列
\newcolumntype{C}{>{\centering\arraybackslash}X} % 稍窄列
\newcolumntype{M}[1]{>{\centering\arraybackslash}m{#1}}
\newtheorem{theorem}{Theorem}
\newtheorem{lemma}{Lemma}
\theoremstyle{definition}
\newtheorem{definition}{Definition}
\newtheorem{remark}{Remark}
\newtheorem{example}{Example}
\newtheorem{problem}{Problem}
\begin{document}

\title{SLEI3D: Simultaneous Exploration and Inspection via
Heterogeneous Fleets under Limited Communication}

\author{Junfeng Chen$^1$, Yuxiao Zhu$^2$,
Xintong Zhang$^2$, Bing Luo$^2$, and Meng Guo$^1$
        % <-this % stops a space
\thanks{The authors are with $^1$the School of Advanced Manufacturing and Robotics,
  Peking University, Beijing 100871, China;
and $^2$the Division of Natural and Applied Sciences, Duke Kunshan University, Suzhou 215316, China.
 {\tt\small meng.guo@pku.edu.cn}
}% <-this % stops a space
}

% The paper headers
% \markboth{IEEE TRANSACTIONS ON AUTOMATION SCIENCE AND ENGINEERING}%
% {Shell \MakeLowercase{\textit{et al.}}: A Sample Article Using IEEEtran.cls for IEEE Journals}

% \IEEEpubid{0000--0000/00\$00.00~\copyright~2021 IEEE}
% Remember, if you use this you must call \IEEEpubidadjcol in the second
% column for its text to clear the IEEEpubid mark.

\maketitle

%%========================================
\begin{abstract}
 Robotic fleets such as unmanned aerial and ground vehicles
 have been widely used for routine inspections of static environments,
 where the areas of interest are known and planned in advance.
 However, in many applications, such areas of interest are unknown
 and should be identified online during exploration.
 Thus, this paper considers the problem of simultaneous exploration,
 inspection of unknown environments and then real-time
 communication to a mobile ground control station
 to report the findings.
 The heterogeneous robots are equipped with different sensors,
 e.g., long-range lidars for fast exploration and close-range cameras
 for detailed inspection.
 Furthermore, global communication is often unavailable in such environments,
 where the robots can only communicate with each other via ad-hoc wireless networks
 when they are in close proximity and free of obstruction.
 This work proposes a novel planning and coordination framework (SLEI3D)
 that integrates the online strategies for collaborative 3D exploration,
 adaptive inspection and timely communication
 (via the intermittent or proactive protocols).
 To account for uncertainties w.r.t. the number and location of features,
 a multi-layer and multi-rate planning mechanism is developed
 for inter-and-intra robot subgroups,
 to actively meet and coordinate their local plans.
The proposed framework is validated extensively via high-fidelity simulations
of numerous large-scale missions with up to~$48$ robots
and $384$ thousand cubic meters.
Hardware experiments of~$7$ robots are also conducted.
\end{abstract}

%%========================================
\def\abstractname{Note to Practitioners}
\begin{abstract}

    This paper is motivated by the challenges of coordinating large-scale
    heterogeneous fleets for the inspection of large buildings and infrastructure,
    where heterogeneous UAVs must collaborate to explore unknown environments,
    identify areas of interest,
    and more importantly, inspect specific features
    (such as cracks, leaks, and other anomalies).
    Furthermore, these features must be relayed back to a
    control station for further analyses.
    Existing methods predominantly focuses on exploration tasks
    and often overlooks the need for close-up inspection.
    Instead,
    a hierarchical and flexible framework is proposed to
    coordinate a group of heterogeneous UAVs online
    for efficient exploration, inspection and communication,
    subject to an unknown number and location of features.
    Instead of relying on an all-to-all communication network,
    limited communication range and bandwidth are addressed
    by leveraging intermittent and proactive communication protocols,
    i.e., to enable exchange of local plans, explored areas,
    and detected features during online execution.
    Via extensive simulations in a high-fidelity simulator,
    the proposed framework is shown to be efficient and reliable
    for large-scale simultaneous exploration and inspection tasks
    within various scenes.
    Robustness to robot failures and communication loss is also demonstrated.
    Hardware experiments over UGVs and UAVs validate the practical relevance.
\end{abstract}

%==============================
\begin{figure}[!t]
    \centering
    \includegraphics[width=1\linewidth, height=0.88\linewidth]{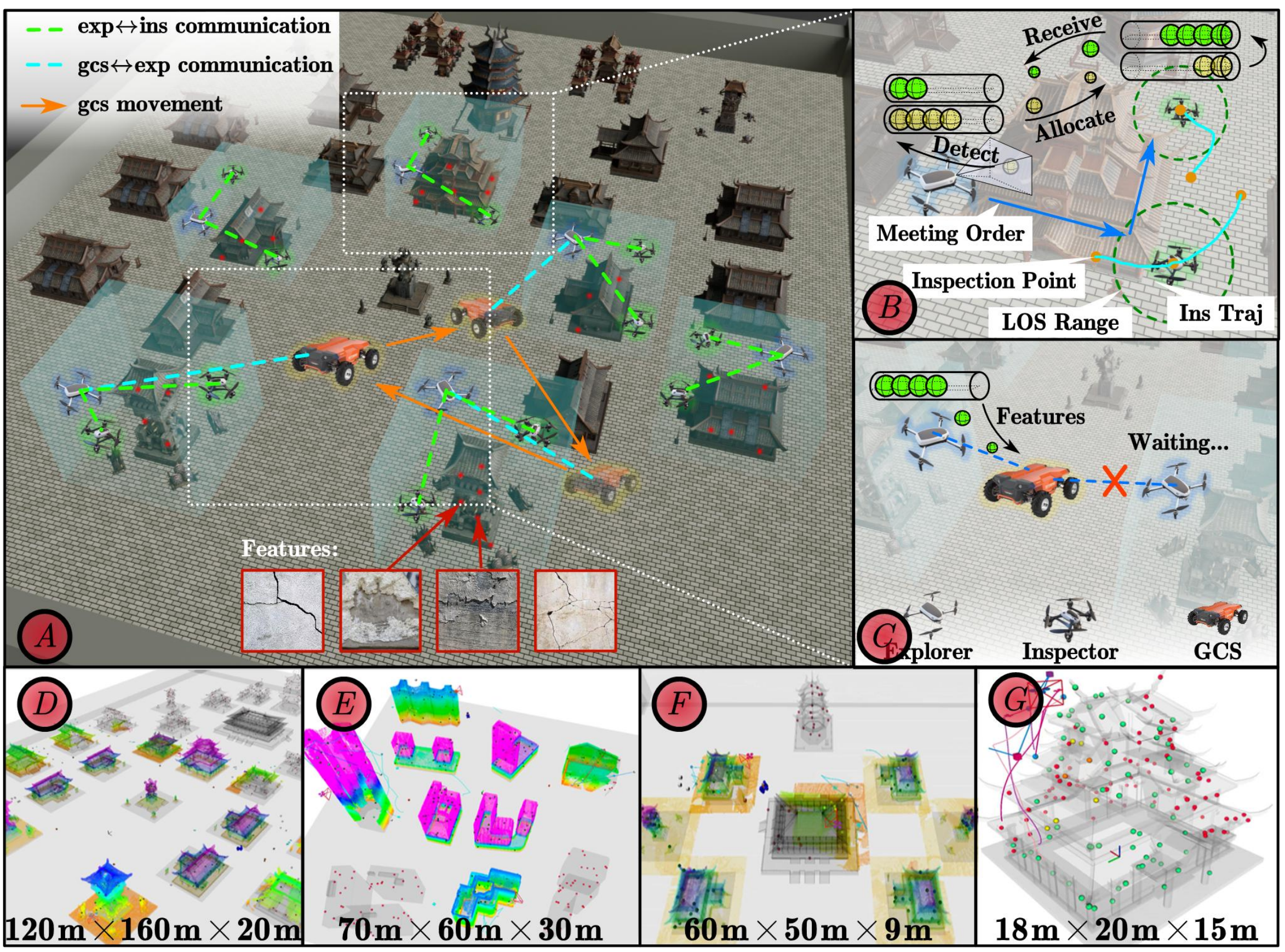}
    \vspace{-3mm}
    \caption{
      \textbf{(a)} $6$ explorers and $12$ inspectors
      are tasked to simultaneously explore and inspect
      an unknown number
      of numerous features;
      \textbf{(b)} the explorer and inspectors within the same subgroup
      coordinate for the inspection tasks;
      \textbf{(c)} the mobile ground station actively meets with
      the explorers to receive the latest features;
      \textbf{(d)-(g)} snapshots of online execution in four different large-scale scenes.
    }
    \label{fig:overall}
    \vspace{-6mm}
  \end{figure}
  %==============================

%%======================================== 
\begin{IEEEkeywords}
    Heterogeneous multi-robot system, 3D exploration,
    collaborative task planning,
    intermittent communication. 
\end{IEEEkeywords}
%%========================================
%==============================
\section{Introduction}\label{sec:intro}
{Fleets of unmanned aerial vehicles (UAVs) and ground vehicles (UGVs)
have been deployed to perform routine maintenance and inspection tasks
for large and remote infrastructures such as
power plants~\cite{hinostroza2024autonomous, zhang2024inspection},
bridges~\cite{jiang2025key, afrazi2025use}
and industrial sites~\cite{best2024multi, wang2025multi}.}
Such tasks are often static and repetitive,
where the sequence of areas to visit and inspect are given in advance.
{However, in many applications as highlighted in Fig.~\ref{fig:overall},
both the environment
and the areas of interests (AoI) are unknown a priori,
which requires the robots to simultaneously explore
the environment, identify the AoI,
and then inspect the features therein.}
Most existing work on collaborative
exploration~\cite{yamauchi1997frontier,colares2016next,
zhou2023racer,patil2023graph}
has focused on only the exploration task
{to obtain the global map quickly},
which overlooks the need for close-up inspection
of certain features detected during exploration~\cite{naazare2022online, gao2023uav},
e.g., cracks in planetary caves~\cite{allan2019planetary},
life signs during search and rescue~\cite{couceiro2017overview},
and zoomed images at archaeological sites~\cite{brutto2012uav}.
More importantly, these work often assumes a global all-to-all communication
among the robots,
which could be impractical in many aforementioned scenes
where the communication facilities are unavailable~\cite{ray2016internet} or severely degraded~\cite{hu2012cloud}.
In such cases, the robots can only exchange information
via ad-hoc networks subject to the line-of-sight (LOS)~\cite{murayama2023bi}
and proximity constraints~\cite{mosteo2010multi}.
This imposes great challenges on the coordination of
the robotic fleet as communication events,
exploration and inspection tasks are now closely dependent,
thus should be planned simultaneously~\cite{guo2018multirobot, saboia2022achord, kantaros2019temporal, da2023communication}.

% ====================================
\begin{figure*}[!t]
    \centering
    \includegraphics[width=0.9\linewidth]{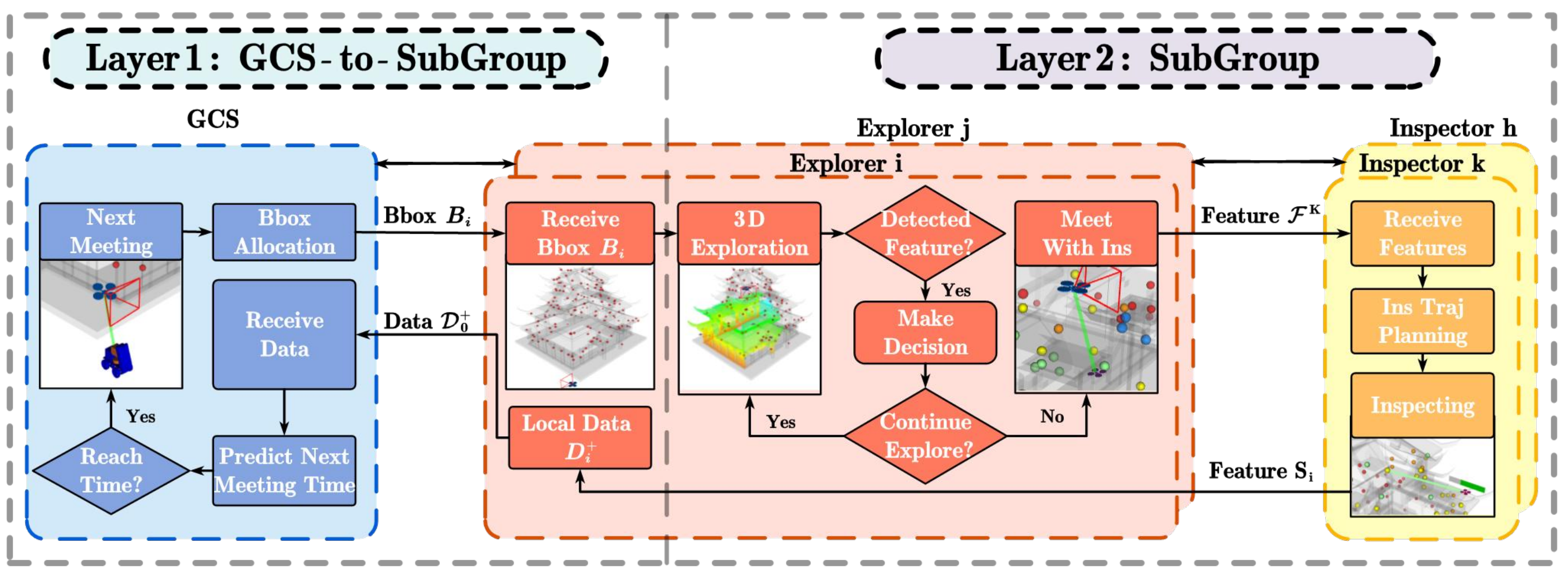}
    \vspace{-2mm}
    \centering
    \caption{Overview of the proposed method, which consists of two hierarchical layers:
        the first layer where GCS coordinates with explorers, and the second layer where the explorer
    coordinates with inspectors within each subgroup, all under limited communication.}
    \label{fig:framework}
    \vspace{-4mm}
  \end{figure*}
% ====================================

Moreover, it is often essential to provide a timely update
to a mobile ground control station (GCS),
regarding the progress of exploration and inspected features,
e.g., to plan for further actions such as maintenance and repair~\cite{milidonis2023unmanned}.
This can be particularly challenging without a global communication network,
as the GCS would lose connection whenever the robots
spread out for exploration~\cite{tian2024ihero}.
In other words,
the robots and the GCS should actively and frequently coordinate
their communication including time, location and the content.
How to design such a flexible and efficient coordination framework
for the GCS and robotic fleets remains unsolved.

As shown in Fig.~\ref{fig:framework},
this work proposes a simultaneous large-scale exploration, inspection
and communication framework (SLEI3D)
for a heterogeneous robotic fleet that operates in unknown environments
under limited communication.
Given the bounding boxes that enclose the AoI,
the robots are divided into numerous subgroups of explorers and inspectors.
As an essential component, a 3D collaborative exploration strategy is
designed for explorers with long-range Lidars
to detect AoI efficiently,
based on geometric-aware frontier generation.
Allocation of the identified AoI to inspectors
with close-range cameras is then formulated
as a 3D constrained routing problem,
to maximize the inspection efficiency
and ensure a safety distance.
Then a prediction algorithm for the task completion time
is designed for the GCS
to rapidly collect the result of inspected features.
More importantly,
an intermittent communication protocol is designed
between the GCS and the subgroups,
to facilitate online data exchange and adaptation given the updated map and the features.
In contrast,
a proactive communication protocol is employed within each subgroup,
where the explorer dynamically determines the time and location
to communicate with the inspectors based on the detected features.
The efficiency and reliability of the proposed framework are
analyzed theoretically and validated
via extensive large-scale simulations and hardware experiments.
Up to~$48$ robots are deployed to explore and inspect
numerous large-scale scenes with more than~$150$ inspection tasks.

Main contributions of this work are threefold:
(I) the novel problem formulation of simultaneous exploration and inspection
under limited communication for heterogeneous robotic fleets;
(II) the multi-layer and multi-rate coordination framework that
co-optimizes the exploration task, the inspection task
and the inter-robot communication;
and (III) the extensive large-scale simulations that validate
the performance in practical scenes.
To the best of our knowledge, this is the first work that
provides such a comprehensive solution.

%%========================================
%==============================
\section{Related Work}\label{subsec:intro-related}

%==============================
\subsection{Multi-robot Collaborative Exploration}
Autonomous exploration has a long history in robotics~\cite{sharma2016survey, hu2024review},
e.g.,~\cite{yamauchi1997frontier} introduces an intuitive yet powerful
frontier-based method for guiding the exploration.
It has been adapted to multi-robot teams
by assigning these frontiers to different robots for concurrent
exploration via e.g., distributed auction~\cite{hussein2014multi},
multi-vehicle routing~\cite{zhou2023racer},
optimization of information gain in~\cite{colares2016next, patil2023graph},
and dynamic optimization of topological graph in~\cite{dong2024fast}.
On the other hand,
the work in~\cite{cabrera2012flooding} presents a flooding algorithm that ensures multiple robots
can explore the entire environment without missing any area.
A multi-robot depth-first search (MR-DFS) method is proposed in~\cite{doolittle2012population} to explore unknown environments
encoded as a graph by parallel search.
However, these work commonly assumes that all robots can
communicate instantly and exchange information at all times,
i.e., they always have access to {the same global map}.
However, this is often impractical for unknown environments
where the inter-robot communication is limited in range.
Besides the above classical methods,
reinforcement learning (RL)-based approaches have also been applied to multi-robot exploration,
see~\cite{battocletti2021rl, zhu2024maexp, chen2024meta}.
They mostly focus on the design of customized observation spaces encapsulating partial observability constraints
and novel reward functions to improve exploration efficiency.
Therefore, many recent work combines the planning
of inter-robot communication and autonomous exploration~\cite{amigoni2017multirobot}.
The ``Zonal and Snearkernet'' relay algorithm in~\cite{vaquero2018approach},
the four-state strategy in~\cite{cesare2015multi},
a centralized integer optimization for rendezvous points in~\cite{gao2022meeting},
a Steiner tree-based deployment in~\cite{stump2011visibility},
the circular communication model in~\cite{pei2013connectivity}
adopt an event-based communication scheme.
On the other hand,
the work in~\cite{rooker2007multi} adopts fully-connected networks
at all time, while radio dropplets are utilized in~\cite{saboia2022achord}
as extended communication relays between robots.
Nonetheless, these work considers only the task of collaborative exploration,
without addressing the inspection tasks of certain features.

%==============================
\subsection{Autonomous Inspection}
Autonomous inspection can already be found in various applications
via different sensors, e.g.,~\cite{couceiro2017overview,brutto2012uav}.
Such tasks involve generating a set of viewpoints based on
the 3D structure and the sensor intrinsics,
which are then assigned to the robots for inspection.
A skeleton-based space decomposition method is proposed in~\cite{feng2024fc}
followed by a travel salesman problem (TSP) algorithm.
The work in~\cite{jing2019coverage} generates via-points and path primitives
using voxel dilation or subtraction,
and employs a primitive coverage graph (PCG) to optimize the collective paths.
However,
these work primarily focuses on single-robot exploration strategies
and is not directly applicable to multi-robot systems.

The work in~\cite{jing1808samplingbased} employs random sampling
in combination with potential fields to generate candidate viewpoints,
which are allocated to a fleet of UAVs by solving an integer optimization problem.
Moreover,
the Multi-UAV Coverage Path Planning for Inspection (MU-CPPI) algorithm in~\cite{jing2020multi},
addresses the allocation problem of viewpoints by formulating as a set-covering vehicle routing problem (SC-VRP).
This approach builds on an \textit{exploration-then-inspection} framework,
where UAVs first fully explore the environment to construct a prior map,
followed by dedicated inspection path planning based on the acquired data.
This decoupling often leads to low efficiency of inspection.
To tackle this,
simultaneous exploration and photographing framework (SOAR) is proposed in~\cite{zhang2024soar},
where SOAR employs LiDAR-equipped explorers to detect uncovered areas and generate inspection viewpoints at surface frontiers,
while camera-equipped photographers are assigned to these viewpoints via solving a \textit{Consistent Multiple Depot Multiple Traveling Salesman Problem} (Consistent-MDMTSP).
However, it relies on the persistent all-to-all communication between UAVs and a static ground control station (GCS) for global task allocation.
Furthermore, the framework designates a single explorer as the sole frontier detector,
yielding a computational bottleneck in large-scale environments.
The most relevant work~\cite{xu2024cost} called CARIC,
introduces a hierarchical strategy for simultaneous exploration and inspection in multi-robot systems.
CARIC partitions robots into specialized teams assigned to subregions,
where they perform local tasks and relay inspection results to a static GCS through line-of-sight (LOS) communication.
{
However, CARIC enforces inter-robot connectivity by pre-defining communication points on rectangular bounding boxes, 
a method that cannot be applied to irregular bounding boxes 
with non-convex internal structures or to scenarios involving a dynamic GCS
that interacts with multiple teams of heterogeneous UAVs.
}
%==============================
\subsection{Multi-robot Coordination under Limited Communication}

The key challenge in multi-robot coordination
under communication constraints,
is to determine when and where inter-robot communication should occur,
with the message content tailored to different purposes.
This issue arises not only in cooperative exploration~\cite{sheng2004multi},
but also in cooperative patrolling~\cite{mosteo2008multi},
and coverage planning~\cite{santos2018coverage}.
Different communication protocols have been proposed, e.g.,
the work in~\cite{gao2022meeting} proposes the meeting-merging-mission protocol
for all robots,
by formulating a constrained integer optimization problem to determine the rendezvous points and the meeting time.
In related work in~\cite{tian2024ihero} presents the protocol
of distributed intermittent communication,
by solving iteratively multiple vehicle routing problems with time window (MVRP-TW).
The most relevant work~\cite{xu2024cost}
addresses the LOS communication constraints
by choosing communication points on regular bounding boxes,
which is not applicable to irregular structures.
{However, these work primarily assumes a uniform purpose,
e.g., collaboration, exploration or coverage.}
In addition, the role of a mobile GCS as an information hub
and the requirement to relay real-time information
is not considered in these aforementioned work.

%%========================================
\section{Problem Description}\label{sec:problem}

%-------------------------
\subsection{Model of Workspace and Robots}\label{subsec:prb_robot}
Consider a group of~$N$ robots~$\mathcal{N}\triangleq \{1,\cdots,N\}$
that collaborate in a common, unknown and bounded workspace~$\mathcal{W} \subset \mathbb{R}^3$.
{It is assumed that the bounding size and shape of the workspace is known a priori.}
Each robot~$i\in \mathcal{N}$ has a state~$x_i\in \mathbb{R}^3$
and the system state is given by the stacked vector~$X\triangleq [x_i]\in \mathcal{X}$.
Due to the safety constraints such as inter-robot and robot-obstacle collision avoidance,
the system state is restricted to a safety set~$\widehat{\mathcal{X}}\subset \mathcal{X}$.
There are two types of robots~$\mathcal{N}\triangleq \mathcal{N}_{\texttt{e}}\cup \mathcal{N}_{\texttt{o}}$
with explorers~$\mathcal{N}_{\texttt{e}}$ and inspectors~$\mathcal{N}_{\texttt{o}}$.
Each robot~$i\in \mathcal{N}$ can perform SLAM locally and navigate within its map safely
without colliding with other robots or detected obstacles.
Denote by
\begin{equation}\label{eq:robot-nav}
  \mathbf{p}_{sg} \triangleq \texttt{Navi}_i(p_i,\, p_g,\, \mathcal{M}_i),
\end{equation}
as the navigation module~\cite{khalil2002nonlinear} that guides robot~$i$ from its current pose~$p_i\in \mathcal{W}$
to a target pose~$p_g\in \mathcal{W}$
via the path~$\mathbf{p}_{sg}$ within its local workspace map~$\mathcal{M}_i$,
while ensuring that~$\mathbf{p}_{sg}\subset \widehat{\mathcal{X}}$.
Each pair of robots can exchange data via ad-hoc wireless communication networks,
subject to the line-of-sight (LOS) and limited-range constraints.
Denote by
\begin{equation}\label{eq:robot-com}
  (D^+_i,\, D^+_j)\triangleq \texttt{Comm}_{ij}(D_i,\, D_j),
\end{equation}
as the communication module that robots~$i,j\in \mathcal{N}$ update
their local data~$D_i$ and $D_j$ via communication if their LOS
is not blocked by obstacles in~$\mathcal{W}$,
and their relative distance is within their communication range,
denoted by~$r_{ij}>0$.

In addition, there is a mobile GCS which has a unique index~$0$,
{
which is only responsible for sending and receiving data and does not undertake the role of communication manager.}
It follows the same navigation module in~\eqref{eq:robot-nav},
and has the same communication constraints
with other robots in~\eqref{eq:robot-com},
i.e., the communication range is given
by~$r_{0i}>0$,~$\forall i\in \mathcal{N}$.
Similarly, the local map at the GCS is denoted by~$\mathcal{M}_0$
and local data by~$D_0$,
For brevity, denote by~$\mathcal{N}^+\triangleq \{0\}\cup \mathcal{N}$.

%-------------------------
\subsection{Exploration and Inspection}\label{subsec:prb_robot}
Each explorer~$i\in \mathcal{N}_{\texttt{e}}$ can update its local map
via the exploration module
\begin{equation}\label{eq:robot-explore}
\mathcal{M}^+_i \triangleq \texttt{Explore}_i(p_i,\,\mathcal{M}_i,\,\mathcal{W}),
\end{equation}
where~$\mathcal{M}^+_i$ is the updated local map.
For instance, the octomap~\cite{hornung2013octomap}
can be constructed online via onboard Lidar.
Moreover, there are~$Q>0$ features of interest in the workspace,
denoted by~$\mathcal{F}\triangleq \{f_q,\,\forall q\in Q\}$.
Given the updated map,
each explorer~$i \in \mathcal{N}_{\texttt{e}}$ can identify a set of potential Areas of Interest (AoI) in its local map
that contains several features,
through a multi-modal perception system that integrates 3D reconstruction from fused visual-depth images,
enhanced by real-time semantic segmentation and accelerated object
detection~\cite{chen2024improved, wang2023yolov7}.
This detection process is formalized via the feature fusion module:
\begin{equation}\label{eq:robot-fit}
\big\{(f_q,\, \phi_q)\big\}\triangleq \texttt{Fit}_i(\mathcal{M}_i),
\end{equation}
where~$\phi_q \subset \mathcal{M}_i$
is the AoI that might contain feature~$f_q\in \mathcal{F}$.

Each inspector~$i\in \mathcal{N}_{\texttt{o}}$
can inspect an AoI~$\phi_q$ in close range,
i.e., to further determine whether the feature~$f_q$ exists within~$\phi_q$
and its state, via the inspection module:
\begin{equation}\label{eq:robot-inspect}
  (D^+_i, \, s_q, \, t_q) \triangleq \texttt{Inspect}_i(p_i,\, f_q,\, \phi_q),
\end{equation}
where~$s_q \in \mathbb{R}^{L}$ is the grounded representation of feature~$f_q$
with dimension~$L>0$, such as positions, images and point clouds;
$t_q>0$ is the duration of inspecting feature~$f_q$;
and the local data~$D^+_i$ is updated with the inspection results.

The GCS is required to collect data from the robotic fleet
regarding the progress of exploration and inspection.
The data collection process follows the same communication protocol
as in~\eqref{eq:robot-com} to update its local data~$D_{0}$.

% ==============================
\begin{example}\label{example:prob}
As shown in Fig.~\ref{fig:overall}, the archaeological mission considered in the simulation
deploys~$6$ large UAVs as explorers with high-resolution Lidar
to construct the global map of the entire site;
$12$ small UAVs as inspectors with cameras to
take close-up images of numerous potential features;
and~$1$ GCS to gather, coordinate and update features from the fleet.
\hfill $\blacksquare$
\end{example}
%==============================

% -----------------------
\subsection{Problem Formulation}\label{subsec:prb_formulation}
The local plan of each robot~$i \in \mathcal{N}^+$
is given by a sequence of navigation and various actions, i.e.,
\begin{equation}\label{eq:local-plan}
\xi_i\triangleq \mathbf{p}^1_{i}a^1_{i}\cdots \mathbf{p}^t_{i}a^t_{i},
\end{equation}
where~$\mathbf{p}^t_i\in \mathcal{W}$ is the sequence of waypoints;
and~$a^t_i$ is the sequence of actions, i.e., exploration
or communication for explorers~$i\in \mathcal{N}_{\texttt{e}}$;
inspection or communication for inspectors~$i\in \mathcal{N}_{\texttt{o}}$
and communication for the GCS.
The planning objective is to design the exploration, inspection
and communication strategy for the robotic fleet,
such that the total time of gathering all features by the GCS
is minimized, i.e.,
%==============================
  \begin{subequations}\label{eq:prb_formulation}
    \begin{align}
      & \underset{\{\xi_i\}}{\textbf{min}}\quad T \notag \\
      \textbf{s.t.} \quad & \mathcal{W}\subseteq M_0(T); \label{eq:prb_formuation:a}\\
      & s_{q} \subseteq D_0(T),
      \;\forall q\in Q; \label{eq:prb_formulation:b} \\
      & x(t) \in \widehat{\mathcal{X}},\; \forall t \in [0,\,T]; \label{eq:prb_formulation:c} \\
      & \eqref{eq:robot-nav}-\eqref{eq:robot-inspect},\;
       \forall i\in \mathcal{N}; \label{eq:prb_formulation:d}
    \end{align}
  \end{subequations}
%==============================
where $T>0$ is the duration of the mission to be minimized;
the constraint~\eqref{eq:prb_formuation:a} ensures that the local map~$M_0$ of the GCS at time~$t=T$
contains the entire workspace;
the constraint~\eqref{eq:prb_formulation:b} requires that the inspection results of all features
within the entire workspace are obtained in the local data~$D_0(T)$ of the GCS;
and the other constraints~\eqref{eq:prb_formulation:c},~\eqref{eq:prb_formulation:d}
ensure that the fleet follows the navigation, exploration, inspection and communication modules
as described earlier.
%==============================

%%========================================
\section{Proposed Solution}\label{sec:solution}

%% % ====================================
%% \begin{figure*}[!t]
%%     \centering
%%     \includegraphics[width=0.9\linewidth]{figures/framework.pdf}
%%     \vspace{-2mm}
%%     \centering
%%     \caption{Overview of the proposed method.}
%%     \label{fig:framework}
%%     \vspace{-4mm}
%%   \end{figure*}
%% % ====================================

The proposed solution consists of four parts:
(I) the overview of proposed method in Sec.~\ref{subsec:overview};
(II) the coordination of exploration tasks and communication events in {Sec.~\ref{sec:gcs-explore}},
under limited communication between explorers and a GCS;
(III) the communication-aware 3D exploration and inspection algorithm in {Sec.~\ref{sec:inspect}},
tailored for a set of bounding boxes;
(IV) the online adaptation of the collaborative exploration in Sec.~\ref{sec:online},
inspection and communication,
given the updated map and detected features.

%%========================================
% ----------------------------------
\subsection{Overview of Proposed Method}
\label{subsec:overview}

The proposed method tackles above optimization problem in~\eqref{eq:prb_formulation}
via a multi-layer and multi-rate coordination framework that simultaneously
co-optimizes the collaborative behaviors of GCS, explorers and inspectors.
{As illustrated in Fig.~\ref{fig:framework},
the robotic exploration efficiency is enhanced by constraining the search space with prior knowledge regarding the distribution of AoI:
bounding boxes $\mathcal{B} \triangleq \{B_i\} \subset \mathcal{W}$ that encapsulate clusters of internal architectures.}
However, this framework can also be applied to fully unknown workspace without any prior information,
via a prior-free exploration module described in the sequel.
{Given BBoxes,
the GCS is responsible for receding-horizon allocation of
bounding boxes (BBoxes) denoted as~$\mathcal{B} \triangleq \{B_1, B_2, \cdots\} \subset \mathcal{W}$,
reassigning the subgroup to other unfinished BBoxes,
and collecting both map information~$\mathcal{M}$
and inspection results of features~$\mathcal{S} \triangleq \{ s_1, s_2, \cdots \}$.}
Initially,
GCS divides all robots into multiple subgroups based on the number of explorers, BBoxes and the robotic sensing capabilities.
Then,
each subgroup is assigned to the nearest BBox using rolling assignment algorithm as described in the sequel.
{Furthermore,
the proposed method adopts a two-layer communication structure,
i.e., the first layer coordinates the GCS and subgroup at a low frequency,
while the second layer manages the intra-group collaboration in higher frequency.}

\subsubsection{Layer of GCS-to-SubGroups}
\label{subsec:gcs-subgroup}
{As for the GCS-to-SubGroup layer,
an intermittent communication protocol is designed to facilitate the coordination between GCS and explorers},
mainly focusing on three aspects:
(I) Management of BBoxes.
The GCS informs the location of BBoxes to the explorer of the subgroup,
and explorer reports the completion status of BBoxes back to GCS.
The rolling assignment algorithm is applied by GCS to assign any remaining BBoxes to the nearest subgroup
to accelerate overall mission.
(II) Collection of inspection results.
Inspectors transmit their inspection results~$\mathcal{S}$ to the explorer,
which then forwards these results to GCS along with the exploration map~$\mathcal{M}$.
(III) Coordination of meeting time and location.
GCS and the explorer negotiate the time and location of their next meeting by utilizing
the prediction algorithm for task completion time.
To further quantify the efficiency of task execution,
the metrics to measure idle time associated with GCS and explorers
are introduced.
Specifically,~$\tau_0$ denotes the time GCS spends waiting
to meet the explorer.
For each explorer~$i \in \mathcal{N}_{\texttt{e}}$,
the idle time is defined as~$\tau_i \triangleq \tau_i^- \cup \tau_i^+$,
where~$\tau_{i}^-$ denotes the travel time to the meeting location
with GCS or the inspectors,
and~$\tau_{i}^+$ is the waiting time at the meeting location
before the meeting starts.
Detailed descriptions are given in Sec.~\ref{sec:gcs-explore}.

% ====================================
\begin{figure}[t]
    \centering
    \includegraphics[width=1.0\linewidth]{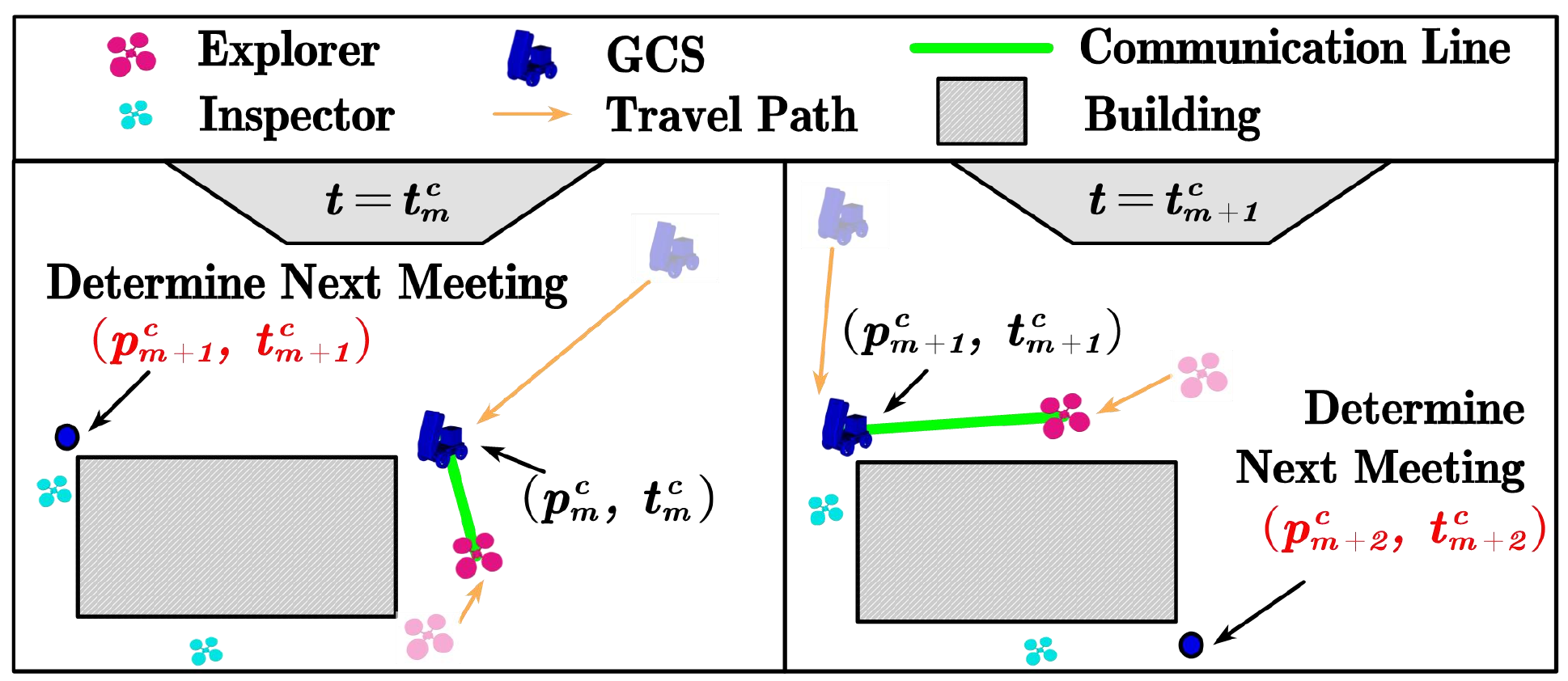}
    \centering
    \caption{Intermittent communication between GCS and explorers.}
    \label{fig:intermittent}
    \vspace{-6mm}
  \end{figure}
% ====================================

\subsubsection{Layer of SubGroups}
\label{subsec:subgroup}
{A proactive communication protocol within a subgroup is proposed to
coordinate the collaboration between explorer and inspectors,
under limited communication.}
It contains two main components:
(I) Allocation of AoIs.
The explorer assigns detected AoIs to the inspectors.
(II) Planning for inspection trajectories and relaying inspection results.
Inspectors relay their inspection results and their planned trajectories to the explorer,
to coordinate their next meeting events.
To evaluate the efficiency of this intra-group coordination,
the idle time~$\tau_{j} \triangleq \tau_{j}^- \cup \tau_{j}^+$
is introduced as the efficiency metrics for each inspector~$j \in \mathcal{N}_{\texttt{o}}$,
where~$\tau_{j}^-$ is the idle period when no features are available for inspection,
and~$\tau_{j}^+$ is the travel time to the features.
Detailed descriptions are given in Sec.~\ref{sec:inspect}.

% ==================================
% \begin{remark}
% The main purpose of the proposed framework (SLEI3D) is to
% reduce the number of communication,
% thereby minimizing the time delays caused by communication and
% enabling robots to focus more on exploration and inspection tasks.
% Specifically,
% GCS assigns BBoxes to explorers in a receding-horizon manner,
% rather than allocating all BBoxes at the beginning.
% This can optimize operational efficiency by continuously adapting task assignments
% based on real-time progress,
% thereby reducing idle periods and maximizing resource utilization.
% Moreover,
% subgroup reassignment and the collection of inspected features are handled through explorers
% instead of direct communication between GCS and inspectors,
% which can greatly reduce the complexity of communication protocols and data volume.
% \hfill $\blacksquare$
% \end{remark}
% ==================================

Given the above multi-layer framework and the coordination strategies,
the original  problem in~\eqref{eq:prb_formulation} is reformulated
to minimize the total idle time of all robots,
subject to the same constraints,
i.e., the objective function is re-stated as follows:
\begin{equation}\label{eq:new-objective}
    \textbf{min}_{\{\xi_i\}}
    \big{\{}\sum_{i \in \mathcal{N}^+} \tau_{i}\big{\}},
\end{equation}
where~$\tau_{i}$ is the idle time for each robot $i\in \mathcal{N}^+$
as defined above.
Note that minimizing~$T$ in~\eqref{eq:prb_formulation}
can be achieved by minimizing the total idle time,
i.e., maximizing the mission efficiency
for exploration, inspection and communication.

% ==============================================

\begin{remark}\label{remark:practitioners}
  A general guideline of the size of robotic fleet
  is provided here for practitioners,
  based on the prior information regarding numbers and volumes of BBoxes across scenarios.
  Namely,
  each BBox is allocated~$1$ GCS and~$1$ explorer to ensure parallel execution of exploration, inspection and communication.
  The number of inspectors is determined by:
  $
  |\mathcal{N}_{\texttt{O}}| = \sum_{i=0}^{|\mathcal{B}|} \left( \frac{2V_{B_i}}{V_{B_{\texttt{base}}}} \right)
  $,
  where~$V_{B_{\texttt{base}}}$ denotes the volume of BBox for a single building,
  with the coefficient 2 reflecting empirically validated inspectors per building.
  These values represent minimum recommendations for task completion.
  Lower allocations compromise efficiency but remain executable.
  Exceeding recommended quantities generally enhances task efficiency through improved resource parallelism.
\end{remark}

%%========================================
%%========================================
%------------------------------------------
\subsection{Layer of GCS-to-SubGroups}
\label{sec:gcs-explore}

% In this section,
% first an intermittent communication protocol is designed
% to schedule the next communication location and time,
% where GCS can exchange and manage the information with explorers.
% Then,
% a task completion predictor is developed to estimate the time
% needed for the explorers to explore the assigned BBox
% in order to reduce the idle time for both the explorers and GCS.
% Finally,
% as for the explorers,
% a 3D exploration algorithm is designed under the constraints from communication protocol
% based on iterative optimization,
% while a time-constrained algorithm of telemetry planning is proposed for GCS.

% ====================================
\subsubsection{Subproblem Formulation}
\label{subsec:explore-prob}

To ensure data collection online,
the GCS and subgroups are required to meet and communicate frequently
via an intermittent communication protocol
{under the communication module~$\texttt{Comm}_{0i}, i \in \mathcal{N}_{\texttt{e}}$
in~\eqref{eq:robot-com}}.
Denote by~$\mathcal{C} \triangleq C_1 C_2 \cdots$ the sequence of communication events,
where~$C_m=(p^{\texttt{c}}_m, t^{\texttt{c}}_m)$ represents the~$m$-th communication between GCS and the explorers
within each subgroup;
$p^{\texttt{c}}_m$ and~$t^{\texttt{c}}_m$ are the location and time for the~$m$-th communication.
Then, the protocol of intermittent communication is as follows:
(I) GCS can communicate with the explorers when they satisfy the LOS and communication range;
(II) during each communication,
they determine locally the next meeting time and location.
Then they depart and do not communicate until the next meeting.
This procedure is repeated until termination, as shown in Fig.~\ref{fig:intermittent}.
Under this protocol,
the idle time~$\tau_i$ of the explorers~$i \in \mathcal{N}_{\texttt{e}}$
mainly originates from the travel time~$\tau^-_{i_m}$ from exploration to the meeting location~$p^{\texttt{c}}_m$
at the predefined time~$t^{\texttt{c}}_m$,
and the waiting time~$\tau^+_{i_m}$ for the GCS.
On the other hand,
the idle time~$\tau_{0_m}$ of the GCS is the waiting time
for the explorer to arrive at~$p^{\texttt{c}}_m$ for the $m$-th communication.
Therefore,
the objective in~\eqref{eq:new-objective} is to minimize the total idle time for explorers and GCS
during all communication rounds.

% =====================
\begin{problem}\label{prob:intermit-comm}
Determine the optimal plan~$\{\xi_i, i\in \{0\}\cup \mathcal{N}_{\texttt{e}}\}$
such that the total idle time for the explorers and GCS is minimized, i.e.,
$\textbf{min}_{\{\xi_i\}} \sum^M_{m=1}
(\sum_{i \in \mathcal{N}_{\texttt{e}}} \tau_{i_m} + \tau_{0_m}),$
where~$M>0$ is a predefined planning horizon
as the number of communication rounds ahead.
\hfill  $\blacksquare$
\end{problem}
% ====================

% ====================================
\begin{figure}[!t]
    \centering
    \includegraphics[width=1.0\linewidth]{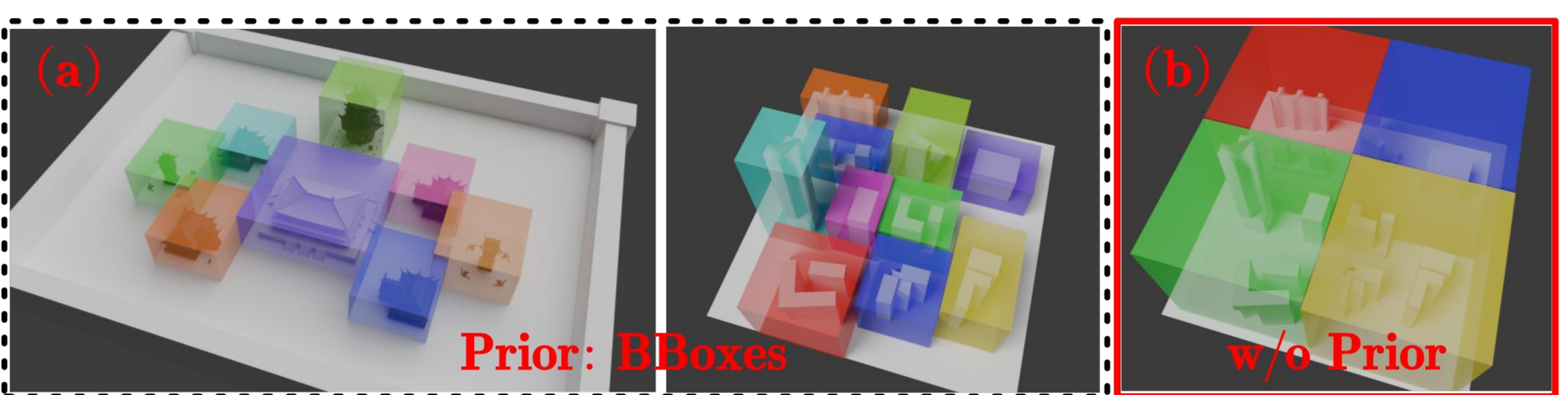}
    \centering
    \caption{BBoxes Construction via the vertical buildings (\textbf{left}) for two scenarios;
    Adaptive BBox partition without any priors (\textbf{right}).}
    \label{fig:bboxes}
    \vspace{-7mm}
  \end{figure}
% ====================================

To tackle Problem~\ref{prob:intermit-comm},
a parametric method for BBoxes construction is proposed,
as illustrated in Fig.~\ref{fig:bboxes} (a).
For each building, its principal axis-aligned footprint $B^{\mathcal{F}}_i \subset \mathbb{R}^2$
is extracted from prior structural data or low-resolution scans.
This footprint is then extruded vertically with a safety margin
$\delta_z \in [\delta_{\text{min}}, \delta_{\text{max}}]$ to form the
cubic bounding box $B_i \triangleq B^{\mathcal{F}}_i \times [z_{\texttt{base}} - \delta_z, z_{\texttt{top}} + \delta_z]$,
where $z_{\texttt{base}}$ and $z_{\texttt{top}}$ denote the structure's elevation bounds.
The complete search space is defined as $\mathcal{B} \triangleq \bigcup_{i=1}^N B_i \subset \mathcal{W}$,
effectively encapsulating all surface-adjacent subspaces within $\epsilon$-neighborhoods of building envelopes.
This construction enables robotic fleets to execute surface-normal trajectory planning
while avoiding computationally prohibitive full 3D reconstructions.
For scenarios lacking prior structural knowledge,
the system initiates an adaptive bounding box generation process,
as detailed in Section~\ref{subsec:w/o prior}.
Then, a {predictor} for task completion time is first proposed to estimate the time needed for the explorers
to explore each BBox~$B_i \in \mathcal{B}$,
such that GCS can schedule the next communication event.
Then, an exploration algorithm {named~$\texttt{FF3E}(\cdot)$} is proposed for the explorers to
arrive at the meeting location~$p^{\texttt{c}}_m$ at time~$t^{\texttt{c}}_m$,
thereby minimizing the waiting time~$\tau_{0}$.
Lastly,
an {algorithm} is proposed for the GCS to
coordinate the communication events with the explorers,
to reduce the waiting time~$\tau_{i}^+$.

% ====================================
\begin{figure}[!t]
    \centering
    \includegraphics[width=1.0\linewidth]{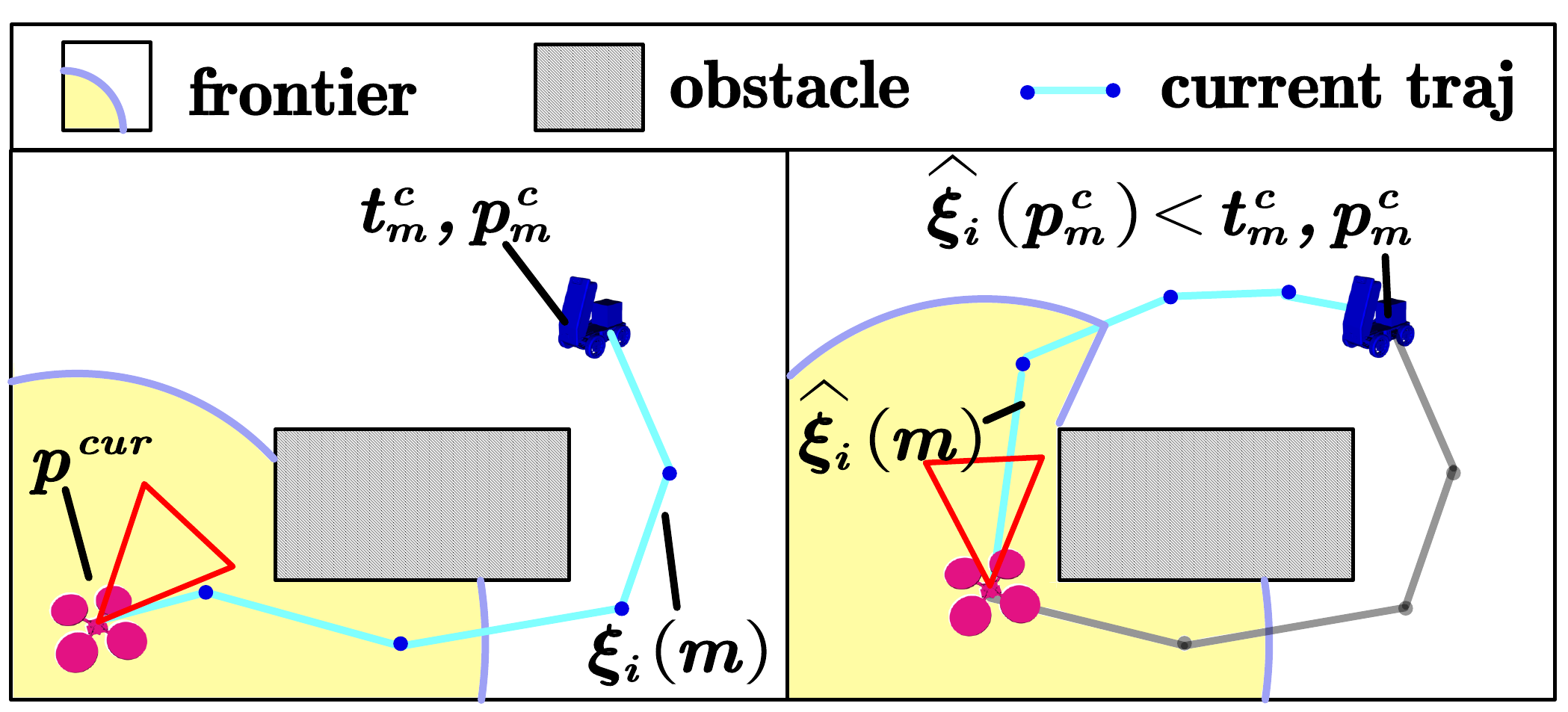}
    \vspace{-2mm}
    \centering
    \caption{Fast frontier-based exploration and local adaptation
      of the exploration strategy~$\{\mathbf{p}^t_i\}$
      by the explorer.}
    \label{fig:explore}
    \vspace{-4mm}
  \end{figure}
% ====================================

%====================================
\subsubsection{Prediction of Task Completion Time
and Planning for Next Communication Event}
\label{subsec:time-pred}
An explorer~$i\in \mathcal{N}_{\texttt{e}}$ first explores for
a user-defined time~$t^{\texttt{e}}_{i_{m}}<E_i$,
during which it fits a set of features~$\mathcal{F}_{i_m}$ by~\eqref{eq:robot-fit},
and receives the inspection results of features~$\mathcal{S}_{i_{m}}$.
Then,
explorer~$i$ estimates the time needed to complete the exploration of the entire BBox~$B_i$,
which serves as the next communication time~$t^{\texttt{c}}_{m+1}$, i.e.,
{the predictor for task completion time} as follows:

% \begin{equation}\label{eq:estimate-duration}
% t^{\texttt{c}}_{m+1} = \left(\frac{V_{B_i}}{|\mathcal{S}_{i_{m}}|}\right)
% \cdot \left(\frac{|\mathcal{F}_{i_{m}}|}{V^{m}_{B_i}} \right)
% \cdot t^{\texttt{e}}_{i_m},
% \end{equation}

% ====================
\begin{equation}\label{eq:estimate-duration}
    \begin{aligned}
    & t^{\texttt{c}}_{m+1} = \left(\frac{V_{B_i}}{|\mathcal{S}_{i_{m}}|}\right) \cdot \left(\frac{|\mathcal{F}_{i_{m}}|}{V^{m}_{B_i}} \right) \cdot t^{\texttt{e}}_{i_m}, \\
    \end{aligned}
\end{equation}
% ====================
where~$\mathcal{S}_{i_{m}} \subset \mathcal{S}$ is the set of inspection results,
received by explorer~$i$ during the~$m$-th communication;
$V_{B_i} > 0$ is the total volume of BBox~$B_i$,
while~$V^{m}_{B_i}$ represents the volume that has been explored
by explorer~$i$ by the~$m$-th communication.
If no features are received, i.e., $\mathcal{S}_{i_{m}}=\emptyset$,
the communication time~$t^{\texttt{ideal}}_{m+1}$ is approximated
by the ratio of explored volume, i.e., $V_{B_i}t^{\texttt{e}}_{i_m}/V^{m}_{B_i}$.
In addition,
the location~$p^{\texttt{c}}_{m+1}$ is chosen among the four corner points of the BBox~$B_i$ to the current location of explorer~$i$,
such that the estimated arrival time by the~$A^\star$ algorithm
following the navigation module~$\texttt{Navi}(\cdot)$ in~\eqref{eq:robot-nav}
is closest to the newly planned communication time~$t^{\texttt{c}}_{m+1}$.
Note that other prediction algorithms than~\eqref{eq:estimate-duration}
can be integrated into the proposed framework, e.g., regression of existing data.

% ===================
\begin{remark}\label{remark:time-pred}
    Different from the work~\cite{tian2024ihero}
    where the GCS communicates with all explorers in every round,
    the proposed method schedules communications based on the predicted time of task completion,
    thus eliminating the need for the GCS to communicate with all explorers in a predefined order.
    \hfill $\blacksquare$
\end{remark}
% ===================

%====================================
\subsubsection{Fast Frontier-based 3D Exploration}
\label{subsec:explore}
With a slight abuse of notation,
the current local \emph{plan} of explorer~$i$ is given by:
% ===================
\begin{equation}\label{eq:local-plan}
\xi_i(m) \triangleq \mathbf{p}^1_{i}a^1_{i}\cdots
\mathbf{p}^t_{i}a^t_{i} \cdots {p}^c_m a_i^m, \quad t \le t^{\texttt{c}}_m,
\end{equation}
% ===================
where~${p}^{\texttt{c}}_m, t^{\texttt{c}}_m$ are the \emph{confirmed}~$m$-th communication between the GCS and explorer~$i$;
$a_i^m$ is the communication action with GCS;
and~$\mathbf{p}^t_{i}, a_i^t$ are the waypoints and actions of exploration,
which should be optimized to maximize the explored workspace within the communication time~$t^{\texttt{c}}_m$.
As illustrated in Fig.~\ref{fig:explore},
More specifically,
denote by~$\widehat{\xi}_i(m) \triangleq
\widehat{\mathbf{p}}^1_ia^1_i \cdots \widehat{\mathbf{p}}^t_i a^t_i \cdots
\widehat{\mathbf{p}}^{\texttt{c}}_m a^{\texttt{c}}_m$
as the {revised} local plan,
where~$\{\widehat{\mathbf{p}}^t_i\}$ are the updated waypoints
and~$\widehat{\mathbf{p}}^{\texttt{c}}_m$ are the newly-added waypoints
to the meeting location~$p^{\texttt{c}}_m$.
Thus, the waiting time for explorer~$i$ is given
by~$\tau^{+}_i = \widehat{\xi}_i(p^{\texttt{c}}_m) - t^{\texttt{c}}_m$,
where~$\widehat{\xi}_i(p^{\texttt{c}}_m)$ is the estimated time of arriving at~$p^{\texttt{c}}_m$.

% ===================
\begin{problem}\label{prob:3D-explore}
    Determine the local plan~$\widehat{\xi}_i(m)$
    such that:
    (I) the explored area is maximized
    within the communication time~$t^{\texttt{c}}_m$;
    (II) the waiting time~$\tau^+_i$ is minimized.
    \hfill  $\blacksquare$
\end{problem}
% ===================

% ===================
\begin{algorithm}[!t]
    \caption{Fast Frontier-based 3D Exploration: $\texttt{FF3E}(\cdot)$}
    \label{alg:explore}
    \KwIn{Current position \( p_i \), Frontiers set \( \Gamma \), Meeting location \( p^{\texttt{c}}_m \),
          Meeting time \( t^{\texttt{c}}_m \), Robot speed \( v_i \)}
    \KwOut{Optimal path \( \widehat{\xi}^{\star}_i \)}

    \SetKwFunction{FMain}{planPath}
    \SetKwProg{Fn}{Function}{:}{}\label{alg-line:sub-func}
    \Fn{\FMain{$p_0, \ \widehat{\xi}', \ T^{+}$}}
        {
        \If{$T^{+} + T_{p^{\texttt{c}}_k} > t^{\texttt{c}}_k$}{  % Check if we exceed time limit
            \Return $\widehat{\xi}'$ \;  \label{alg-line: return-path}% Return current path if out of time
        }

         \(\widehat{\xi}^{\star} \gets \widehat{\xi}' \); \label{alg-line: initial} % Initialize best path with current path

        \For{each point \( P \in \Gamma \setminus \widehat{\xi}'$}
        {
            \label{alg-line:for-loop}
            \( D \gets \texttt{GetDistance}(p_0,\, P) \) \; \label{alg-line:subfunc-distance}
            \( T^{P}_{p_0} \gets {D}/{v_i} \) \; \label{alg-line:subfunc-time}
            \( \widehat{\xi}' \gets \widehat{\xi}' + [P] \) \;
            \( T^{+} \gets T^{+} + T^{P}_{p_0} \) \;
            \( \widehat{\xi}^+ \gets \FMain(P, \widehat{\xi}', T^{+}) \) \; \label{alg-line:recursive-call} % Recursive call to explore further

            \If{ \(\texttt{dist}(\widehat{\xi}^+) >
            \texttt{dist}(\widehat{\xi}^{\star}) \) }
            {
                \label{alg-line:if-condition}
                \( \widehat{\xi}^{\star} \gets  \widehat{\xi}^+ \) \; \label{alg-line:update-best} % Update best path
            }
        }
        \Return $\widehat{\xi}^{\star}$  \;  % Return the best path found
    }

    % Main algorithm flow
    \( D^{\texttt{c}}_m \gets \texttt{GetDistance}(p_i,\, p^{\texttt{c}}_m) \) \; \label{alg-line:distance}
    \( T_{p^{\texttt{c}}_m} \gets {D^{\texttt{c}}_m}/{v_i}\) \; \label{alg-line:time}
    \( T' \gets t^{\texttt{c}}_m -  T_{p^{\texttt{c}}_m}\) \; \label{alg-line:time-aval}

    \If{$T' > 0$}{
        \label{alg-line:decide}
        \( \widehat{\xi}^{\star}_i \gets \FMain(p_i, [], 0) \) \;  \label{alg-line:plan-path}% Start planning from the initial position
        \( \widehat{\xi}^{\star}_i \gets \widehat{\xi}'_i + [p^{\texttt{c}}_m] \) \; \label{alg-line:plan-path1} % Append the interaction location
    }

    \Return \( \widehat{\xi}^{\star}_i \) \;  \label{alg-line:empty}% Return the optimal path
\end{algorithm}
% ===================

To begin with, it is worth noting that these two objectives above
are often conflicting.
Existing algorithms~\cite{zhou2021fuel}
based on TSP or TSP-TW
can not be directly applied as not all frontiers have to be visited.
Therefore, the proposed solution
under the module~$\texttt{Explore}(\cdot)$
in~\eqref{eq:robot-explore} is summarized in Alg.~\ref{alg:explore}.
At each round~$m$,
explorer~$i$ first checks whether it has enough time~$t^{\texttt{c}}_m$ to
reach the meeting location~$p^{\texttt{c}}_m$ in Lines~\ref{alg-line:distance}-\ref{alg-line:time-aval}.
If $T' > 0$ does not hold, then~$\emptyset$ is returned in Line~\ref{alg-line:empty}.
Otherwise,
the algorithm proceeds to find the optimal waypoints~$\widehat{\xi}^{\star}_i$
via the function~$\texttt{planPath}(\cdot)$ in Line~\ref{alg-line:plan-path}-\ref{alg-line:plan-path1}.
More specifically,
it checks if the total time spent
plus the time needed to reach~$p^{\texttt{c}}_m$
exceeds the agreed meeting time~$t^{\texttt{c}}_m$ in Line~\ref{alg-line:plan-path}.
If so,
this function returns the current path~$\widehat{\xi}'$ in Line~\ref{alg-line: return-path}.
Otherwise,
it initializes the optimal path~$\widehat{\xi}^{\star}$
as the current path~$\widehat{\xi}'$ in Line~\ref{alg-line: initial}.
Then,
it iterates through each frontier~$P\in \Gamma$ in Line~\ref{alg-line:for-loop},
by computing the distance~$D$ and the travel time~$T^{P}_{p_0}$,
from its position~$p_0$ to~$P$ in Lines~\ref{alg-line:subfunc-distance}-\ref{alg-line:subfunc-time}.
Afterwards, the set of visited frontiers~$\widehat{\xi}'$
is updated along with the total time spent~$T^{+}$,
before calling $\texttt{planPath}(\cdot)$ again
in Line~\ref{alg-line:recursive-call}.
Lastly,
if the resulting path~$\widehat{\xi}^+$ is longer
than~$\widehat{\xi}^{\star}$,
the optimal path~$\widehat{\xi}_i^{\star}$ is updated accordingly
in Lines~\ref{alg-line:if-condition}-\ref{alg-line:update-best}.
Lastly,~$\widehat{\xi}^{\star}_i$ from Alg.~\ref{alg:explore}
is returned as the optimized solution for Problem~\ref{prob:3D-explore}.

The time complexity of Alg.~\ref{alg:explore} can be analyzed step by step based on its recursive structure.
During initialization in Lines~\ref{alg-line:distance}-~\ref{alg-line:time-aval},
it mainly computes the distance and time to reach the meeting location~$p^{\texttt{c}}_m$,
thus the complexity is~$\mathcal{O}(1)$.
Then, during the recursive path planning,
function~$\texttt{planPath}(\cdot)$ explores
all possible frontier points~$P \in \Gamma$
by calling itself recursively in Line~\ref{alg-line:recursive-call}.
The worst-case time complexity is~$\mathcal{O}(|\Gamma|!)$,
where~$|\Gamma|$ is the total number of frontiers.
% Further, within each recursive call,
% the operations include the computation of travel distance,
% travel time, path updates and comparisons to update
% the optimal path (Line~\ref{alg-line:subfunc-distance}-\ref{alg-line:if-condition}),
% all of which have a combined complexity of~$O(1)$.
% Thus, the total time complexity is $O(|\Gamma \cdot 1|!) = O(|\Gamma|)$.
{Lastly,
to update the optimal path in Line~\ref{alg-line: return-path}
has a complexity of~$\mathcal{O}(1)$.}
Therefore, the overall complexity of Alg.~\ref{alg:explore}
is given by~$\mathcal{O}(|\Gamma|!)$.

% ====================================
\begin{figure}[!t]
    \centering
    \includegraphics[width=1.0\linewidth]{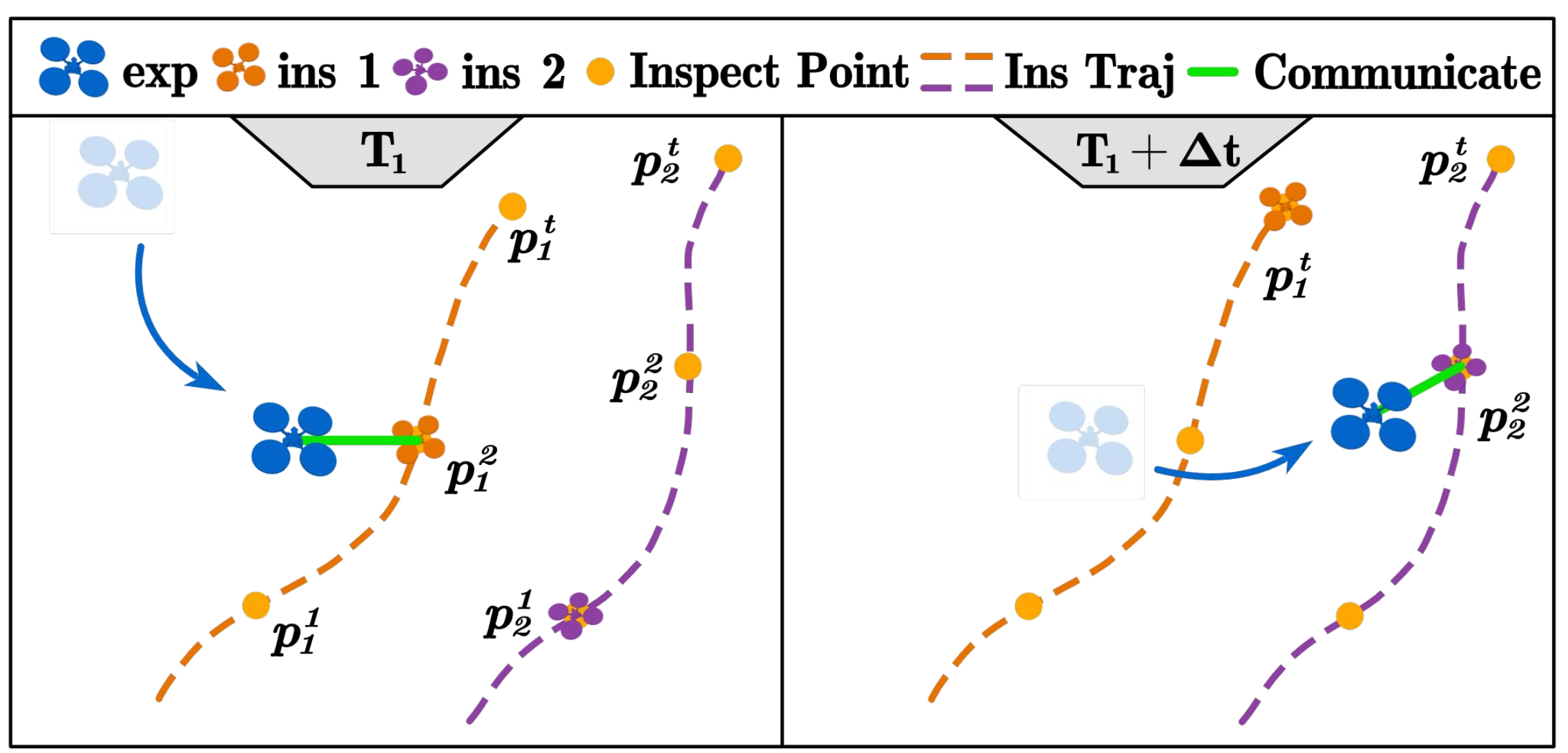}
    \vspace{-3mm}
    \centering
    \caption{Proactive communication between an explorer (in blue)
      and the inspectors (in yellow) within a subgroup.}
    \label{fig:proactive-comm}
    \vspace{-4mm}
  \end{figure}
% ====================================

%====================================
\subsubsection{Communication Coordination for GCS}
\label{subsec:patrol}
To minimize the idle time~$\tau_0$ for GCS,
it is sufficient to arrive at the meeting location~$p^{\texttt{c}}_m$ within
the predefined time~$t^{\texttt{c}}_m$.
Therefore,
a TSP-TW problem is formulated for
GCS to optimize the sequence of meetings with the explorers
in each communication~$m \in M$.
Detailed description is omitted due to limited space.
In summary,
the subproblem of coordinating GCS and explorers to minimize
the idle time~$\tau_i$ and~$\tau_{0}$ is solved by the proposed
prediction algorithm of task completion time,
3D exploration algorithm and the intermittent communication protocol.
  It is worth noting that our framework does not require explicit calibration of the global localization between the GCS and explorers,
  via external positioning systems such as GPS or RTK.
  In addition,
  any pre-calibrated 3D coordinate in the exploration area could serve
  as absolute references for GCS-to-explorer communications
  without additional calibration.

%++++++++++++++++++++++++++++++++++++++++
\begin{lemma}\label{lemma:gcs-exp}
    Under the proposed coordination strategy for the layer of GCS-to-SubGroups,
    all features~$\mathcal{F}$ in the BBoxes
    can be fitted by explorers in finite time,
    of which the results can be collected
    by the GCS in finite time.
\end{lemma}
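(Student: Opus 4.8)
The plan is to establish finiteness in two stages that mirror the two-layer architecture: first, that every explorer completes the exploration of its assigned BBox (and hence fits all features therein) in finite time; second, that the GCS, via the intermittent communication protocol, collects these results in finite time. For the first stage, I would argue as follows. Each BBox $B_i$ has finite volume $V_{B_i}$, and by construction it encapsulates all surface-adjacent subspaces containing the features. The module $\texttt{FF3E}(\cdot)$ in Alg.~\ref{alg:explore} is invoked at each communication round $m$ with a finite horizon: between consecutive meetings the explorer makes strictly positive progress in explored volume $V^m_{B_i}$, since the frontier set $\Gamma$ is nonempty whenever unexplored reachable space remains and the explorer moves toward at least one frontier with speed $v_i>0$ before returning to $p^{\texttt{c}}_m$. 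Thus $V^m_{B_i}$ is monotonically increasing and bounded above by $V_{B_i}$; because each round contributes a quantum of new volume bounded below (the sensor footprint swept along a path of bounded minimum length), the number of rounds needed to reach $V^m_{B_i}=V_{B_i}$ is finite. Once $B_i$ is fully explored, $\texttt{Fit}_i(\mathcal{M}_i)$ in~\eqref{eq:robot-fit} has identified every AoI $\phi_q$ with $\phi_q \subset B_i$, so all $Q$ features (each lying in some BBox) are fitted in finite time. I would also appeal to the rolling assignment: after a subgroup finishes its BBox, it is reassigned to the nearest unfinished one, and since $|\mathcal{B}|$ is finite this reassignment terminates, so all BBoxes — hence all features — are covered.

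For the second stage, I would show the GCS collects the results in finite time. By the intermittent communication protocol, at the $m$-th meeting the GCS and explorer $i$ agree on $(p^{\texttt{c}}_{m+1}, t^{\texttt{c}}_{m+1})$ with the meeting time predicted via~\eqref{eq:estimate-duration} from a positive explored-volume ratio, so $t^{\texttt{c}}_{m+1}$ is finite. The explorer reaches $p^{\texttt{c}}_{m+1}$ along a path of finite length via $\texttt{Navi}_i(\cdot)$, and the GCS solves a TSP-TW to arrive at each $p^{\texttt{c}}_m$; both idle times $\tau_i,\tau_0$ are therefore finite per round. Since stage one guarantees only finitely many rounds $M$ per subgroup, and there are finitely many subgroups, the inspection results $\mathcal{S}$ and map $\mathcal{M}$ are relayed explorer-to-GCS in a finite number of finite-duration meetings. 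Chaining these: inspectors produce $s_q$ in bounded time $t_q$ via~\eqref{eq:robot-inspect} after receiving $\phi_q$ (which, by the proactive intra-subgroup protocol and a symmetric finiteness argument, happens finitely often), forward them to the explorer at the next intra-group meeting, and the explorer forwards them to the GCS at the next GCS-layer meeting — a finite composition of finite delays.

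The main obstacle I anticipate is rigorously justifying the strictly-positive-progress-per-round claim, i.e., ruling out pathological behavior where the explorer makes vanishingly small exploration progress between meetings (e.g., if $t^{\texttt{e}}_{i_m}$ or the time budget $T'$ in Alg.~\ref{alg:explore} shrinks toward zero, or if frontiers become arbitrarily hard to reach in the non-convex interior of an irregular BBox). I would handle this by invoking the user-defined lower bound on exploration time $t^{\texttt{e}}_{i_m}$ (it is a chosen parameter, bounded away from zero), the boundedness of $\mathcal{W}$ which caps path lengths, and the fact that a nonempty $\Gamma$ always admits a reachable frontier within $\widehat{\mathcal{X}}$ by the navigation module's completeness assumption — together these yield a uniform positive lower bound on per-round volume gain. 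A secondary subtlety is the interplay between the two layers: an explorer might be reassigned before fully exploring a BBox; here I would note that the rolling assignment only reassigns \emph{finished} BBoxes, so no BBox is abandoned, and finiteness of $|\mathcal{B}|$ closes the argument.
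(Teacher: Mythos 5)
Your proposal is correct and follows essentially the same two-stage decomposition as the paper's proof: explorers fit all features in finite time via Alg.~\ref{alg:explore} together with the rolling assignment over the finite set of BBoxes, and the GCS then collects the results over finitely many intermittent-communication rounds scheduled by the completion-time predictor and the TSP-TW. The only substantive difference is in how per-BBox termination is justified --- the paper appeals to the finiteness of the frontier set $|\Gamma|$, whereas you use a monotone-bounded explored-volume argument with a uniform positive per-round gain, which is a somewhat more careful treatment of the same step.
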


\begin{proof}\label{proof:gcs-exp}
    During exploration,
    the explorers explore and fit features~$\mathcal{F}_{i_m}$
    via Alg.~\ref{alg:explore}.
    The planning module~$\texttt{planPath}(\cdot)$
    in Line~\ref{alg-line:sub-func}
    ensures that all frontiers
    within a BBox~$B_i$ are visited and the whole BBox is fully explored.
    Since the number of frontiers~$|\Gamma|$ in each BBox is finite,
    the total time required to explore any BBox is also limited.
    Furthermore,
    all BBoxes can be fully explored by the explorers
    via the rolling assignment algorithm.
    Thus, all features~$\mathcal{F}$ in the environment can be fitted
    in finite time.
    On the other hand,
    the communication protocol schedules the next communication event~$(p^{\texttt{c}}_m, t^{\texttt{c}}_m)$,
    for which the explorers would adhere by Line~\ref{alg-line:decide}.
    {Similarly,
    the GCS ensures a timely arrival at the meeting event
    via solving the TSP-TW problem.}
    During each communication,
    explorers can transmit all detected features to GCS.
    Since both the number of features and communication rounds are finite,
    the GCS can collect all results in finite time.
\end{proof}

%%========================================
%%========================================
%====================================
\subsection{Communication-aware  Exploration and Inspection}
\label{sec:inspect}

To minimize the idle time within the subgroup,
a simultaneous exploration and inspection algorithm is proposed
under the limited communication conditions.

% ====================================
\subsubsection{Subproblem Formulation}
\label{subsec:inspect-prob}

Different from the scheme of intermittent communication,
a novel proactive communication protocol
under the module~{$\texttt{Comm}_{ij}(\cdot)$} in~\eqref{eq:robot-com},
is designed for the explorer~$i\in \mathcal{N}_{\texttt{e}}$
and the inspectors~$\mathcal{N}^i_{\texttt{o}}\triangleq \{1,\cdots,K\}
\subset \mathcal{N}_{\texttt{o}}$
within the subgroup.
The protocol has two steps:
(I) the explorer~$i$ communicates with the inspectors in~$\mathcal{N}^i_{\texttt{o}}$
under the LOS constraints;
(II) during each communication,
the explorer determines not only the time and location
of the next meeting
with the inspectors under the energy constraint,
but also assigns the fitted features to the inspectors.
i.e., the features~$\widehat{\mathcal{F}}_{j} \subset \mathcal{F}$ to inspect
and the accordingly updated local plan~$\xi_{j}^+$ of each inspector~$j\in \mathcal{N}^i_{\texttt{o}}$.
As illustrated in Fig.~\ref{fig:proactive-comm},
this procedure is repeated until all features are fitted and inspected.

% Specifically,
% the explorer~$i \in \mathcal{N}_{\text{e}}$ \emph{actively} and \emph{directly}
% determines which inspectors~$j \in \mathcal{N}_{\texttt{o}}$ to communicate with,
% as well as the time~$t_{ij}$ and location~$p_{ij}$
% when detecting new features~$\widehat{\mathcal{F}}^{+} \subset \mathcal{F}$,
% which is encapsulated into a combination called~\emph{Subgroup Solution}
% as shown in the following definition~\ref{def:sub-sol}.

% -------------------------
\begin{definition}[Plan of Subgroup]\label{def:sub-sol}
    The overall plan of the subgroup is defined
    as a 4-tuple:
    \begin{equation}\label{eq:overall-plan}
    \Xi_i \triangleq (\widehat{\mathcal{N}}_i,\,
    \mathbf{c}_i,\, \boldsymbol{\varphi}_i,\, \boldsymbol{\xi}_i),
    \end{equation}
    where~$\widehat{\mathcal{N}}_i\subset \mathcal{N}^i_{\texttt{o}}$
    is the subset of inspectors to communicate with;
    $\mathbf{c}_i\triangleq \{(t_j,\,p_j),
    \forall j\in \widehat{\mathcal{N}}_i\}$
    is the set of meeting time and location
    for each inspector in~$\widehat{\mathcal{N}}_i$;
    $\boldsymbol{\varphi}_i\triangleq
    \{\widehat{\mathcal{F}}_j,
    \forall j\in \widehat{\mathcal{N}}_i\}$
    is the set of new features allocated to each inspector,
    given the set of fitted features~$\mathcal{F}^+_i$;
    and $\boldsymbol{\xi}_i\triangleq
    \{\xi_j^+, \forall j\in \widehat{\mathcal{N}}_i\}$
    is the set of updated local plans for all inspectors.
    \hfill $\blacksquare$
\end{definition}
\begin{problem} \label{prb:inspect}
    Determine the plan of the subgroup~$\Xi_i$ as defined in~\eqref{eq:overall-plan}
    such that the total idle time of the explorer and inspectors within the subgroup
    is minimized, namely:
    % ---------------------------
    % \begin{subequations}\label{eq:obj-inspect}
    %     \begin{align}
    %         & \underset{\Xi_i}{\textbf{min}} \quad \tau_{\Xi^{-}} + \tau_{\Xi^{+}} \notag \\
    %         \text{s.t.} \quad & \tau_{\Xi^{-}} = \tau_i + \sum_{j=1}^{K} \tau^{-}_j; \label{eq:obj-inspect:a}\\
    %         & \tau_{\Xi^{+}} = \sum_{j=1}^{K} \tau^{+}_j; \label{eq:obj-inspect:b}\\
    %         & \eqref{eq:exp-plan-subgroup} - \eqref{eq:insp-plan-subgroup}, \quad \forall i \in \mathcal{N}_{\text{e}}, \forall j \in \mathcal{N}_{\texttt{o}}.
    %     \end{align}
    % \end{subequations}
    % ---------------------------
    \begin{equation}\label{eq:obj-inspect}
    \underset{\Xi_i}{\textbf{min}}\Big{\{} \tau_i
    +\sum_{j\in \mathcal{N}^i_{\texttt{o}}} (\tau^{+}_j+\tau^{-}_j)\Big{\}},
    \end{equation}
    where~\(\tau^+_j\) is the idle time of inspector~$j\in \mathcal{N}^i_{\texttt{o}}$
    due to waiting;
    and \(\tau^-_j\) is the remaining idle time excluding \(\tau^+_j\).
    \hfill $\blacksquare$
\end{problem}
% ==========================

% % ==========================
% \begin{remark}\label{rmk:inspect}
%     As described earlier,
%     the optimization problem~\ref{prb:inspect} is framed from a greedy perspective
%     within a fixed time horizon~$T^F$.
%     This is reasonable because the tasks within subgroup are both unknown
%     and reactive.
%     Furthermore,
%     ~$T^F$ can be customized by users based on the scenarios,
%     while~$T^H$ represents the expected horizon to be explained in~\eqref{eq:exp-horizon}.
%     It is important to clarify that~$\tau^{-}_i$ encompasses not only the idle time with GCS,
%     but also with inspectors.
% \end{remark}
% % ==========================

%====================================
\subsubsection{Efficient 3D Exploration and Inspection under Limited Communication}
\label{subsec:sei}

% ====================================
\begin{figure}[!t]
  \centering
  \includegraphics[width=0.98\linewidth]{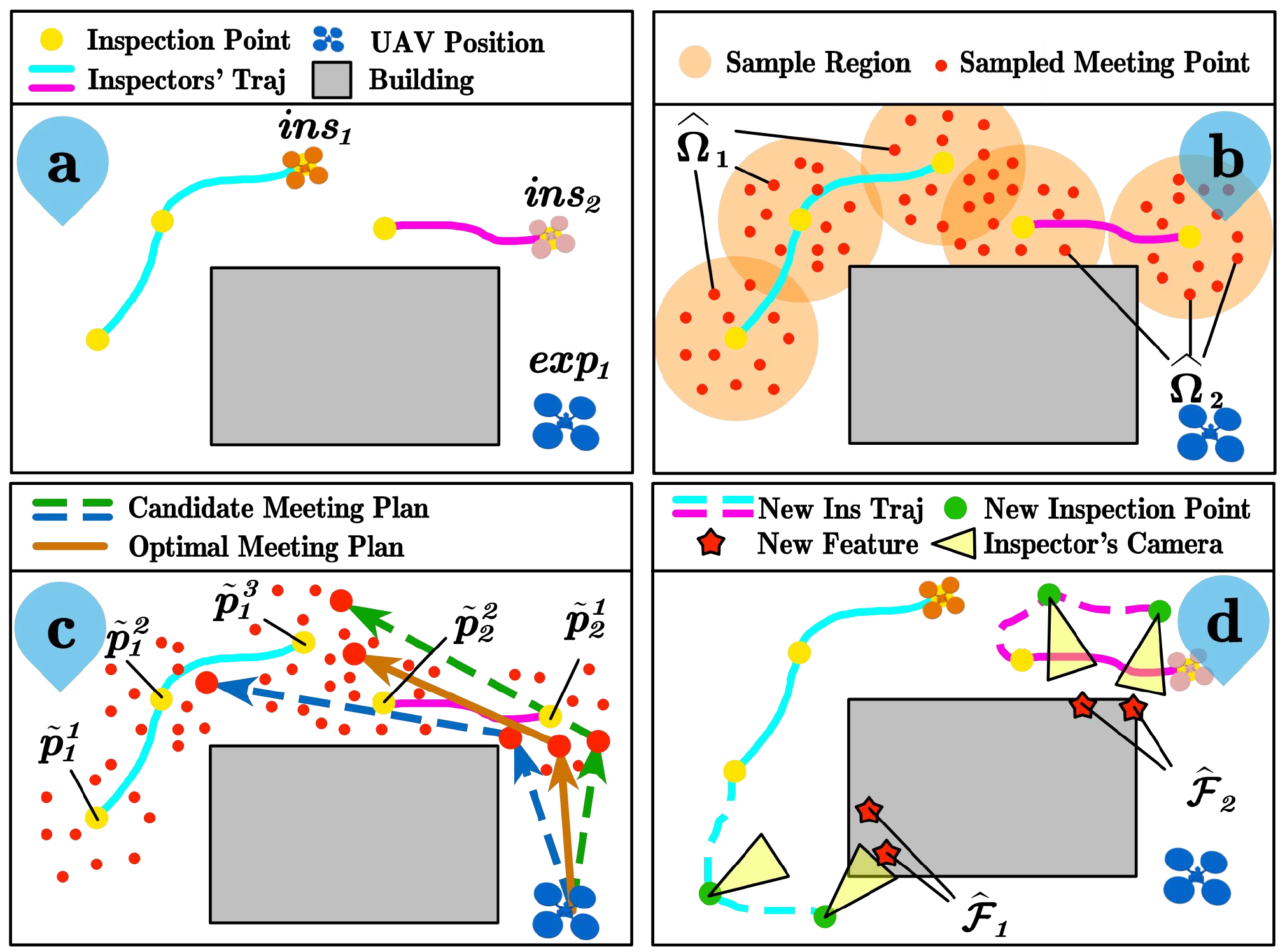}
  \vspace{-1mm}
  \centering
  \caption{Main components of the proposed SOEI algorithm in Alg.~\ref{alg:SOEI}:
   (\textbf{a}) the initialization of a subgroup including the explorer (in blue)
    and the inspectors (in yellow);
   (\textbf{b}) the sampling of candidate meeting locations~$\{p_j\}$ (in red);
   (\textbf{c}) the optimization of meeting sequence~$\mathbf{c}_i$
  including meeting time and locations (in brown);
  and (\textbf{d}) the allocation of fitted features~$\mathcal{F}^{+}_i$ (in red stars)
  to the inspectors~$\widehat{\mathcal{N}}_i$.}
  \label{fig:SOEI}
  \vspace{-5mm}
\end{figure}
% ====================================

An efficient exploration and inspection algorithm encapsulated as~$\texttt{SOEI}(\cdot)$
is proposed to solve the above problem.
As illustrated in Fig.~\ref{fig:SOEI} and summarized in Alg.~\ref{alg:SOEI},
it consists of three main parts as described in the sequel.
% (I) generating all inspectors combinations~$\overline{\mathcal{N}}_i$:
% to determine inspectors set~$N^K$
%     to communicate within the sub-group;
% (II) optimizing~$S^K$: to optimize the meeting sequence, including time and location with the selected inspectors;
% and (III) allocating~$\widehat{\mathcal{F}}^K$: to allocate the detected new features to the inspectors set.
% And the main idea is to jointly optimize~$N^K$, $S^K$ to minimize the total travel idle time of~$\tau_{\Xi^{-}}$
% Then,
% $\widehat{\mathcal{F}}^K$ is optimized to minimize~$\tau_{\Xi^{+}}$ and outputs~$\tau^+_j$.
% More specifically,
% as summarized in Algorithm~\ref{alg:SOEI},
% each explorer~$i$ follows the same algorithmic procedure asynchronously.

(I) {Choice of inspectors}.
The optimal subgroup solution~$\Xi^{\star}_i$
and the idle time~$\tau_{\Xi^{\star}}$ are initialized in Line~\ref{alg-line2:sub-group-sol}.
Then,
the algorithm iterates over each subset~$\widehat{\mathcal{N}}_i \subset \mathcal{N}^i_{\texttt{o}}$,
in order to evaluate the associated idle time.
Note that if no inspectors are selected,
the explorer would continue exploration.
(II) {Meeting Sequence}.
A sampling-based algorithm is proposed to determine the optimal sequence of
meeting events between the explorer and the inspectors in~$\widehat{\mathcal{N}}_i$.
{A set of sample points~$\widehat{\Omega}_j$} are generated around the executing local plan~$\{\xi^-_j\}$ of each inspector,
subject to the LOS and range communication constraints.
Denote by~$\widehat{\mathbf{c}}'\triangleq \{\mathbf{c}'_i\}$
a set of potential sequences of meeting events,
{where each~$\mathbf{c}'_i$ is formed by selecting one sample
  from~$\widehat{\Omega}_j$ for each inspector
and traversing the samples in a timed order}.
Afterwards, the optimal trajectory for the explorer to visit these samples can be obtained
via the navigation module~$\texttt{Navi}(\cdot)$ in~\eqref{eq:robot-nav},
with adaptive velocity to minimize the total idle time~$\tau_{\Xi^+} = (\tau_i+\sum_{j\in\widehat{\mathcal{N}}_i} \tau^+_j)$.
As shown in Line~\ref{alg-line2:optimization}, it generates the actual schedule of
meeting events~$\mathbf{c}_i$ and the associated idle time~$\tau_{\Xi^+}$
via solving the following optimization:

% ---------------------------
\begin{subequations}\label{eq:optmeet}
    \begin{align}
        & {\textbf{min}}_{\mathbf{c}'_i} \quad \tau_{\Xi^+} \notag \\
      \textbf{s.t.} & \quad \tau^{-}_i = \texttt{A}^{\star}(\{p_j\}),
      \,\tau^{+}_i = t_j - \xi^{-}_j(p_j);  \label{eq:optmeet:a-b} \\
      & \quad \tau^{+}_{j} = t_j - t^{\texttt{e}_{-}}_j,\, \forall j \in
      \widehat{\mathcal{N}}_i, \, p_j \in \widehat{\Omega}_j, \label{eq:optmeet:c-d}
    \end{align}
\end{subequations}
% ---------------------------
where~$t^{\texttt{e}_{-}}_j$ is the end timestamp of executing local plan~$\xi^{-}_j$ for inspector~$j$;
$\xi^{-}_j(t_j)$ returns location of inspector~$j$ at time~$t_j$,
while~$\xi^{-}_j(p_j)$ outputs the timestamp for inspector~$j$ at location~$p_j$.
Furthermore,
constraint~\eqref{eq:optmeet:a-b} calculates the travel idle time of the explorer~$i$ using~$A^{\star}$ algorithm and the waiting time of the inspector~$j$;
\eqref{eq:optmeet:c-d} quantifies the waiting idle time of the explorer~$i$.
Since non-analytic constraint~\eqref{eq:optmeet:a-b} cannot be directly applied
in the traditional optimization tools,
thus a genetic algorithm is leveraged by following the standard steps:
the potential sequence~$\mathbf{c}'_i$ is jointly encoded as a chromosome;
the fitness function is defined as the reciprocal of total idle time,
i.e., $\text{Fitness} = 1/{\tau_{\Xi^+}}$;
the constraints~\eqref{eq:optmeet} are enforced by the genetic algorithm;
and the standard genetic algorithm procedure~\cite{alhijawi2024genetic}
is followed to obtain the actual meeting sequence~$\mathbf{c}_i$.
Finally, the total idle time~$\tau_{\Xi^+}$ is returned by~$\texttt{OptMeet}$ algorithm as
Line~\ref{alg-line2:optimization}.
% ---------------------------
(III) {Allocation of Features}.
Given the confirmed meeting events,
the stored new features~$\mathcal{F}^{+}$ are then assigned to the inspectors
and appended to the end of their local plans.
To determine the optimal assignment and minimize idle time~$\tau_{\Xi^{-}}$,
a MVRP is formulated and solved to obtain~$\boldsymbol{\varphi}_i$,
by setting the end of local plans as initial positions and all features
in~$\mathcal{F}^{+}$ as positions to visit.
Once~$\boldsymbol{\varphi}_i$ is obtained,
the next local plans~$\{\xi^+_j\}$ are updated by interpolating the
shortest path between assigned features,
along with the total idle time~$\tau^{-}_j$.
Lastly,
the total idle time~$\tau_{\Xi}$ is derived by adding~$\tau_{\Xi^{-}}$ and~$\tau_{\Xi^{+}}$ (Line~\ref{alg-line2:total_idle}).
If it is less than the current optimal idle time~$\tau_{\Xi^{\star}}$,
the optimal subgroup solution~$\Xi^{\star}_i$ is updated by this solution
(Line~\ref{alg-line2:update-idle-1}-\ref{alg-line2:update-idle-2}).
  It is important to clarify that our framework does not
  require the intra-group sharing of global coordinates due to the
  relative localization within the same group.
Particularly, when entering a BBox, all group members adopt the entry point as their local origin.
The explorers map the detected features (structural elements, inspection targets) relative to this origin.
{Finally,
the inspectors navigate directly via these relative coordinates without the need for external alignment between different groups.
}

\begin{algorithm}[!t]
  \caption{Simultaneous Optimized Exploration and Inspection
    $\texttt{SOEI}(\cdot)$}
    \label{alg:SOEI}
    \KwIn{$\mathcal{F}^{+}_i$, $\mathcal{N}_{\texttt{o}}^i$, $\{\xi^{-}_j\}$.}
    \KwOut{ \( \Xi^{\star}_i \).}
    % Initialize variables for storing the best solution
    $\Xi^{\star}_i \gets \text{None}$,\, $\tau_{\Xi^{\star}} \gets \infty$ \; \label{alg-line2:sub-group-sol}
    % Part 1: Determine all combinations of inspectors
    \ForEach{$\widehat{\mathcal{N}}_i \subset \mathcal{N}_{\texttt{o}}^i$}{
            % Part 2: Optimize meeting sequence, time, and location
        {
        $\widehat{\mathbf{c}}' =\texttt{SampleLOS}(\{\xi^{-}_j\})$\; \label{alg-line:sample-los}
        % For each traversal order, determine meeting times and locations
        \ForEach{$\mathbf{c}'_i \in \widehat{\mathbf{c}}$}{
        % Optimize the meeting sequence
        $(\mathbf{c}_i,\, \tau_{\Xi^{+}}) \gets
        \texttt{OptMeet}(\mathbf{c}'_i,\, \{\xi^{-}_j\})$ by~\eqref{eq:optmeet} \; \label{alg-line2:optimization}
        % Calculate idle time for this combination
        $(\boldsymbol{\varphi}_i,\, \boldsymbol{\xi}_i,\, \tau_{\Xi^{-}})
        \gets \texttt{MVRP}(\mathbf{c}_i, \mathcal{F}^{+}, \{\xi^{-}_j\})$ \; \label{alg-line2:mvrp}
                    $\tau_{\Xi} \gets \tau_{\Xi^{-}} +  \tau_{\Xi^{+}}$ \; \label{alg-line2:total_idle}
                    % Update best solution if current is better
                    \If{$\tau_{\Xi} < \tau_{\Xi^{\star}}$}{
                        $\tau_{\Xi^{\star}} \gets \tau_{\Xi}$ \; \label{alg-line2:update-idle-1}
                      $\Xi^{\star}_i \gets (\widehat{\mathcal{N}}_i,\,
                      \mathbf{c}_i,\, \boldsymbol{\varphi}_i,\, \boldsymbol{\xi}_i)$ \; \label{alg-line2:update-idle-2}
                    }
        }
        }
    }
    \Return{$\Xi^{\star}_i$} \;
\end{algorithm}
% -------------------------

% ====================================
\subsubsection{Algorithm Summary}
\label{subsec:alg-summ}
It is worth noting that
the number of subsets $\widehat{\mathcal{N}}_i$ is combinatorial to
the size of~${\mathcal{N}}^i_{\texttt{o}}$, i.e., {$\mathcal{O}(2^{|{\mathcal{N}}^i_{\texttt{o}}|})$.}
{To generate all potential meeting sequences~$\widehat{\mathbf{c}}$
has a factorial complexity of~$\mathcal{O}(|\widehat{\mathcal{N}}_i|!)$.
Then,
to determine the optimal sequence of meeting events~$\mathbf{c}_i$,
the genetic algorithm has a complexity of~$\mathcal{O}(|\widehat{\mathcal{N}}_i|^3)$.}
Finally,
the MVRP algorithm to allocate features has a complexity of~$O(|\widehat{\mathcal{N}}_i|^{|\mathcal{F}^{+}|})$.

%++++++++++++++++++++++++++++++++++++++++
\begin{lemma}\label{lemma:exp-ins}
    {{
    Under the proposed Alg.~\ref{alg:SOEI} for the layer of SubGroups,
    all features~$\mathcal{F}$ can be inspected,
    and the results~$\mathcal{S}$ are collected by the explorers in finite time.}}
\end{lemma}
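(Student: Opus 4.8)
The plan is to mirror the three-stage structure used in the proof of Lemma~\ref{lemma:gcs-exp}, establishing finiteness of (i) \emph{fitting} all features that lie in the subgroup's assigned BBox, (ii) \emph{allocating and inspecting} every such feature, and (iii) \emph{relaying} every inspection result back to the explorer. Since $|\mathcal{F}| = Q < \infty$ and the number of BBoxes is finite, it suffices to show each stage terminates for a single BBox $B_i$ and a single subgroup.

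For stage (i), I would invoke Lemma~\ref{lemma:gcs-exp} directly: it already guarantees that $B_i$ is fully explored by the explorer via Alg.~\ref{alg:explore} together with the rolling assignment, so the feature-fusion module $\texttt{Fit}_i(\cdot)$ in~\eqref{eq:robot-fit} identifies every AoI $\phi_q$ with $\phi_q \subset B_i$ within a finite horizon; hence the accumulated fitted set $\mathcal{F}^+_i$ eventually coincides with the restriction of $\mathcal{F}$ to $B_i$. For stage (ii), the argument turns on the allocation loop of Alg.~\ref{alg:SOEI}. At each invocation, the MVRP in Line~\ref{alg-line2:mvrp} distributes \emph{all} currently stored new features $\mathcal{F}^+$ among the selected inspectors and appends them to their local plans. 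I would note that the full subgroup $\widehat{\mathcal{N}}_i = \mathcal{N}^i_{\texttt{o}}$ is always among the subsets enumerated by the outer \textbf{ForEach}, so whenever $\mathcal{N}^i_{\texttt{o}} \neq \emptyset$ there is at least one feasible assignment covering $\mathcal{F}^+$; each inspector then runs $\texttt{Inspect}_j(\cdot)$ in~\eqref{eq:robot-inspect} on its allocated AoIs, each taking a finite duration $t_q$. Because $\mathcal{F}^+$ is drained at each round and replenished only by the finitely many features newly fitted in stage (i), there are finitely many allocation rounds, each with finitely many inspections, so all features in $B_i$ are inspected after finite time.

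For stage (iii), I would rely on the proactive protocol: during each meeting confirmed by $\texttt{OptMeet}$ in Line~\ref{alg-line2:optimization}, the inspectors relay their results $\mathcal{S}$ and updated plans to the explorer at an LOS- and range-feasible rendezvous produced by $\texttt{SampleLOS}$ in Line~\ref{alg-line:sample-los}. The main obstacle I anticipate lies exactly here — one must ensure $\texttt{SampleLOS}(\{\xi^-_j\})$ always returns a nonempty set of feasible meeting sequences and that each inspector's energy/time budget suffices to reach such a meeting after completing its current assignment. I would close this gap by observing that the explorer can always elect to rendezvous along an inspector's already-committed collision-free trajectory $\xi^-_j$, so a feasible sample exists by construction, and that $\texttt{OptMeet}$ commits only those meeting schedules whose induced plans respect the energy constraint; hence a meeting is always schedulable and, since both the number of features and the number of meeting rounds are finite, every result reaches the explorer after a bounded delay. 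Chaining stages (i)--(iii) yields the claim.

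Finally, I would remark that the complexity bounds stated just above the lemma (the $\mathcal{O}(2^{|\mathcal{N}^i_{\texttt{o}}|})$, $\mathcal{O}(|\widehat{\mathcal{N}}_i|!)$, $\mathcal{O}(|\widehat{\mathcal{N}}_i|^3)$, and $O(|\widehat{\mathcal{N}}_i|^{|\mathcal{F}^+|})$ terms) are all finite for fixed inputs, which is all that is needed here: each call to Alg.~\ref{alg:SOEI} returns in finite time, so the per-round bookkeeping does not threaten termination. The only nontrivial step remains the feasibility-of-rendezvous argument in stage (iii); everything else reduces to counting finitely many features over finitely many rounds.
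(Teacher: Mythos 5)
Your proposal is correct and follows essentially the same route as the paper's proof: finiteness of the feature set plus the MVRP allocation in Line~\ref{alg-line2:mvrp} guarantees every fitted feature is eventually assigned and inspected, and finiteness of the inspectors and of the meeting rounds under the proactive protocol guarantees the results reach the explorer in finite time. The only substantive difference is that you explicitly argue feasibility of each rendezvous (nonempty output of $\texttt{SampleLOS}(\cdot)$ and respect of the energy budget), a point the paper's proof simply asserts via the protocol's guarantee that the explorer communicates with each inspector more than once.
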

\begin{proof}\label{proof:exp-ins}
    {
    During each communication event,
    the explorer assigns the newly discovered features~$\mathcal{F}^{+}$ to the
    selected inspectors~$\widehat{\mathcal{N}}_i$ via the MVRP algorithm
    in Line~\ref{alg-line2:mvrp},
    which ensures that each inspector is allocated a subset of features~$\widehat{\mathcal{F}}_j$.
    Since the number of features~$\mathcal{F}$ is finite,
    the recursive allocation of features guarantees all features
    are eventually assigned to inspectors.
    On the other hand,
    the proactive communication protocol guarantees
    that explorers actively communicate with
    each inspector more than once
    to minimize the idle time due to waiting.
    In addition,
    the inspectors transmit their inspection results~$\mathcal{S}$ to the explorer.
    Since the number of inspectors~$\mathcal{N}_{\texttt{o}}$ and communication events
    are finite,
    the total time needed to collect all inspection results~$\mathcal{S}$ is also finite.
    }
\end{proof}

%%========================================
%%========================================
%====================================
\subsection{Online Execution and Adaptation}\label{sec:online}

%====================================
\subsubsection{Rolling Assignment of Exploration Tasks}
\label{subsec:roll}

{Since the GCS is fully aware of the location of all BBoxes~$\mathcal{B}$,
and it coordinates with the explorers directly in the layer of GCS-SubGroups,
it is reasonable for GCS to assign these BBoxes to robotic fleets.}
Moreover,
instead of assigning all BBoxes at once,
a rolling assignment strategy is adopted in a receding horizon way,
which is particularly useful
as the actual structure of BBoxes in~$\mathcal{B}$ is unknown,
making it difficult to predict the exploration time of each BBox.
In the initial phase,
GCS assigns each explorer~$i\in \mathcal{N}_{\texttt{e}}$ to
the nearest BBox~$B_i \in \mathcal{B}$.
Then, a subgroup is formed by allocating a specific number of inspectors
to each explorer,
denoted by~$\mathcal{N}^i_{\texttt{o}} \subset \mathcal{N}_{\texttt{o}}$,
according to the volume~$V_{B_i}$ of the BBox~$B_i$, i.e.,
\begin{equation}\label{eq:bbox-alloc}
    |\mathcal{N}^i_{\texttt{o}}| = \left\lfloor
    |\mathcal{N}_{\texttt{o}}| \frac{V_{B_i}} {\sum_i V_{B_i}} \right\rfloor,
\end{equation}
where~$|\mathcal{N}_{\texttt{o}}|$ is the total number of inspectors;
and~$\lfloor \cdot \rfloor$ represents the floor function.
During online execution,
when GCS and explorer meet at the designated location~$p^{\texttt{c}}_m$,
each explorer reports the current exploration status of~$B_i$ to GCS.
Once~$B_i$ is fully explored and all inspected features are fed back,
the GCS assigns the next nearest BBox to the entire subgroup for execution,
until all BBoxes are assigned,
{as shown in Fig.~\ref{fig:gcs-exp-evol}}.

% ====================================
\subsubsection{Handling Mismatch of Meeting Time}
\label{subsec:mis-handle}

Due to motion uncertainty and tracking errors,
it is possible that actual arrival time at the designated meeting location
is different from the planned meeting time,
especially in unknown or complex environment.
This is called the mismatch of meeting time.
To address this challenge,
a series of mechanisms are proposed:
(I) If the waiting time of GCS for an explorer
exceeds a given tolerance~$\overline{\delta}>0$,
i.e., $\tau_0 \ge \overline{\delta}$,
then GCS assumes that this explorer fails
and proceeds to communicate with the next explorer.
Conversely,
if the waiting time of an explorer for the GCS exceeds~$\overline{\delta}$,
i.e., $\tau^{+}_i \ge \overline{\delta}$,
the explorer has to wait until the GCS arrives;
(II) Within a subgroup,
if the waiting time of an explorer for an inspector
exceeds~$\overline{\delta}$,
the explorer would continue its local plan.
Since the explorer has full knowledge of the local plans of all inspectors,
the next scheduled meeting event is appended to the end of the local plan
of this inspector.
If this tolerance is exceeded again in the next meeting,
the explorer would assume that this inspector has failed,
in which case the failure recovery mechanism is discussed
in the sequel.
Last but not least,
whenever scheduling conflicts may arise that the explorer
should meet with both the GCS and inspectors in close time,
higher priority is given to the GCS.
{
This prioritization is justified because GCS-explorer communication occurs infrequently, whereas explorer-inspector communications happen more regularly, 
allowing remaining data to be deferred to subsequent meetings.
}

%====================================
\subsubsection{Local Adaptation during Exploration}
\label{subsec:adapt-exp}

During 3D online exploration,
although the optimal path~$\widehat{\xi}^{\star}_i$
of each explorer~$i\in \mathcal{N}_{\texttt{e}}$
is synthesized
given the existing frontiers via Alg.~\ref{alg:explore},
new frontiers may be generated during the exploration process,
yielding the need for local adaptation.
Thus, to address this issue,
Alg.~\ref{alg:explore} can be triggered in a receding horizon manner
by adding new frontiers to the existing set and replanning the path.
This iterative re-planning process continues until the explorer
reaches the confirmed meeting location~$p^{\texttt{c}}_m$ within~$t^{\texttt{c}}_m$.

%++++++++++++++++++++++++++++++++++++++++
\begin{theorem}\label{theory:gcs}
{
    The proposed SLEI3D framework
    ensures that all inspected features~$\mathcal{S}$ are collected by the GCS
    in a finite time,
    while minimizing the total idle time in~\eqref{eq:new-objective}.}
\end{theorem}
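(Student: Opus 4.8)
The plan is to split the statement into two parts that match the structure of the preceding development: (i) \emph{finiteness} --- every inspected result $s_q \in \mathcal{S}$ reaches $D_0$ in finite time; and (ii) \emph{optimality} --- the framework minimizes the total idle time in~\eqref{eq:new-objective}. Part (i) will be a composition of Lemma~\ref{lemma:exp-ins} and Lemma~\ref{lemma:gcs-exp}, augmented by the failure-handling rules; part (ii) will follow from the two-layer decomposition of the objective together with the optimality of each dedicated subproblem solver.

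For part (i), first I would invoke Lemma~\ref{lemma:exp-ins}: inside any subgroup assigned to a BBox $B_i$, Algorithm~\ref{alg:SOEI} with the proactive protocol assigns every fitted feature to some inspector via the MVRP, has it inspected by~\eqref{eq:robot-inspect}, and relays the result back to the explorer; since $|\mathcal{F}|$ and the number of intra-group communication rounds are finite, this completes in finite time. Next I would invoke Lemma~\ref{lemma:gcs-exp}: the intermittent protocol with the TSP-TW schedule for the GCS guarantees the explorer meets the GCS repeatedly with bounded inter-meeting time, so each relayed result is forwarded into $\mathcal{M}_0$ and $D_0$ in finite time, while the rolling-assignment rule of Sec.~\ref{subsec:roll} ensures every BBox in $\mathcal{B}$ --- hence every feature in $\mathcal{W}$ --- is eventually assigned and finished. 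I would then fold in Sec.~\ref{subsec:mis-handle}: because the tolerance $\overline{\delta}$ is finite and a non-responsive robot is either skipped by the GCS or has its pending task re-appended to a subsequent meeting, no single mismatch or fault stalls the mission; with a finite fleet this preserves overall finiteness.

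For part (ii), I would argue that by the very construction of the multi-layer, multi-rate framework the global idle-time objective $\sum_{i\in\mathcal{N}^+}\tau_i$ decomposes into the GCS-to-subgroup term $\sum_{m=1}^{M}\bigl(\sum_{i\in\mathcal{N}_{\texttt{e}}}\tau_{i_m}+\tau_{0_m}\bigr)$ of Problem~\ref{prob:intermit-comm} and the per-subgroup terms $\tau_i+\sum_{j\in\mathcal{N}^i_{\texttt{o}}}(\tau^{+}_j+\tau^{-}_j)$ of Problem~\ref{prb:inspect}. Each is then addressed by its dedicated optimizer: Problem~\ref{prob:3D-explore} by Algorithm~\ref{alg:explore}, whose recursive \texttt{planPath}$(\cdot)$ enumerates admissible frontier orderings and retains the one maximizing coverage subject to the meeting deadline, thereby minimizing $\tau^+_i$; the GCS side by the TSP-TW schedule minimizing $\tau_0$; and Problem~\ref{prb:inspect} by Algorithm~\ref{alg:SOEI}, which enumerates inspector subsets $\widehat{\mathcal{N}}_i$, optimizes the meeting sequence $\mathbf{c}_i$ via the genetic algorithm under~\eqref{eq:optmeet}, and allocates $\mathcal{F}^{+}$ by the MVRP. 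Summing the optimized sub-objectives recovers the minimized total idle time, and since lower total idle time is exactly higher mission efficiency for exploration, inspection and communication --- as noted right after~\eqref{eq:new-objective} --- this drives $T$ in~\eqref{eq:prb_formulation} toward its minimum.

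The main obstacle will be making the optimality claim genuinely rigorous. The subproblem solvers are heuristic in practice (the genetic algorithm and the factorial-time recursive search return only approximate optima), and the decomposition is not an exact equivalence: the layers are coupled through the rolling BBox assignment, the shared budgeting of explorer time between GCS and inspector meetings (with priority given to the GCS), and the receding-horizon re-planning triggered by newly discovered frontiers. I would therefore take the conservative route: prove finiteness unconditionally from the two lemmas and the mismatch rules, and state the optimality as ``each constituent subproblem in the framework's decomposition of~\eqref{eq:new-objective} is solved to optimality given its inputs,'' i.e.\ the framework minimizes the decomposed surrogate objective --- which is precisely what Sections~\ref{sec:gcs-explore}--\ref{sec:online} establish --- rather than claiming global optimality over the fully coupled problem.
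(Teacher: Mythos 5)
Your proposal matches the paper's own proof in both structure and substance: finiteness is obtained by composing Lemma~\ref{lemma:gcs-exp} and Lemma~\ref{lemma:exp-ins} with the TSP-TW schedule for the GCS and the online adaptation/failure-handling rules, and the idle-time claim is reduced to the two layer-wise subproblems (Problem~\ref{prob:intermit-comm} and Problem~\ref{prb:inspect}) being handled by their dedicated algorithms. Your explicit caveat that the optimality holds only for the decomposed surrogate ``given its inputs'' is in fact a more honest rendering of the same hedge the paper makes with its phrase ``given the set of currently-known features,'' so no substantive difference remains.
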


\begin{proof}\label{proof:exp-ins}
    {
    Lemma~\ref{lemma:gcs-exp} guarantees all features~$\mathcal{F}$ are fitted by explorers in finite time,
    while Lemma~\ref{lemma:exp-ins} ensures all features~$\mathcal{F}$ are allocated,
    inspected and transmitted to explorers by inspectors within finite time.
    Under the protocol of intermittent communication,
    the GCS solves the TSP-TW problem to arrive at the meeting location~$p^{\texttt{c}}_m$ at the scheduled time~$t^{\texttt{c}}_m$,
    where the explorers transmit the inspected features~$\mathcal{S}$.
    Since both the number of features~$\mathcal{F}$ and communication rounds are finite,
    the GCS collects all inspection results~$\mathcal{S}$ within finite time.
    Furthermore, the strategy of online adaptation can effectively
    mitigate delays and failures caused by uncertainties.
    Therefore, the SLEI3D framework guarantees that all
    features~$\mathcal{F}$ are collected back by GCS in a finite time.
    On the other hand,
    as re-formulated in Problem~\ref{prob:intermit-comm}
    algorithms of GCS-to-SubGroups layer coordinate the meeting events
    between GCS and explorers, to minimize the waiting time.
    As formulated in Problem~\ref{prb:inspect},
    algorithm~$\texttt{SOEI}(\cdot)$
    coordinates the explorer and inspectors in each subgroup,
    to minimize the idle time.
    Thus, given the set of currently-known features,
    the overall plan of the robotic fleet and
    the GCS minimizes the total idle time
    in~\eqref{eq:new-objective} and maximizes
    the efficiency to collect explored and inspected features.
    This completes the proof.
    }
\end{proof}
%++++++++++++++++++++++++++++++++++++++++

%%========================================
%%========================================
%====================================
\subsection{Generalization}\label{sec:discuss}

%====================================
\subsubsection{Robot Failures and Recovery}
\label{subsec:fail-recover}

Consider the following cases:
(I) Failure of an explorer:
Assume that explorer~$i$ fails at~$t_{\texttt{f}} > 0$,
the GCS would wait for the explorer~$i$ at the predefined time~$t^{\texttt{c}}_m$ and location~$p^{\texttt{c}}_m$
for a maximum waiting time~$\overline{\delta}$.
Then GCS determines that explorer~$i$ has failed and {directly} moves to next explorer
in its local plan.
Subsequently, the meeting events of next rounds are scheduled
without considering the failed explorer.
In addition,
once another subgroup has finished exploring its assigned BBox,
it will be reassigned to the BBox of the failed explorer
to restart the task of exploration and inspection,
along with the remaining inspectors in~$\mathcal{N}^i_{\texttt{o}}$.
(II) Failure of an inspector:
Assume that inspector~$j \in \mathcal{N}^i_{\texttt{o}}$
has failed at~$t_{\texttt{f}} > 0$,
explorer~$i$ within the same subgroup
would detect its failure {after waiting for a maximum duration~$\overline{\delta}$}.
Then, explorer~$i$ would exclude inspector~$j$ from the subgroup
by updating~$\mathcal{N}^i_{\texttt{o}} \leftarrow \mathcal{N}^i_{\texttt{o}} \setminus \{j\}$,
after which Alg.~\ref{alg:SOEI} is applied to reassign the remaining inspectors.

    Beyond the above measures,
    communication instability between robots (GCS-explorer or explorer-inspector)
    can be addressed via redundancy.
    While the baseline mechanism identifies failures after prolonged period of disconnections
    over $\overline{\delta}$,
    this might overreact to transient disruptions.
    Thus, a recovery mechanism at the hardware level is proposed, i.e.,
    each robot carries an ad-hoc mesh network that can be activated during communication disruption.
    This network can immediately relay critical status updates
    while maintaining local coordination.
    This integrated approach ensures resilience against both permanent failures of robots
    and temporary degradation of the communication network.

%====================================
\subsubsection{Multiple GCS}
\label{subsec:multi-gcs}
Multiple GCS units are available, such as in large-scale scenes.
In this case,
the GCS is assumed to have similar capabilities and
all-time connectivity with each other,
e.g., regarding the progress of exploration and inspected features.
Regarding the communication with explorers,
since a subset of explorers is pre-assigned to communicate with a specific GCS,
the intermittent communication protocol in Sec.~\ref{sec:gcs-explore}
should be modified by replacing the single TSP-TW with the
multiple parallel TSP-TW formulations,
to generate a local plan of navigation and communication for each GCS,
thus improving overall efficiency.

%====================================
\subsubsection{High-priority Features}
\label{subsec:high-pri-feat}
If the features have different priorities,
e.g., some features may require immediate response,
the feature allocation module in Alg.~\ref{alg:SOEI}
can be modified by adding precedence constraints
to the inspection tasks associated with high-priority features.
Namely, algorithms similar to the MVRP with priority or
precedence constraints (MVRP-PC) should be employed.
Thus, high-priority features can be inspected
first while minimizing the total idle time.

%====================================
\subsubsection{Spontaneous Meeting}
\label{subsec:spont-meet}
During execution, the GCS, explorers or inspectors,
might meet
at locations other than the confirmed meeting locations,
which are called {spontaneous} meetings.
In this case,
they exchange their local data, without coordinating
the next meeting event nor modifying their local plans,
such that all confirmed meetings remain valid.

% % ====================================
% \subsubsection{General collaborative tasks}
% \label{subsec:general-collab}

% In our problem setup,
% we rely on BBoxes to provide approximate location of subregions and assign only one explorer to per region for exploration,
% as described in Section~\ref{subsec:overview},
% instead of multi-robot exploration as in~\cite{zhou2023racer}.
% This is motivated by two reasons:
% (1) Since our objective is to identify features on buildings,
% using BBoxes can significantly improve the exploration efficiency by focusing on relevant areas.
% Without such priors,
% it would be time-consuming to explore many irrelevant regions,
% thus introducing BBoxes as priors is reasonable and effective.
% (2) As mentioned in~\cite{zhou2023racer},
% collaborative exploration is more efficient when assuming that
% real-time information can be shared among explorers.
% However,
% due to limited communication among explorers,
% they can not exchange information in real-time,
% which may lead to overlapping exploration and redundant features detection by inspectors.
% Additionally,
% coordinating multi-robot exploration under limited communication is complex and beyond the scope of this work.
% Addressing both multi-robot exploration and inspection under such constraints would further increase this complexity.
% Therefore,
% adopting a multi-layered grouped approach is a practical and efficient solution to this challenge.

%====================================
\subsubsection{Fully-unknown Environment}
\label{subsec:w/o prior}
When the prior information regarding BBoxes is unavailable,
an adaptive partitioning method for BBoxes is proposed to dynamically allocate the unexplored regions in $\mathcal{W}$
based on the real-time progress of collaborative exploration.
{It should be noted that the method relies on prior information about the environmental shape and size, 
where oddly-shaped workspaces are first enclosed by a regular bounding box before partitioning.}
As shown in Fig.~\ref{fig:bboxes} (b),
the whole space~$\mathcal{W}$ is initially divided into $|\mathcal{N}_{\texttt{e}}|$
cubic subspaces with equal volume denoted by~$\{B_j\}$.
Then, each subgroup autonomously explores its assigned~$B_j$ via the framework proposed above,
the completion of which is notified to the GCS.
The GCS keeps track of the set of unexplored frontiers through $\Psi \triangleq \mathcal{W} \setminus \bigcup_{j \in \mathcal{I}_{\texttt{comp}}} B_j$,
where $\mathcal{I}_{\texttt{comp}}$ is the set of explored regions.
{This set $\Psi$ is dynamically partitioned into new set of BBoxes via iterative octree partitioning
that preserves the alignment of all axes.}
This iteration terminates when $\Psi = \emptyset$ or $\min(\text{dim}(B_j)) \leq 10.0\text{m}$.
Subsequent inspections follow the procedure described in Alg.~\ref{alg:SOEI},
which ensures a complete coverage through hierarchical decomposition.

% ====================================
\subsubsection{Energy-constrained Fleet Management}
\label{subsec:energy}
When the robots have limited energy capacity and require charging,
our framework can be adopted by applying modifications across all layers.
To begin with,
the robot models should incorporate these constraints.
Namely, each robot \( i \in \mathcal{N} \) has a limited energy capacity \( E_i(t) \le \overline{E} \in \mathbb{R}^+ \),
which evolves by \( \dot{E}_i(t) = -\alpha_i \) with \(\alpha_i > 0\)
during operation, and \( \dot{E}_i(t) = \beta_i \) with \(\beta_i > 0\)
when charging at the designated station \( P_{\texttt{chg}} \subset \mathcal{W} \).
The charging process is triggered when \( E_i(t) \le \underline{E} \in \mathbb{R}^+ \)
and the robot is at the charging station,
with a duration~\( t_{\texttt{b}} = (\overline{E} - \underline{E})/\beta_i \)
for a full charge.
This constraint is formally stated as:
$0 < E_i(t) \le \overline{E},\; \forall t \in [0,\,T],\;
\forall i\in \mathcal{N}$.

To address the above constraint,
the proposed framework is adapted with the following three key modifications:
{(I) The prediction of completion time for an assigned BBox as presented in Sec.~\ref{subsec:time-pred}
must account for the duration of recharging.}
Consequently, the planned communication time~$\widetilde{t}^{\texttt{c}}_{m+1}$ between
the GCS and an explorer is modified as follows:
\begin{equation}\label{eq:duration-energy}
    \widetilde{t}^{\texttt{c}}_{m+1} = t^{\texttt{c}}_{m+1} + \left\lceil \frac{t^{\texttt{c}}_{m+1} - t^{\texttt{c}}_m}{\overline{E} - \underline{E}} \right\rceil \cdot (t_b + T_{\texttt{chg}}),
\end{equation}
where $\lceil\cdot\rceil$ calculates the minimum recharge cycles,
$T_{\texttt{chg}}$ is the round-trip travel time from one of the four corner points
to the charging station $P_{\texttt{chg}}$ and $t_b$ is the maximum charging duration;
(II) During the coordination of GCS-to-SubGroups as described in Sec.~\ref{subsec:patrol},
the planned meeting time $t^{\texttt{c}}_m$ between the GCS and explorers should also be modified under the following cases:
(i) when the remaining energy \( (E_0- \underline{E}) /\alpha \geq \Delta t_m \),
where \( \Delta t_m = t^{\texttt{c}}_m - t^{\texttt{c}}_{m-1} \) denotes the time interval between two consecutive meetings,
the planned \( t^{\texttt{c}}_m \) remains feasible under the energy constraint.
(ii) if \( (E_0 - \underline{E})/\alpha < \Delta t_m \), the GCS needs an intermediate charging, for which the meeting time is adjusted by:
\[
\tilde{t}^{\texttt{c}}_m = t^{\texttt{c}}_m + \left\lceil \frac{\Delta t_m - E_0}{\overline{E} - \underline{E}} \right\rceil \cdot (T^{0}_{\texttt{chg}} + t_b),
\]
where~\( T^{0}_{\texttt{chg}} \) denotes the round-trip travel time between the meeting point \( p^{\texttt{c}}_m \) and the charging station;
(III) During the coordination within each SubGroup,
the initial assignment $\phi_i$ computed via solving the MVRP as described in Sec.~\ref{subsec:sei}
can be reformulated to generate an energy-unconstrained routing sequence $\mathcal{F}^+$ for each inspector,
i.e., by predicting the energy consumption along the local plan of each inspector
and inserting charging events as needed.
The planned timestamps for subsequent inspection events are shifted accordingly.

%%========================================

%%========================================
%========================================
\section{Numerical Experiments}
\label{sec:experiments}

To further validate the effectiveness of the proposed method,
extensive numerical experiments for large-scale systems are conducted,
of which the performance is compared against several state-of-the-art methods.
The proposed method is implemented in~$\texttt{Python3}$
within the framework of~$\texttt{ROS}$,
and tested on a workstation with Intel(R) i9-13900KF 24-Core CPU
@3.0GHZ with a RTX-4090 GPU.
Simulation videos can be found in the supplementary material.

% ========================================
\subsection{System Description}
\label{subsec:sys-desc}

% ====================================
\begin{figure*}[!t]
  \centering
  \includegraphics[width=0.92\linewidth, height=0.3\linewidth]{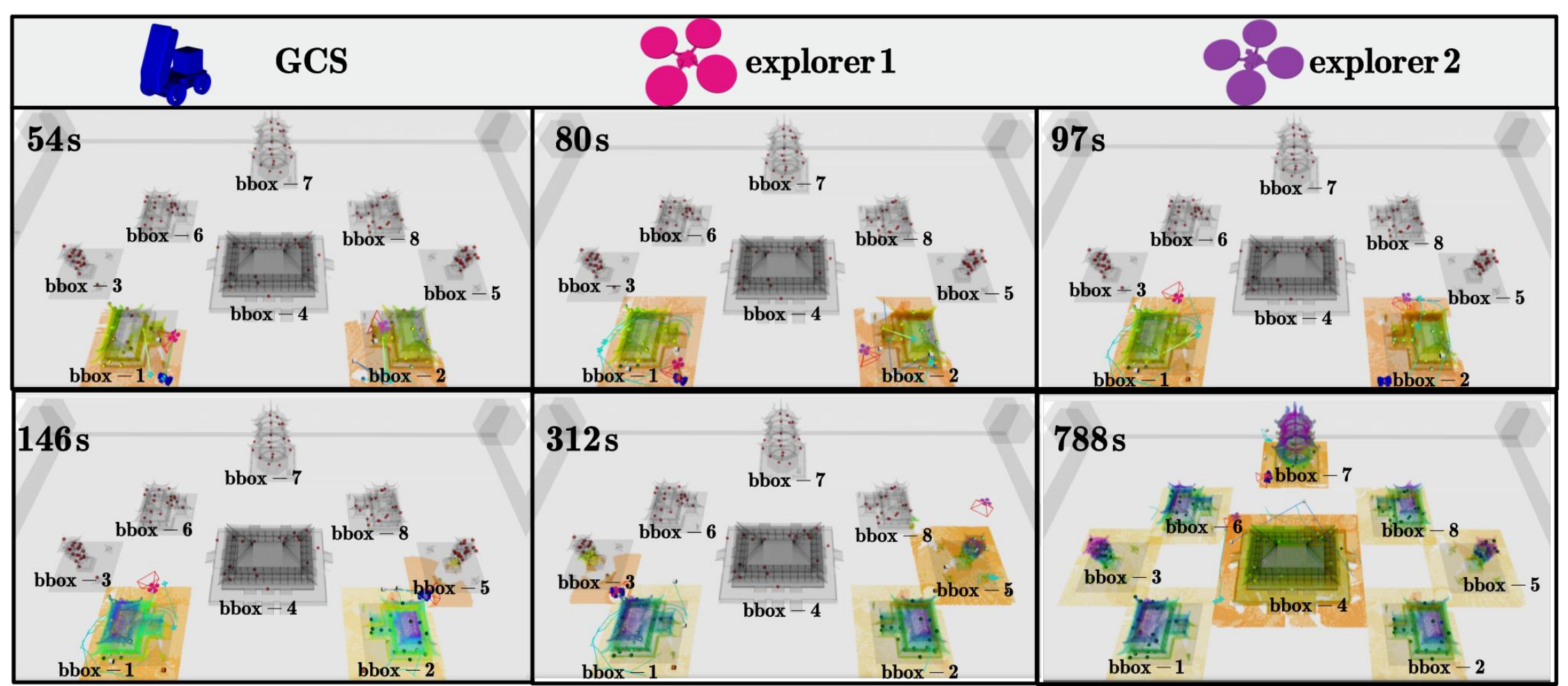}
  \centering
  \caption{
  Snapshots of the GCS-to-SubGroup layer in Scenario-A
  with~$1$ GCS,~$2$ explorers and~$4$ inspectors.
  {At~$t=146s$, BBox-5 is reassigned to explorer~$2$,
  and BBox-3 to explorer~$1$ at~$t=312s$.
  }
  }
  \label{fig:gcs-exp-evol}
  \vspace{-4mm}
\end{figure*}
% ====================================

The simulated robotic fleet
consists of UAVs as explorers and inspectors,
and UGVs as the GCS,
which is deployed in a light-weight simulator~\cite{marsim, fuel}.
As shown in Fig.~\ref{fig:sys-scene},
the following \textbf{four} scenarios are considered.
Scenario-A: $1$ GCS,~$2$ explorers and~$4$ inspectors are deployed
in a cluster of ancient structures of size~$60\times50\times9m^3$
to validate the performance of the GCS-to-SubGroups layer as described in Sec.~\ref{subsubsec:gcs-exp-evol};
Scenario-B: $1$ explorer and~$3$ inspectors are deployed in a large and single
architecture structure of size~$18\times20\times15m^3$
to validate the performance of the SubGroups layer as described in Sec.~\ref{subsubsec:exp-ins-evol};
Scenario-C: $1$ GCS, $4$ explorers and $8$ inspectors are deployed
in a medium-scale city of size $70\times60\times30m^3$,
to validate the performance of the overall scheme as described in Sec.~\ref{subsubsec:full-process-sim};
and Scenario-D: $1$ GCS, $8$ explorers and $40$ inspectors are deployed
in a large-scale ancient site of size $120\times160\times20m^3$
to validate the overall performance in a large-scale scene.

To bridge the gap between theoretical validation and practical deployment in high-fidelity simulations,
our experimental framework is constructed with three meticulously designed core modules:
perception, planning, and communication.
The system implementation leverages a novel lightweight architecture that diverges from conventional Gazebo-based workflows through:
(I) Perceptual realism via sensor emulation: Laser rangefinders and RGB-D cameras are simulated with configurable noise models (±2cm ranging error, 5\% depth distortion),
replicating Gazebo plugin functionalities while eliminating computational overhead.
(II) Environment abstraction: Operational spaces are discretized into point cloud representations (0.1m resolution voxels) through RVIZ integration,
preserving geometric fidelity while enabling real-time collision checking.
(III) ROS-native interoperability: Standardized message interfaces ensure seamless transition between simulated and physical platforms.
The system implementation details are specified as follows:
A 3D Euclidean Signed Distance Field (ESDF) map is generated
via~\cite{han2019fiesta} as local map,
with a~$5~m$-range depth camera and odometry sensor as a volumetric map of the environment.
Then, the Area of Interest (AoI) identification pipeline concurrently processes
the synchronized data streams at $0.5~Hz$:
RGB images analyzed by predicting the depth and semantic information via MiDas network and DeepLabV3+ for 3D reconstruction of geometric features,
and object recognition by YOLOv7.
Each explorer and inspector navigates using a kinodynamic motion planner to generate collision-free path,
which contains multiple topologically distinct waypoints capturing the
structural complexity of the 3D environment,
with a maximum speed of~$2m/s$ and acceleration of~$2m/s^2$.
Feature fitting, detection and inspection are facilitated
by inspectors equipped with a field-of-view (FOV) camera,
which has a left and right view of~$90^{\circ}$,
a front view of $60^{\circ}$, and a detection range of $5~m$.
Once features are within the FOV,
the features can be fitted via~$\texttt{Fit}(\cdot)$ in~\eqref{eq:robot-fit}
or inspected via~$\texttt{Inspect}(\cdot)$ in~\eqref{eq:robot-inspect}
automatically by the explorers or inspectors.
Moreover, the communication within the robotic fleet
is restricted to a range of~$5m$
and has a LOS among the GCS, explorers and inspectors.

% ========================================
% ========================================
\subsection{Results}
\label{subsec:results}

The results associated with the four scenarios are summarized below,
including generalization to robot failures, multiple GCSs
and high-priority features.
Lastly, scalability and robustness analyses are performed w.r.t. fleet size,
various uncertainties and different duration of inspection tasks.

% ====================================
\begin{figure*}[!t]
    \centering
    \includegraphics[width=0.8\linewidth,height=0.3\linewidth]{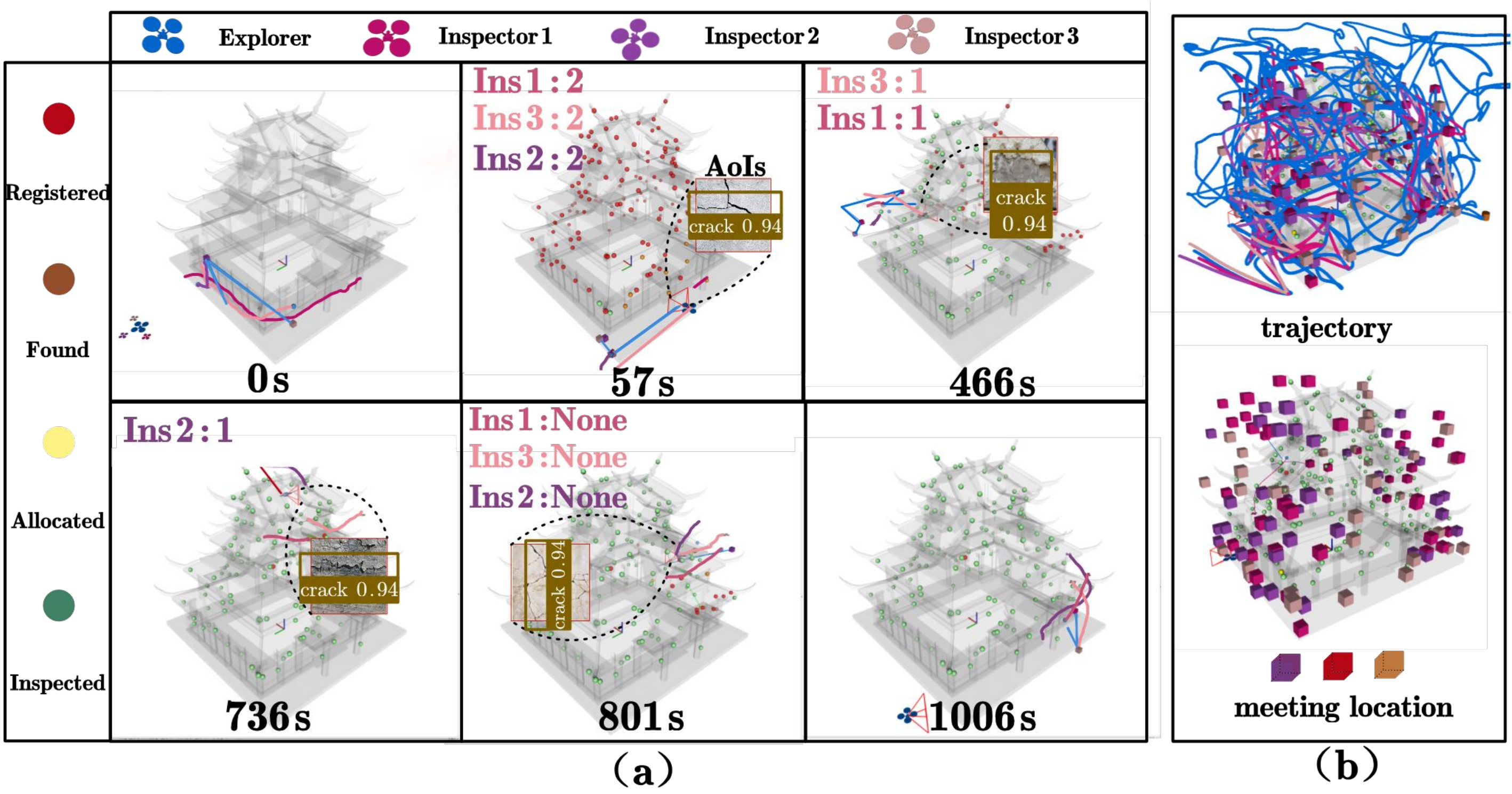}
    \centering
    \caption{
      Evolution of the layer of SubGroups in Scenario-B consisting
      of one explorer and three inspectors:
      (\textbf{a}) snapshots of the planned trajectories of the
      explorer and inspectors at different time instants;
      (\textbf{b}) the entire trajectories and the complete meeting
      events of all robots.
      }
    \label{fig:single-bbox-evol}
    \vspace{-4mm}
  \end{figure*}
  % ====================================

% ====================================
\begin{figure}[!t]
  \centering
  \includegraphics[width=0.99\linewidth]{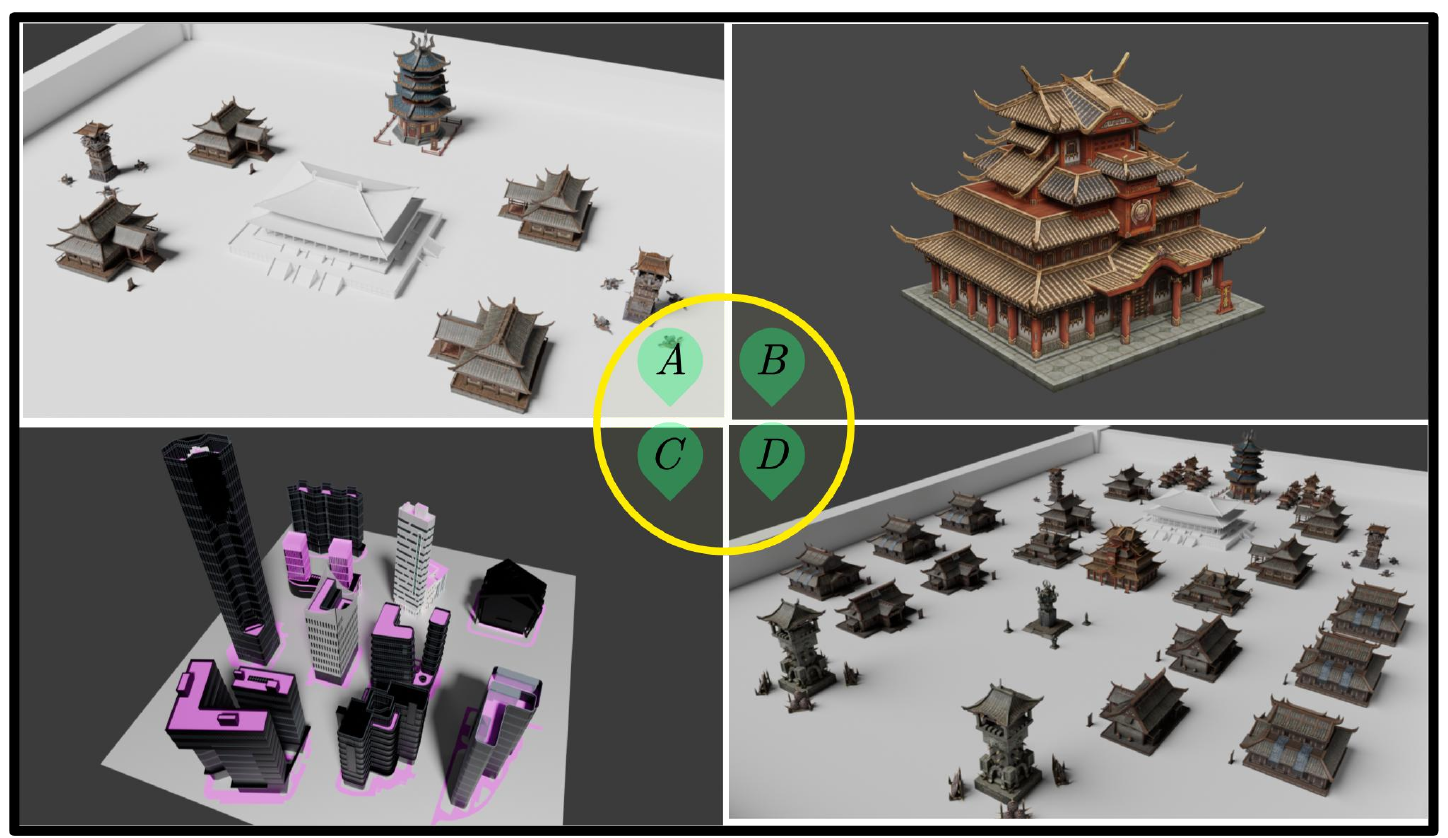}
  \centering
  \caption{Scenarios tested in the numerical experiments:
    Scenario-A with a set of small ancient structures;
    Scenario-B with a single ancient structure;
    Scenario-C with a medium-scale city structures;
    and Scenario-D with a large-scale ancient site.}
  \label{fig:sys-scene}
  \vspace{-4mm}
\end{figure}
% ====================================

%--------------------------------------
\subsubsection{Evaluation of the Layer of GCS-to-SubGroups}
\label{subsubsec:gcs-exp-evol}

As depicted in Fig.~\ref{fig:gcs-exp-evol},
the scenario-A includes~$1$ GCS,~$2$ explorers and~$4$ inspectors,
to explore an area of~$8$ BBoxes.
{To initiate the exploration process,
the GCS assigns two inspectors to each explorer as subgroups
based on the fleet size and number of BBoxes,
using the rolling assignment strategy.
Then, these subgroups are assigned to the nearest BBoxes with an average assignment time of~$1ms$.
Subsequently,
each explorer is guided by Alg.~\ref{alg:explore} with an average planning time of~$26ms$,
which dynamically adapts the exploration path based on the
current position, velocity and the confirmed meeting events~$\mathcal{C}$.
At the same time,
the {communication coordination algorithm}
has an average computation time of~$3.5ms$,
which determines the optimal visit sequence
between the GCS and the explorers.
For instance,
during the first meeting,
the GCS is set to visit BBox-1 to meet explorer~$2$,
then proceed to BBox-2 for explorer~$1$,
return to BBox-1 for another interaction with explorer~$2$,
and finally head to BBox-5 for explorer~$1$ again,
as shown in Fig.~\ref{fig:gcs-exp-evol}.
The prediction algorithm of the task completion time in~\eqref{eq:estimate-duration}
takes on average~$0.47ms$,
with an average prediction error around~$13.2s$.
The execution process can be further explained in the following timeline.
At~$t=54s$,
the GCS follows the confirmed meeting event with explorer~$2$,
and arrives early at the meeting location of BBox-1,
as shown in Fig.~\ref{fig:gcs-exp-result}.
At~$t=80s$, explorer~$2$ reaches the specified location to meet with GCS.
Similarly, the GCS waits for explorer~$1$ at~$t=97s$
at the meeting location of BBox-2 earlier than the scheduled time.
At~$t=146s$,
explorer~$1$ meets with the GCS and a new task to explore BBox-5 is assigned to it.
After meeting with GCS at~$t=312s$, explorer~$1$ sends the inspected features,
and GCS assigns the nearest BBox-3 as the new task to explorer~$1$.
After~$t=788s$, all~$150$ features are collected by the GCS.

  % ====================================
\begin{figure}[!t]
  \centering
  \includegraphics[width=1.0\linewidth, height=0.35\linewidth]{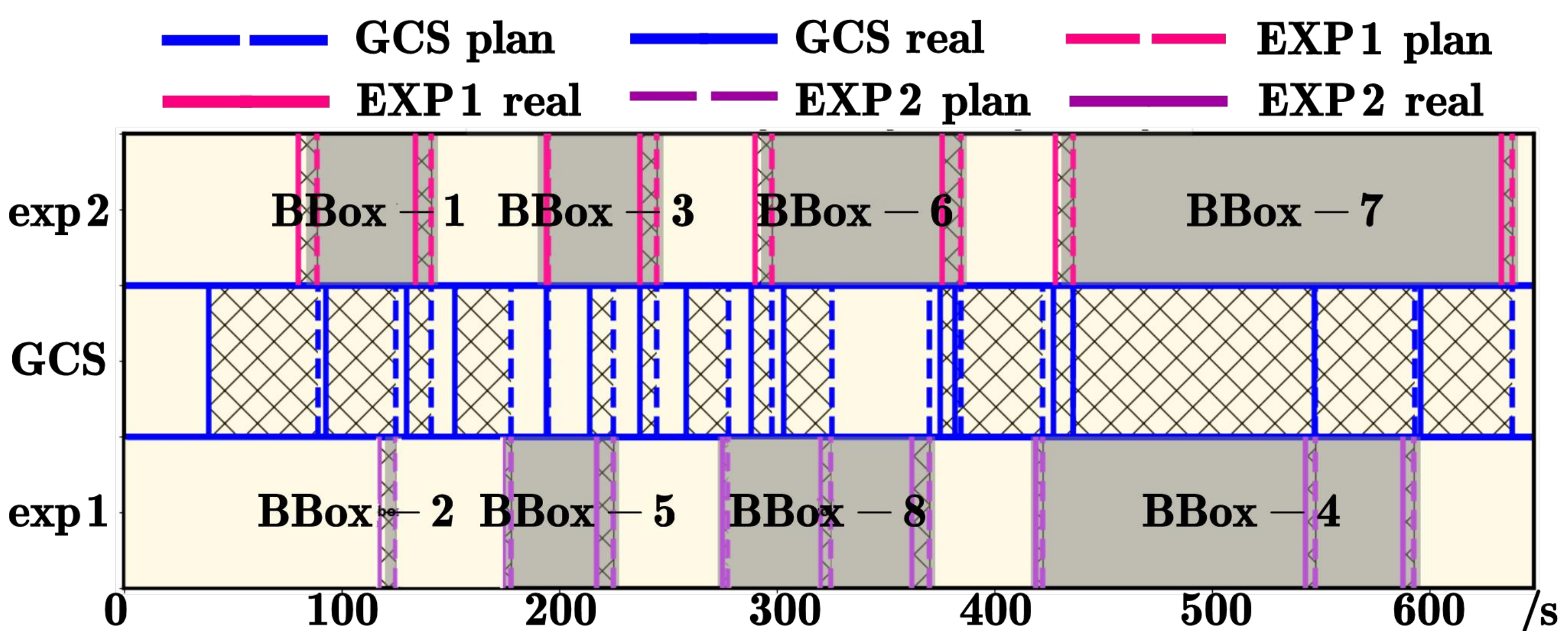}
  \centering
  \caption{
  The planned and actual meeting events between the GCS and two explorers
  in Scenario-A.
  The planned meeting time for the GCS (blue dashed lines)
  aligns with the explorers (pink and purple dashed lines),
  indicating the confirmed meeting event and the target BBox.
  The solid lines represent the actual arrival time,
  which are bound with the planned time of the same meeting
  by latticed areas.
  }
  \label{fig:gcs-exp-result}
  \vspace{-4mm}
\end{figure}
% ====================================

  % ====================================
  \begin{figure*}[!t]
    \centering
    \includegraphics[width=0.92\linewidth, height=0.2\linewidth]{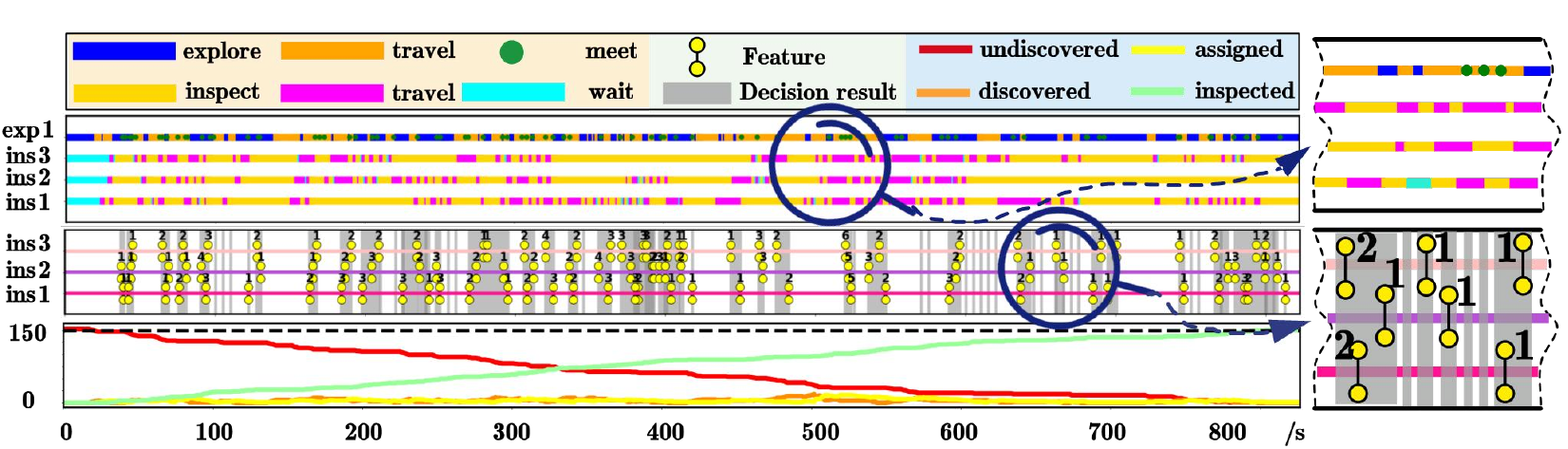}
    \centering
    \caption{
      \textbf{Top}: Evolution of the status of all robots within the subgroup in Scenario-B:
      explorers with exploring (deep blue), traveling (deep orange)
        and meeting (green);
      inspectors with inspecting (light orange), traveling (pink) and waiting (cyan).
      \textbf{Middle}: Planning results of explorers during the meetings
      with inspectors, represented by gray boxes.
      Hammer-shaped symbols with different inspectors
      along the timeline indicate the sequence of planned meeting events,
        with the number of features assigned to each inspector.
        \textbf{Bottom}: Progression of different status
        associated with the features over time:
        undiscovered (red), discovered by explorers (orange),
        assigned to inspectors (yellow),
        and inspected by inspectors (green).
    }
    \label{fig:exp-ins-result}
    \vspace{-4mm}
  \end{figure*}
  % ====================================

Moreover,
as illustrated in Fig.~\ref{fig:gcs-exp-result},
the number of meetings for each BBox is less than~$3$,
i.e., the proposed algorithm can accurately predict the completion time of
each exploration task.
Moreover, it is worth noting that the
actual arrival times of all explorers for the meeting events are consistently
earlier than the planned meeting time,
indicating that Alg.~\ref{alg:explore} can ensure the timely arrival within
the specified time window,
thus reducing the idle time for the GCS and the explorers.

% ====================================
\begin{figure*}[!t]
    \centering
    \includegraphics[width=0.9\linewidth, height=0.35\linewidth]{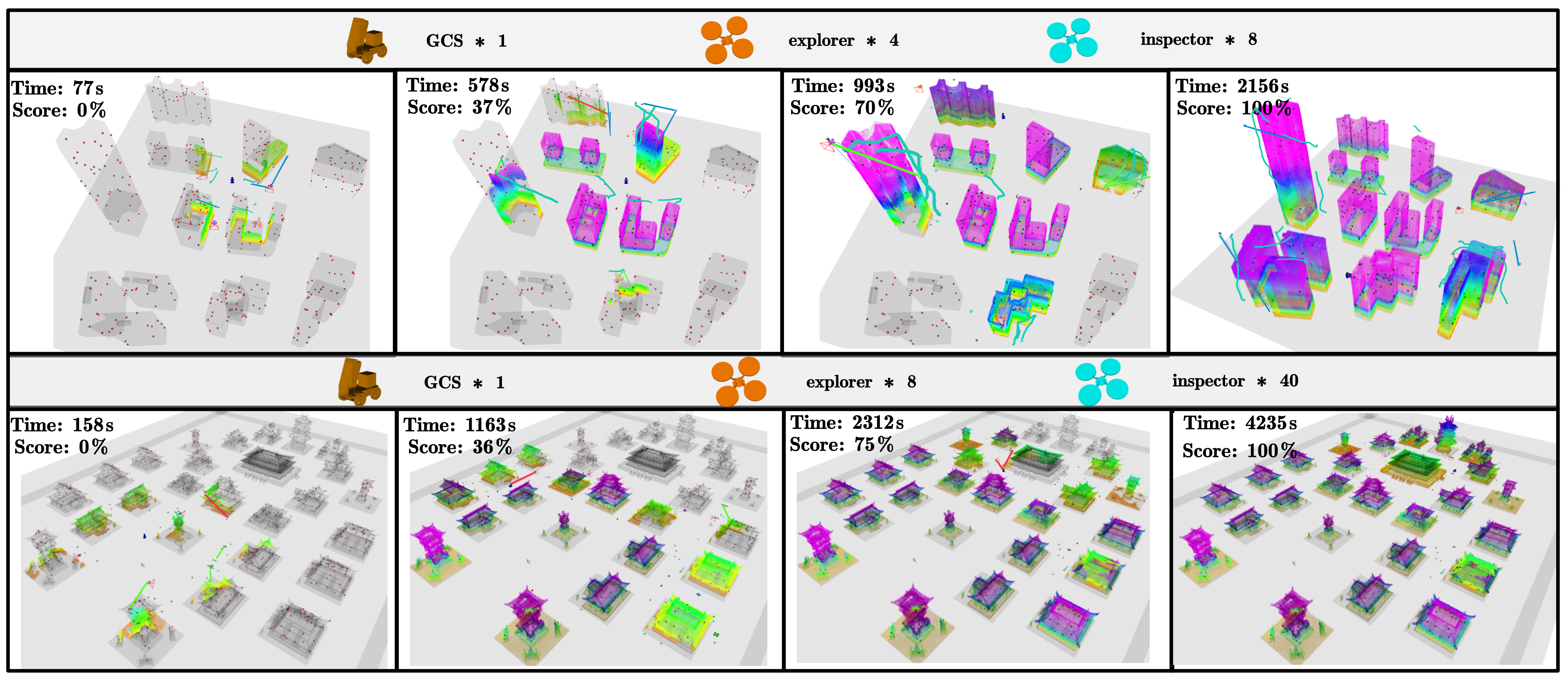}
    \centering
    \caption{
      Snapshots of overall execution for
      Scenario-C under~$1$ GCS,~$4$ explorers and~$8$ inspectors (\textbf{Top});
      and Scenario-D under~$1$ GCS,~$8$ explorers and~$40$ inspectors
      (\textbf{Bottom}).
    }
    \label{fig:modern-evol}
    \vspace{-6mm}
  \end{figure*}
% ====================================
%--------------------------------------
\subsubsection{Evaluation of the Layer of SubGroups}
\label{subsubsec:exp-ins-evol}

This section highlights the intra-group dynamics in Scenario-A,
as depicted in Fig.~\ref{fig:sys-scene},
and the influence of different inspection time~$t_q$
on the overall performance.

{(I) \textbf{Overall execution}}.
As illustrated in Fig.~\ref{fig:single-bbox-evol},
the robot motion and actions within the subgroup are monitored during the execution in Scenario-B,
along with the exploration and fitting of features.
The Alg.~\ref{alg:SOEI} is triggered every~$5s$ if new features~$\mathcal{F}^+$ are discovered
to optimize the subgroup solutions~$\Xi_i$ with an average computation time of~$1.384s$.
Specifically,
given the executing local plans~$\{\xi_j^-\}$ of all inspectors,
the explorer utilizes the~$\texttt{SmapleLos}(\cdot)$ function
which has an average runtime of~$1.3s$,
while the~$\texttt{OptMeet}(\cdot)$ has an average runtime of~$0.04ms$
to optimize the inspectors~$\widehat{\mathcal{N}}_i$ and the meeting events~$\mathbf{c}_i$.
Subsequently,
the explorer assigns the newly-fitted features to the selected inspectors
by invoking the modified~$\texttt{MVRP}(\cdot)$ algorithm
with an average runtime of~$0.08s$.
Lastly,
the local plans~$\{\xi_j^+\}$ of the inspectors are updated based on
the executing local plans~$\{\xi_j^-\}$ and assigned features~$\boldsymbol{\varphi}_i$,
by invoking the~$\texttt{Navi}(\cdot)$ with an average computation time of~$0.05s$.
During the communication phase,
the explorer transmits the allocated features~$\boldsymbol{\varphi}_i$ and
the updated local plans~$\{\xi_j^+\}$ to the respective inspectors,
ensuring the consistency of tasks across the subgroup.
Some representative moments are described as follows:}
At~$t=57s$, the explorer decides to meet with
the inspector~$1$ which is assigned~$2$ features,
then with the inspector~$3$ which is assigned~$2$ features,
and finally the inspector~$2$ which is assigned~$2$ features.
At~$t=466s$, the explorer meets with the inspector~$3$ first which is assigned~$1$ feature,
followed by the inspector~$1$ and assigned~$1$ feature.
The explorer avoids meeting with inspector~$2$ due to longer travel distance,
thereby reducing the total idle time.
At~$t=736s$, the explorer chooses to meet only inspector~$2$,
which is assigned the newly discovered features.
These features are in the proximity of inspector~$2$ and there are no remaining features to inspect.
At~$t=801s$, the explorer meets with no inspectors,
as all inspectors have ongoing inspection tasks and are far away.
Moreover, the entire trajectories of all robots within the subgroup
and the meeting locations of the subgroup are also shown.
It can be seen that the trajectory of the explorer
is rather complex due to the online adaptation
to meeting events with all inspectors,
whereas the trajectories of the inspectors are smoother
as the features are appended to their local plans incrementally.

Last but not least,
the online change of the status of each robot is shown in Fig.~\ref{fig:exp-ins-result},
e.g., ``explore'', ``travel'', ``inspect'' and  ``wait'';
the evolution of the number of ``registered'', ``found'',
``allocated'', and ``inspected'' features;
and the communication events between the explorer and inspectors.
It can be seen that
the explorer spends~$60.35\%$ of the time to explore and fit AoIs across the entire scenario.
The inspectors primarily travel to features for inspection
with an average waiting time of~$33s$.
In addition,
the middle figure shows that the explorer meets with  all, some or none
of the inspectors at different moments,
with the average communication cycle of~$6.8s$.
The bottom sub-figure shows that~$150$ features are fully inspected
within~$1006s$, yielding a~$100\%$ coverage.

(II) \textbf{Different Inspection Time}.
To evaluate the effect of different inspection times~$t_q$
on the performance of our proposed method,
we have conducted a series of experiments with different~$t_q$,
ranging from~$1s$ to~$10s$.
The metric to compare includes
the average number of features assigned per meeting.
{The results show that}
the number of assigned features {monotonically} decreases from $3.11$ to $1.65$ as~$t_q$ increases
from~$1s$ to~$8s$.
Thus, more frequent meetings are required if it takes shorter duration
to inspect the features.

% % ============================

% \begin{figure}[!t]
%     \centering
%     \includegraphics[width=0.8\linewidth, height=0.35\linewidth]{figures/different-ins-time.pdf}
%     \centering
%     \caption{The impact of different inspection time~$t_q$ on the average number of features assigned per meeting.
%     $f$ represents total allocated features and~$m$ is the total number of meetings.}
%     \label{fig:diff-ins-time}
%     \vspace{-4mm}
%   \end{figure}
% % ============================

%--------------------------------------
\subsubsection{Full-process Simulation}
\label{subsubsec:full-process-sim}

To validate the complete performance as shown in Fig.~\ref{fig:modern-evol},
large-scale simulations are conducted under different
sizes of workspace and robotic fleets:
Scenario-C with~1 GCS, 4 explorers and 8 inspectors,
and Scenario-D with~1 GCS, 8 explorers and 40 inspectors.
As described earlier, the two layers are integrated with different frequencies.
In Scenario-C,
the average meeting cycle is~$40.6s$ between GCS and explorers,
and the average meeting cycle is~$6.8s$ between explorers and inspectors.
In contrast to Scenario-D,
these rates are $63.2s$ and $5.5s$, respectively.
The rolling assignment strategy further improves the efficiency
of different subgroups by dynamically reallocating BBoxes.
In Scenario-C, the explorers complete tasks at $578s$ and $993s$,
which are then reassigned to nearby BBoxes to minimize idle time.
Similarly, in Scenario-D, explorers are reallocated at $1163s$ and $2321s$.
{On average,
each explorer is assigned $2.5$ BBoxes in Scenario-C and $3.3$ BBoxes in Scenario-D,
with an average mission time of~$529s$ and~$513s$, respectively.
This indicates that the proposed method can effectively balance
the task allocation for each subgroup across different scenarios.}
It is worth noting that the explorers achieve a $100\%$ success rate
in reaching meeting locations on time in both scenarios,
while the GCS occasionally experiences minor delays in Scenario-D
due to overlapping tasks in cluttered environments.
Moreover,
the average number of communications between the GCS and each subgroup is less than~$3$ in both scenarios,
indicating that the proposed algorithm is effective in predicting the completion time.
The layer of SubGroups also demonstrates adaptability in complex environments,
such as cluttered buildings in Scenario-C
and intricate structures in Scenario-D.
In both cases, the proposed method achieves $100\%$ feature coverage
for $1006s$ in Scenario-C and $2006s$ in Scenario-D.

The overall efficiency is influenced by multiple factors,
with the exploration efficiency of the explorers
being particularly critical for enhancing
the feature discovery and the inspection process.
A key metric quantifying exploration efficiency is the overlap rate, 
which reflects redundant coverage across robots.
In our framework,
the inter-group exploration overlap
is fully eliminated via the pre-assigned bounding boxes (BBoxes)
for the dedicated explorers in the robotic fleet.
However, intra-group operations by single explorer rather than multi-explorer exploration exhibit an average $15\%$ overlap rate
across the four scenarios, i.e., $20\%, 10\%, 13\%, \text{and}~17\%$ in Scenario A--D respectively.
This result aligns with the state-of-the-art overlap rate for single-robot exploration in~\cite{wang2025multi}.
By enforcing a strict zero inter-group overlap rate and suppressing intra-group overlap to minimal levels, 
the proposed approach significantly enhances exploration efficiency through systematic reduction of redundant exploration.

% ============================

\begin{figure*}[!t]
  \centering
  \includegraphics[width=0.99\linewidth]{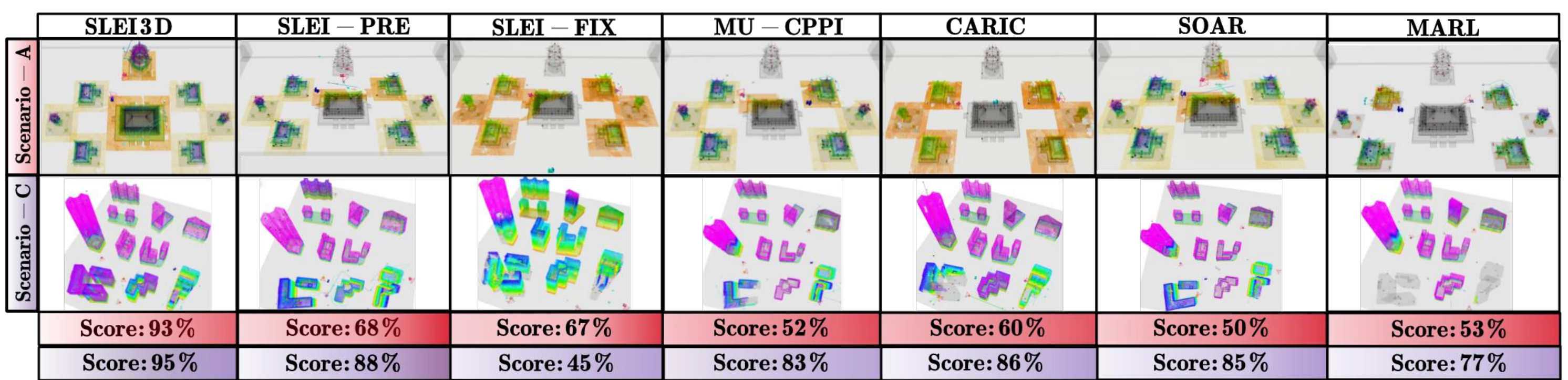}
  \centering
  \caption{
    Comparison of the final score between
    the proposed method (SLEI3D)
    and six baselines in Scenario-A and Scenario-C.
  }
  \label{fig:all-mtds-evol}
  \vspace{-6mm}
\end{figure*}
% ============================

\subsubsection{Maximum Fleet Size under Communication Constraints}

Firstly,
our experimental measurements demonstrate that the average communication bandwidth per interaction
between the GCS and subgroups in Scenarios~B--D remains on average~$1.9$Mbps ($2.018$Mbps vs.~$1.755$Mbps vs.~$1.940$Mbps).
This bandwidth consumption exhibits direct proportionality to the average transmitted features per meeting($11$ vs.~$9.5$ vs.~$10.5$).
Notably, the system maintains this bandwidth efficiency ($\mathbf{1.9}$Mbps)
even under dense feature conditions inherent to our problem configuration,
demonstrating the algorithm's capability for compact information encoding while preserving operational fidelity.
{The framework adopts a \textbf{single-pair-per-link} paradigm for communication asynchrony,
where the pairwise interactions strictly avoid simultaneous multi-robot communication over 
the shared bandwidth in an instantaneous and centralized manner.
As a result, it does not introduce additional constraints on the feasible fleet size, beyond those 
imposed by the underlying communication hardware and mission environment.
}

% ========================================
% =================================
\subsection{Comparisons}
\label{subsec:exp-comp}

To further validate the effectiveness of the proposed method (as \textbf{SLEI3D}),
a quantitative comparison is conducted against \textbf{six} strong baselines:

\begin{enumerate}[label=(\roman*)]
    \item \textbf{CARIC},
          which is based on a hierarchical framework for simultaneous exploration and inspection
          proposed in~\cite{xu2024cost}.
          The robots are partitioned into teams, where each team is independently assigned to specific sub-regions,
          and then relays the inspection results to a static GCS under the LOS communication.
          For a fair comparison,
          a modification is added such that the GCS can move to communicate with the explorers.
    \item \textbf{SOAR},
          which is built upon the work in~\cite{zhang2024soar}.
          Only one explorer identifies the unknown areas and generates viewpoints
          that are subsequently assigned to inspectors through global all-to-all communication.
          However, it cannot handle scenarios involving multiple explorers and inspectors under limited communication,
          nor does it account for the need to return information back to the GCS.
          To ensure a fair comparison with our method,
          SOAR is only adopted for the task of assigning features from explorers to inspectors,
          while the rest of components follows the proposed framework
          to be compatible with the problem settings.

    \item \textbf{MU-CPPI},
      which is based on the exploration-then-inspection
      framework proposed in~\cite{jing2020multi},
      The robots first explore the entire environment,
      and then the local plans of inspection are generated
      based on the exploration results.
      However,
      as this approach does not account for a movable GCS,
      limited communication constraints
      and simultaneous exploration and inspection.
      Therefore,
      above method can be only leveraged for the subgroups layer
      to guide explorers to meet with the inspectors in the subgroup.
      The rest of components follow the proposed framework
      to match the problem settings.

    \item \textbf{MARL},
          a multi-robot strategy for collaborative exploration via multi-agent reinforcement learning (MARL)
          as seen in~\cite{battocletti2021rl, zhu2024maexp, chen2024meta}.
          These works develop end-to-end coordination strategies that jointly
          optimize exploration policies and collision-free motion planning.
          However, they exclusively focus on the exploration phase within our problem formulation,
          without addressing communication requirements between inter-and-intra groups.
          Thus, the MARL module is adopted only to replace the exploration strategy for explorers as presented in Sec.~\ref{subsec:explore}
          while retaining other components of the proposed framework
          to ensure system compatibility.

    \item \textbf{SLEI-PRE},
          where all BBoxes are initially allocated to the explorers,
          i.e., each explorer follows a fixed order to explore the designated BBoxes,
          instead of relying on dynamic allocation by the GCS.

    \item \textbf{SLEI-FIX},
          where the GCS remains stationary at its initial position throughout the entire process.
          Consequently, the explorers are required to return to the GCS at fixed intervals
          for communication.
\end{enumerate}

\begin{table*}[!t]
  \centering
  \caption{\\\uppercase{Comparison of baselines across two scenarios.}}
  \begin{tabularx}{\textwidth}{c c *{4}{C} *{4}{C} *{4}{Z} *{4}{Y} *{4}{Z}}
  \toprule\midrule
    \multirow{2}{*}{\textbf{Scene}} & \multirow{2}{*}{\textbf{Method}} & \multicolumn{4}{c}{\textbf{Finish Rate (\%)}}
                                                                        & \multicolumn{4}{c}{\textbf{Finish Time (s)}}
                                                                        & \multicolumn{4}{c}{\textbf{GCS-EXP Meeting (\#)}}
                                                                        & \multicolumn{4}{c}{\textbf{EXP-INS Meeting (\#)}}
                                                                        & \multicolumn{4}{c}{\textbf{Total Waste Time (s)}} \\
                                    &                                  & \textbf{Avg} & \textbf{Std} & \textbf{Max} & \textbf{Min}
                                                                        & \textbf{Avg} & \textbf{Std} & \textbf{Max} & \textbf{Min}
                                                                        & \textbf{Avg} & \textbf{Std} & \textbf{Max} & \textbf{Min}
                                                                        & \textbf{Avg} & \textbf{Std} & \textbf{Max} & \textbf{Min}
                                                                        & \textbf{Avg} & \textbf{Std} & \textbf{Max} & \textbf{Min} \\
    \midrule
    \multirow{6}{*}{\begin{tabular}[c]{@{}c@{}}Scenario\\A\end{tabular}}
      & CARIC                              & 97.6 & 2.1 & 100 & 96 & 1009 & 29.6 & 1037 & 978 & 7.3 & 0.6 & 8 & 7 & 57.3 & 1.15 & 58 & 56 & 1967 & 72.4 & 2018.3 & 1884.2\\
      & SLEI-PRE                               & 92 & 2 & 94 & 90 & 988.7 & 116.3 & 1100 & 868 & 10.3 & 0.6 & 11 & 10 & 90 & 6.1 & 97 & 86 & 2259.4 & 292.9 & 2586.2 & 2020.5 \\
      & SOAR                             & 73 & 1 & 74 & 72 & 1110 & 141.1 & 1254 & 972 & 10.3 & 1.5 & 12 & 9 & 174 & 14.1 & 189 & 161 & 2716.1 & 603.9 & 3304.6 & 2097.8 \\
      & MU-CPPI                            & 83.7 & 2.5 & 86 & 81 & 1239 & 72.4 & 1322 & 1189 & 9.7 & 1.5 & 11 & 8 & 83 & 4.4 & 86 & 78 & 3104.1 & 467.9 & 3616.8 & 2700.1\\
      & SLEI-FIX                             & 96 & 2.7 & 98 & 93 & 939.7 & 66.30 & 1002 & 870 & 10 & 0 & 10 & 10 & 52.7 & 1.2 & 54 & 52 & 1551.9 & 85.9 & 1636.6 & 1464.8\\
      & MARL                             & 92.4 & 4.2 & 97.1 & 87.5 & 932.5 & 120.3 & 1060 & 800.3  & 13.5  & 4.2  & 18  & 9  & 93.5 & 13.2 & 108 & 88 & 2218.8 & 512.9 & 3000.8  & 1737.2 \\
      & \textbf{SLEI3D}                            & \textbf{94.0} & 1 & 94 & 93 & \textbf{743.7} & 208.5 & 976 & 573 & \textbf{13.7} & 3.5 & 17 & 10 & \textbf{84.7} & 11.9 & 98 & 75 & \textbf{1434.5}  &17.0  &1454.1  &1423.6 \\
    \midrule
    \multirow{6}{*}{\begin{tabular}[c]{@{}c@{}}Scenario\\C\end{tabular}}
      & CARIC                             & 98.7 & 0.6 & 99 & 98 & 2659 & 317.5 & 3007 & 2385 & 8.7 & 0.6 & 9 & 8 & 182.7 & 11.1 & 194 & 172 & 10032 & 2178.2 & 12288 & 7940.4 \\
      & SLEI-PRE                              & 95.3 & 1.2 & 96 & 94 & 1767 & 102.9 & 1882 & 1684 & 38.3 & 1.2 & 39 & 37 & 199.3 & 6.8 & 207 & 194 & 8085.5 & 411.1 & 8459.7 &  7645.4  \\
      & SOAR                           & 94 & 1 & 95 & 93 & 1908 & 54.2 & 1954 & 1848 & 43 & 3 & 46 & 40 & 415.3 & 11.2 & 425  & 403 & 8542.1 & 616.3 & 9229.8 & 8039.9 \\
      & MU-CPPI                           & 94.7 & 0.6 & 95 & 94 & 2117 & 51.5 & 2172 & 2070 & 44 & 10.2 & 55 & 35 & 165.3 & 29.1 & 198 & 142  & 9725.3  & 1048.1  & 10742  & 8648.6\\
      & SLEI-FIX                            & 96.7 & 1.2 & 98 & 96 & 2462 & 365.5 & 2870 & 2165 & 14 & 0 & 14 & 14 & 188.7 & 25.5 & 218 & 172 & 8809.4 & 490.1 & 9252.3 & 8282.9 \\
      & MARL                            & 77.8  & 5.5 & 85 & 70  & 1648 & 210.8 & 2000 & 1314 & 37 & 0  & 37 & 37 & 162.7  & 28.6  & 200 & 130 & 8479.1  & 504.2  & 9000.1  & 7800.5  \\
      & \textbf{SLEI3D}                 & \textbf{96.7} & 1.5 & 98 & 95 & \textbf{1699} & 66.7 & 1776 & 1658 & \textbf{42} & 3.5 & 44 & 38 & \textbf{279.7} & 80.8 & 353 & 193 & \textbf{7920.3} & 733.6 & 8767.4 & 7496.8 \\
     \midrule\toprule
  \end{tabularx}
  \label{tab:result}
  \vspace{-4mm}
\end{table*}
% ============================

    % ============================
    \begin{figure}[!t]
      \centering
      \includegraphics[width=0.95\linewidth]{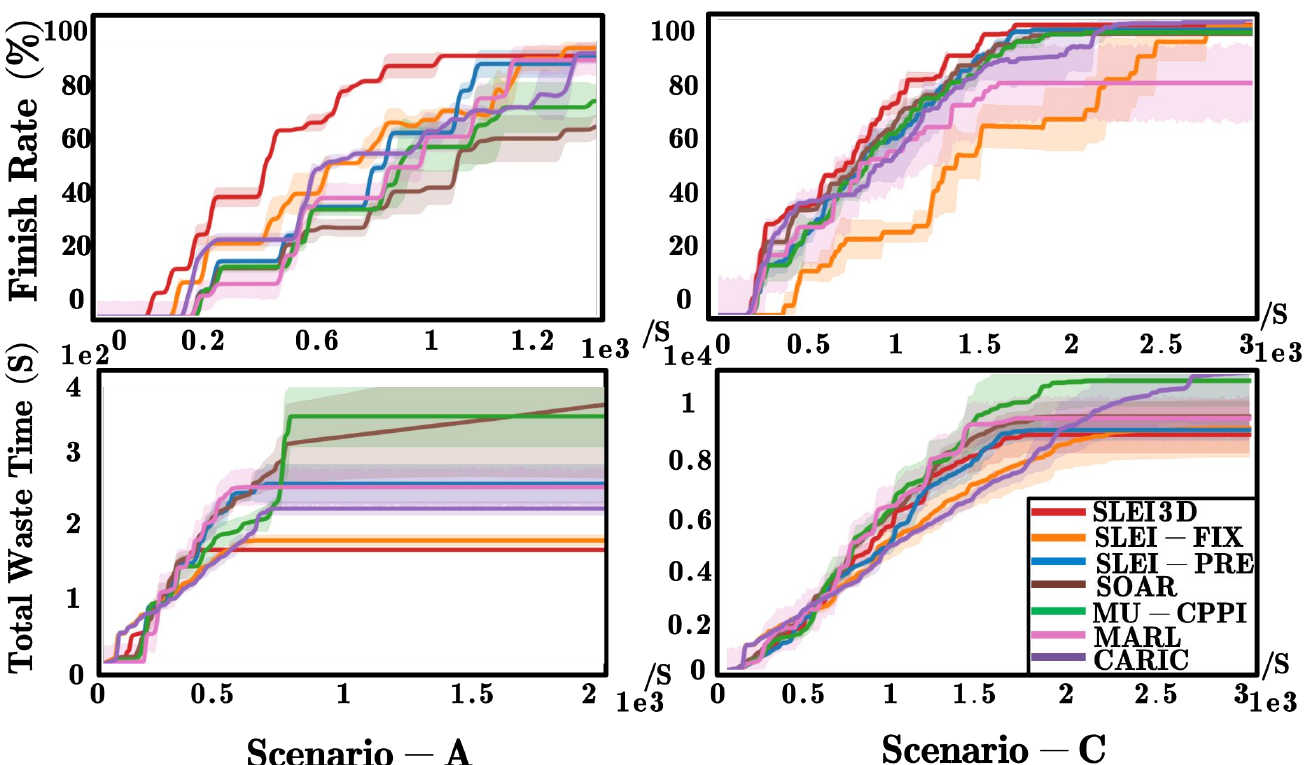}
      \centering
      \caption{
      Comparison of the number of collected features over time,
      between the proposed method and six baselines over three runs.}
      \label{fig:feature-waste-time-result}
      \vspace{-4mm}
    \end{figure}
    % ============================

    %========================================

%========================================

The above methods are evaluated across both scenario-A and scenario-C,
each of which $3$ tests are conducted.
The experimental results are summarized in Fig.~\ref{fig:all-mtds-evol} and Table~\ref{tab:result}.
Collected features by the GCS over time are shown in Fig.~\ref{fig:feature-waste-time-result},
where the execution time for scenario-A is~$600s$ and~$1660s$ for scenario-C.
For both scenarios,
SLEI3D takes the least amount of time~$743.7s$ and~$1699s$ to collect all features
(thus the highest efficiency) in scenario-A and scenario-C,
compared to CARIC ($1009s$ and~$2659s$),
SOAR ($1110s$ and~$1908s$),
MU-CPPI ($1239s$ and~$2117s$),
MARL ($932.5s$ and~$1648s$),
SLEI-PRE ($988.7s$ and~$1767s$),
and SLEI-FIX ($939.7s$ and~$2462s$).
Furthermore,
it is evident that employing a movable GCS
facilitates faster task completion and less idle time,
compared with SLEI-FIX ($939.7s$ and~$2462s$).
In addition,
while both SLEI3D and SLEI-PRE achieve similar finish rates
(($94.0\%$ and~$96.7\%$) vs.~($92\%$ and~$95.3\%$)),
the dynamic allocation of BBoxes
allows for much less idle time, compared with SLEI-PRE.
SLEI3D also facilitates more effective collaboration between explorers and inspectors.
This is evident by the higher number of meeting events between explorers and inspectors,
particularly in scenario-C, where SLEI3D achieves an average of $279.7$ meetings,
compared to $199.3$ for SLEI-PRE and $188.7$ for SLEI-FIX.
This difference is even more pronounced in comparison to CARIC and SLEI3D.
While CARIC adopts a simpler intra-group communication strategy,
it lacks the adaptability for complex tasks.
SLEI3D achieves faster task completion
(($743.7s$ and $1699s$) vs. ($1009s$ and $2659s$))
and less idle time (($1434.5s$ and $7920.3s$) vs. ($1967s$ and $10032s$)).
  Additionally, MARL trained specifically on scenario-A, exhibits significant performance degradation when transferred to scenario-C.
  This is evident from the substantial drop in success rate (from 92.4\% to 77.8\%) between the two scenarios,
  as shown in Table~\ref{tab:result}.
  In contrast, our approach demonstrates superior generalization capability with consistent success rates
  across both scenarios (94.0\% vs. 96.7\%).
  This performance disparity highlights the inherent limitations of pure learning-based methods like MARL,
  which tend to overfit to their training environments.
  Our hybrid architecture effectively mitigates this issue through its adaptive task allocation mechanism
  and geometric-aware coordination,
  enabling robust performance in both familiar and novel scenarios.
  Furthermore, MARL shows high instability with large success rate deviations (4.2\% vs. 5.5\%),
  while SLEI3D demonstrates consistent robustness across runs (1.0\% vs. 1.5\%).
Lastly,
SLEI3D achieves the lowest idle time in both scenarios,
with~$1434.5s$ and~$7920.3s$ in scenario-A and scenario-C, respectively,
compared to CARIC ($1967s$ and~$10032s$),
SOAR ($2716.1s$ and~$8542.1s$),
MU-CPPI ($3104.1s$ and~$9725.3s$),
MARL ($2218.8s$ and~$8479.1s$),
SLEI-PRE ($2259.4s$ and~$8085.5s$),
and SLEI-FIX ($1551.9s$ and~$8809.4s$).
In summary,
SLEI3D effectively addresses the challenges posed by the state-of-the-art frameworks
such as CARIC, SOAR and MARL,
as well as the SLEI variants SLEI-FIX and SLEI-PRE.
% By integrating a movable GCS, dynamic BBoxes allocation and intra- and inter-group adaptive strategies,
% SLEI3D ensures efficient task execution, reduced communication delays,
% and improved scalability, establishing itself as a robust and versatile solution
% for dynamic exploration and inspection tasks.

%==============================
\usetikzlibrary{patterns}

  \pgfplotstableread[col sep=comma]{
    type,          ours,     caric,     soar,     mu-cppi
    Scenario-A,    0.118,    0.0307,    0.0535,   0.2655
    Scenario-C,    1.4225,   0.1611,    0.0641,   0.0491
    Scenario-D,    0.357,    0.4465,    0.081,    0.2335
  }\Mtable

\definecolor{v0}{rgb}{0.1, 0.3, 0.8}          % 修改后的蓝色
\definecolor{v1}{rgb}{0.6, 0.4, 0.1}          % 修改后的棕色
\definecolor{v2}{rgb}{0.2, 0.7, 0.4}          % 修改后的浅绿色
\definecolor{v3}{rgb}{0.7, 0.1, 0.5}          % 修改后的紫色
\definecolor{v4}{rgb}{1.0,1.0. 0.0}           % 修改后的
\definecolor{v5}{rgb}{0.7, 0.7, 0.7}          % 修改后的绿色

\pgfplotsset{/pgfplots/error bars/error bar style={thick, black}}
%==============================
\begin{figure}[t]
	\centering

    % ===========================================================
    \vspace{2mm}
    \begin{subfigure}[t]{0.50\textwidth}
      \begin{tikzpicture}
        \begin{semilogyaxis}[
          width=\linewidth,
          ybar,
          bar width=5pt,
          % ymode=log,
          log origin = infty,
          ymin=0.00001,
          ymax=1.5,
          xtick pos=bottom,
          ytick={0.00001,0.0001,0.001, 0.01, 0.1, 1},
          yticklabel style = {font=\scriptsize},
          % ytick pos=left,
          minor y tick num=10,
          % ytick align=outside,
          width=1.0\linewidth,
          height=0.30\linewidth,
          ymajorgrids=true,
          enlarge x limits={abs=35pt},
          legend style={at={(0.45,1.15)},
          anchor=south,legend columns=4, font=\footnotesize},
          ylabel={Comput. Time},
          y label style={at={(axis description cs:0.07,0.5)},anchor=south, font=\scriptsize},
          symbolic x coords={Scenario-A, Scenario-C, Scenario-D},
          x label style={font=\tiny},
          xtick=data,
          cycle list={v5, v4, v3, v2, v1, v0},
        ]

            \addplot+[draw=black,fill,postaction={
              pattern=crosshatch
          }] table[x=type,y=ours]{\Mtable};
          \addlegendentry{SLEI3D}

            \addplot+[draw=black,fill,postaction={
              pattern=crosshatch
          }] table[x=type,y=caric]{\Mtable};
          \addlegendentry{CARIC}

            \addplot+[draw=black,fill,postaction={
              pattern=crosshatch
          }] table[x=type,y=soar]{\Mtable};
          \addlegendentry{SOAR}

            \addplot+[draw=black,fill,postaction={
              pattern=crosshatch
          }] table[x=type,y=mu-cppi]{\Mtable};\
          \addlegendentry{MU-CPPI}

      \end{semilogyaxis}
      \end{tikzpicture}
    \end{subfigure}
  \caption{
        Comparison of computation time between SLEI3D and other methods
        in different scenarios.
  }
  \label{fig:comp-computation}
\vspace{-5mm}
\end{figure}
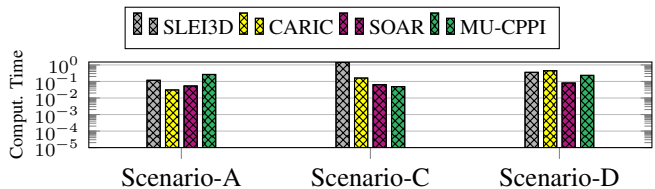
%==============================

Furthermore, 
as Fig.~\ref{fig:comp-computation} illustrates,
the comparative analysis of computation time with baseline methods (CARIC, SOAR and MU-CPPI) reveals SLEI3D exhibits higher average computation times
(0.6325\,s vs. 0.2128\,s vs. 0.0662\,s vs. 0.1827\,s) across scenarios,
primarily due to the $\texttt{SOEI}(\cdot)$ optimization.
MU-CPPI achieves the shortest processing times, 
whereas SLEI3D exhibits stable computation times ($0.118–1.4225s$). 
Despite higher computational costs, 
SLEI3D’s method yields significant performance improvements as analyzed in Table~\ref{tab:result}.

% ========================================
\subsection{Generalizations}
\label{subsec:ext-exp}

%%--------------------------------------
\subsubsection{Robot Failure}
\label{subsubsec:exp-robot-failure}

{As described in Sec.~\ref{sec:discuss}},
the proposed method can detect and recover from potential robot failures
of explorers and inspectors during execution.
Particularly, the parameter~$\overline{\delta}$ is set to~$10s$.
% To validate this,
% consider explorer failure in Scenario-B,
% and inspector failure in scenario-A.

{(I) \textbf{Explorer Failure}}.
Consider the same setting as in Sec.~\ref{subsubsec:gcs-exp-evol}
with~$1$ GCS,~$2$ explorers and~$4$ inspectors in Scenario-A,
where one explorer fails online.
The final meeting results between GCS and explorers are shown in Fig.~\ref{fig:exp-fail}.
Explorer~$1$ fails at~$t=338s$ \emph{before} the meeting with GCS at BBox-3.
This failure is then detected by GCS at the predefined time and location after waiting for~$10s$.
Then,
the GCS directly proceeds to the next meeting event at BBox-8 to communicate with explorer~$2$.
As shown in Fig.~\ref{fig:exp-fail},
the waiting period of GCS results in a slight delay of its arrival at BBox-8
compared to the planned time.
Following the proposed failure recovery mechanism,
the whole subgroup belonging to explorer~$2$ is reassigned
to BBox-3 to restart the task,
as the failed explorer~$1$ is unable to relay the inspection results of BBox-3 to the GCS.
The overall mission is completed at~$1215s$,
which is slightly longer than the~$788s$ in Fig.~\ref{fig:gcs-exp-evol}
without failures.

% ====================================
\begin{figure}[!t]
  \centering
  \includegraphics[width=0.98\linewidth]{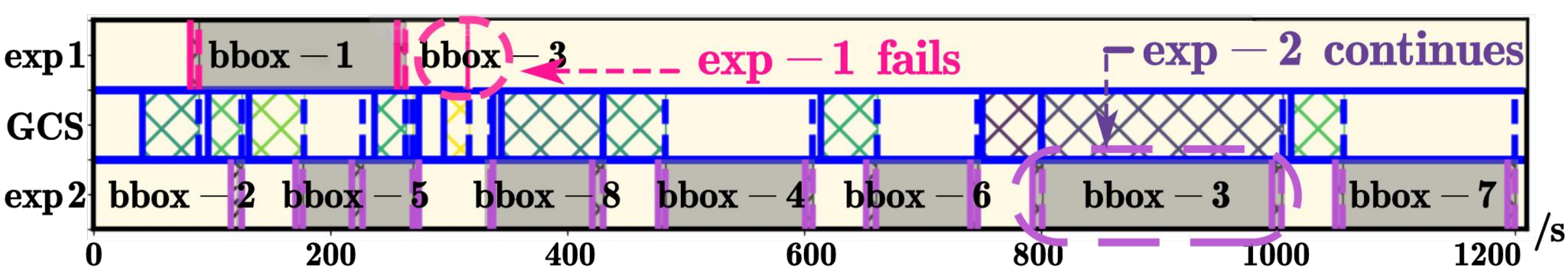}
  \centering
  \caption{Failure detection and recovery
  {when explorer~$1$
    fails at~$t=338s$ in Scenario-A}.}
  \label{fig:exp-fail}
  \vspace{-5mm}
\end{figure}
% ====================================

% ====================================
\begin{figure}[!t]
  \centering
  \includegraphics[width=0.98\linewidth]{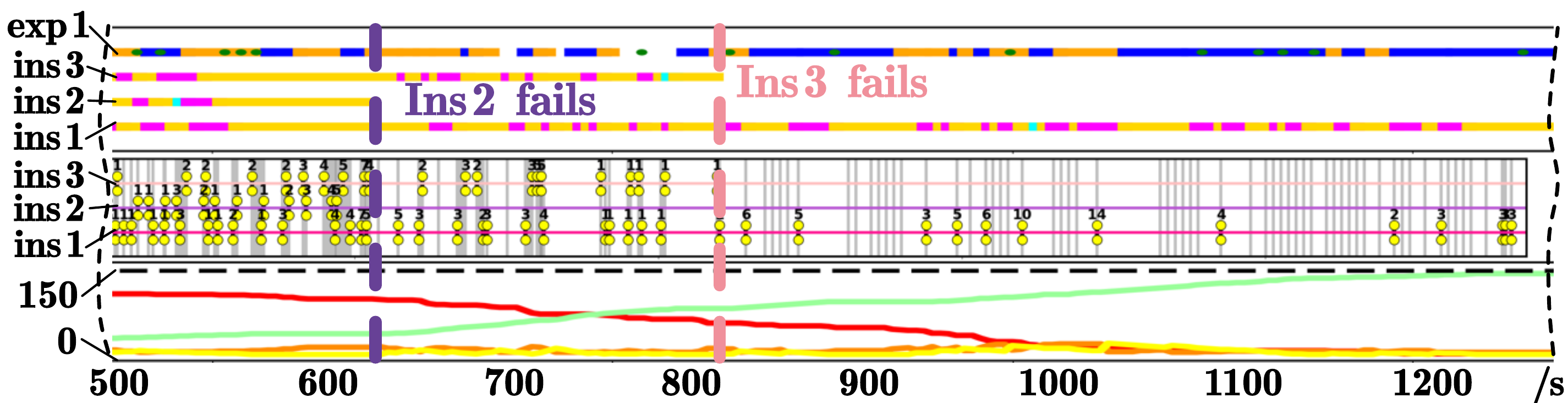}
  \centering
  \caption{Failure detection and recovery
  {when inspector~$2$ fails first at~$t=683s$,
    followed by inspector~$3$ fails at~$t=825s$ in Scenario-B}.}
  \label{fig:ins-fail}
  \vspace{-4mm}
\end{figure}
% ====================================

{(II) \textbf{Inspector Failure}}.
Consider the same setup as in Sec.~\ref{subsubsec:exp-ins-evol},
but two inspectors fail consecutively during execution.
The overall changes in robot state,
decision-making results, and completion status of features
are summarized in Fig.~\ref{fig:ins-fail}.
Inspector~$2$ fails at~$t=683s$ \emph{before} meeting with the explorer,
of which is detected by the explorer after waiting for~$10s$ at the end of its local plan.
Then, inspector~$2$ is excluded from the set of active inspectors that are known to the explorer.
In addition,
it is clear that once the explorer detects the failure of inspector~$2$,
no further features are assigned to inspector~$2$.
After inspector~$3$ fails at~$t=825s$,
the explorer behaves similarly to detect this failure.
Lastly,
the overall mission is completed at~$t = 1380s$,
which is slightly higher than the~$1006s$ in the nominal scenario
without failures.
It is worth noting that
after the failures of inspectors~$2$ and~$3$,
the explorer tends to have fewer meetings but assign more features to inspector~$1$.

% =====================================
\subsubsection{Multiple GCS}
\label{subsubsec:exp-multiple-gcs}

%--------------------------------------
\begin{figure}[!t]
  \centering
  \includegraphics[width=0.99\linewidth]{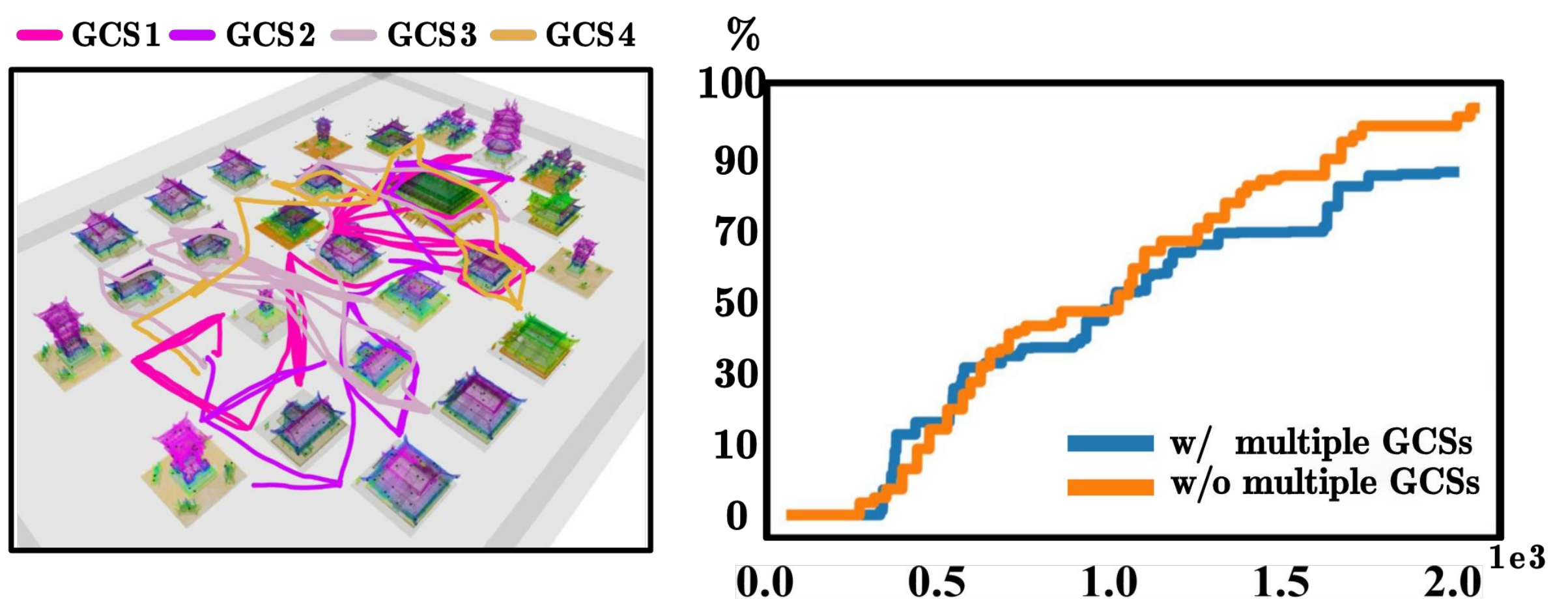}
  \centering
  \caption{Scenario-D with~$4$ GCS, $8$ explorers and $40$ inspectors:
  final meeting trajectories of all GCSs (\textbf{Left});
  comparison of collected features over time (\textbf{Right}).
  }
  \label{fig:multi-gcs}
  \vspace{-5mm}
\end{figure}
%--------------------------------------

The proposed SLEI3D algorithm is evaluated in Scenario-D under two configurations:
one leveraging a multi-GCS setup with~$4$ GCSs, $8$ explorers, and~$40$ inspectors,
and the other employing a single-GCS configuration.
Specifically,
each GCS is pre-assigned to specific subsets of BBoxes
and a subset of explorers, each of which acts independently.
The final trajectories of all GCSs
and the collected features along with time
are shown in Fig.~\ref{fig:multi-gcs}.
It is evident that each GCS successfully meets with the assigned explorers,
thus collecting all features.
However,
it is interesting to notice that SLEI3D with multiple GCSs initially achieves a
faster feature collection compared to its single-GCS counterpart.
However, as the process progresses,
the SLEI3D catches up and eventually surpasses the performance of
the multi-GCS configuration.
This is due to the fact that all explorers start from the same location
and are assigned BBoxes of similar sizes.
In other words, the completion times of these BBoxes are almost
synchronized via the proposed routing scheme for multiple GCSs,
which allows each GCS to concurrently collect features from its assigned BBoxes.
In contrast, {SLEI3D} can only sequentially collect features from multiple BBoxes,
yielding a much slower collection of features.
As the mission proceeds,
the subsets of BBoxes to be explored varies significantly in volume,
yielding a diverse distribution of completion time.
Consequently, the benefits of having multiple GCSs diminish
as the rate of feature collection becomes less dependent on
the distributed coordination.
Thus, it shows that the multi-GCS counterpart is particularly suitable
for scenarios requiring simultaneous feature collection
across a wide distribution of locations.

%--------------------------------------
\begin{figure}[!t]
  \centering
  \includegraphics[width=0.99\linewidth]{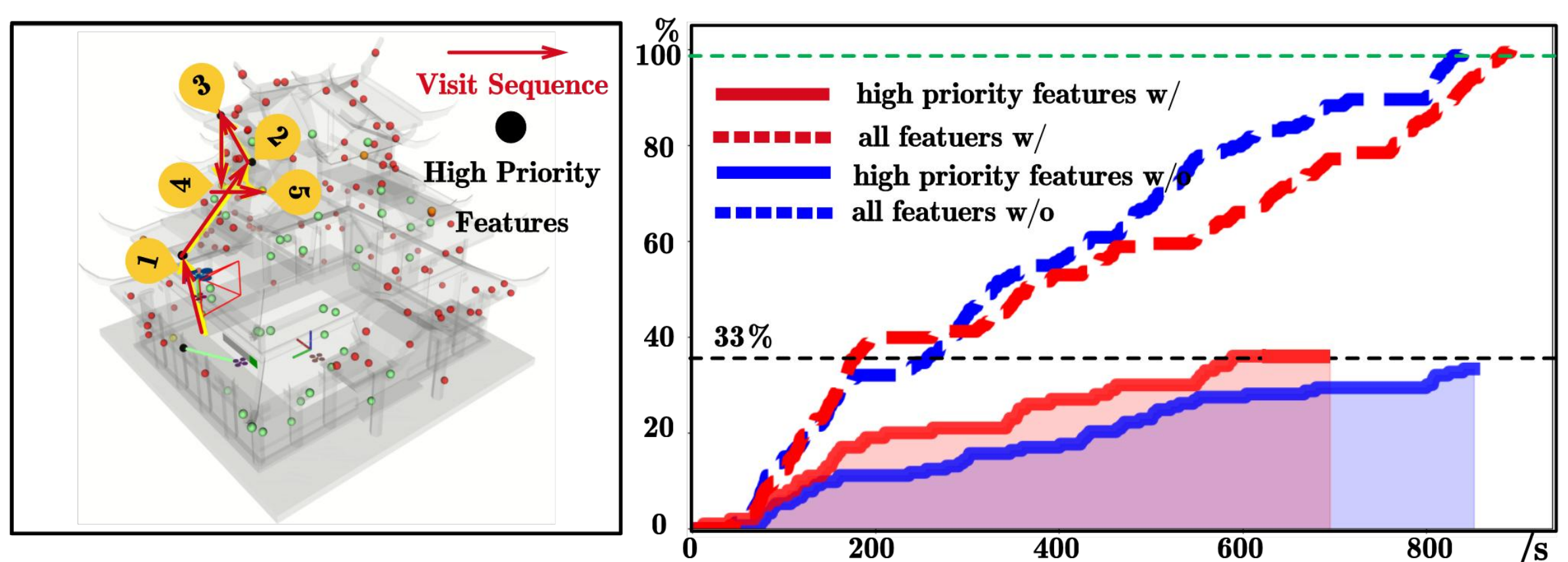}
  \centering
  \caption{
   High-priority features in Scenario-B
   with~$150$ features and~$50$ high-priority features:
   inspection sequence of features with high-priority (\textbf{Left});
   comparison of collected high-priority features over time (\textbf{Right}).
   }
  \label{fig:high-priority-features}
  \vspace{-6mm}
\end{figure}
%--------------------------------------

%--------------------------------------
\begin{figure}[!t]
  \centering
  \includegraphics[width=0.99\linewidth]{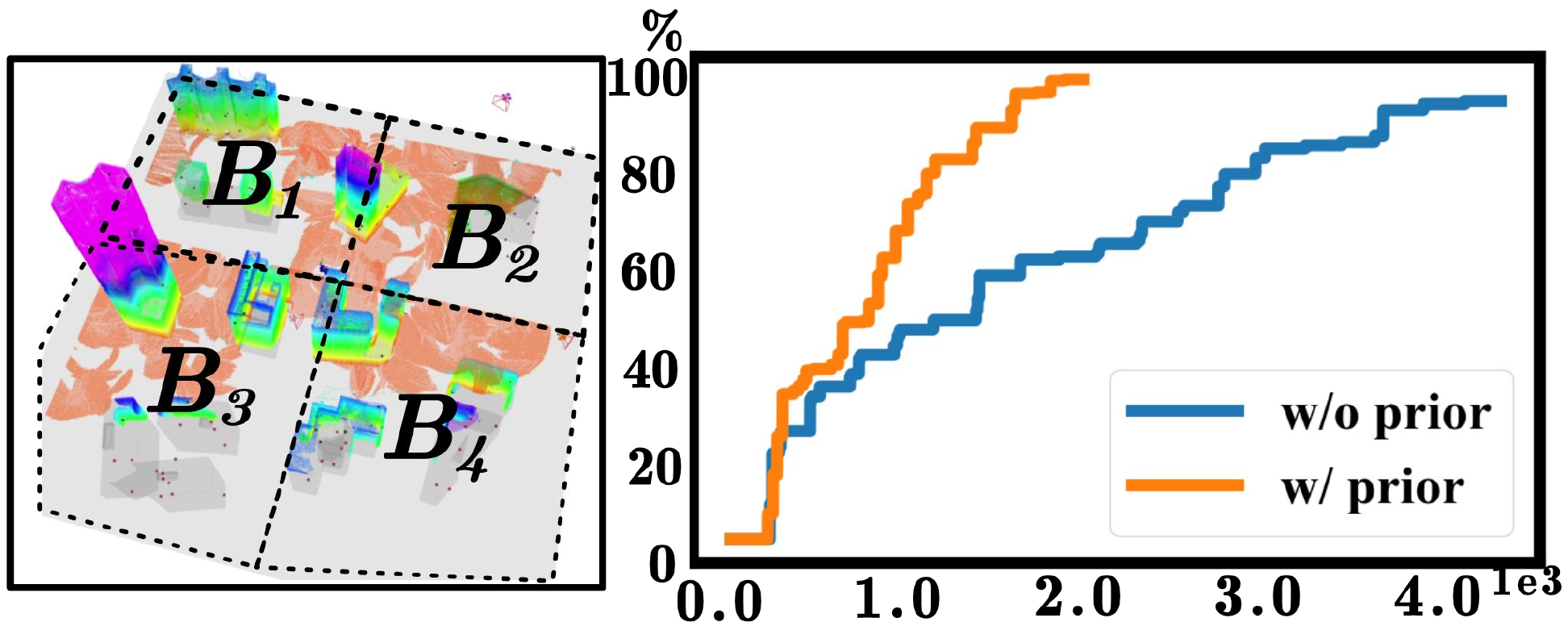}
  \centering 
  \caption{
   {Overall evolution for our system without any priors under Scenario-C (\textbf{left});}
   comparison of collected features over time (\textbf{Right}).
   }
  \label{fig:no-prior}
  \vspace{-4mm}
\end{figure}
%--------------------------------------

% ====================================
\subsubsection{High Priority Features}

{To further validate SLEI3D with high-priority features,}
$50$ features of the~$150$ features in Scenario-B are selected as high-priority features.
Snapshots of the inspector  trajectory along with the priority of these features
are shown in Fig.~\ref{fig:high-priority-features},
which highlights the fact that high-priority features are inspected first.
Namely, the inspector prioritizes~$3$ high-priority features
out of~$5$ features.
Moreover, the collection of high-priority features
or all features over time are also shown.
It can be seen that SLEI3D with consideration for high-priority features achieves much faster collection of
high-priority features than SLEI3D,
while the overall completion time is slightly longer ($842s$ vs. $893s$).
This is because prioritizing high-priority features prevents
simultaneous optimization of the allocation for all features,
resulting in an increase in the overall completion time.

% ===================================
\subsubsection{Fully-unknown Environment}

To validate SLEI3D's capability in completely unknown environments,
comparative experiments are conducted for Scenario-C under~$1$ GCS,~$4$ explorers, and~$8$ inspectors
with and without prior information regarding BBoxes.
As shown in Fig.~\ref{fig:no-prior},
prior-free exploration exhibits significantly slower feature discovery rates for explorers.
This delay is due to the absence of BBoxes,
compelling explorers to exhaustively explore uniform subspaces
without prioritizing targeted areas.
Consequently, the GCS experiences significant delays in collecting the features
(1832s vs. 4000s), which validates the effectiveness of the proposed
mechanism for adaptive partitioning.
It also highlights how prior information regarding BBoxes can accelerate the overall mission of exploration and inspection.

%--------------------------------------
\begin{figure}[!t]
  \centering
  \includegraphics[width=0.9\linewidth]{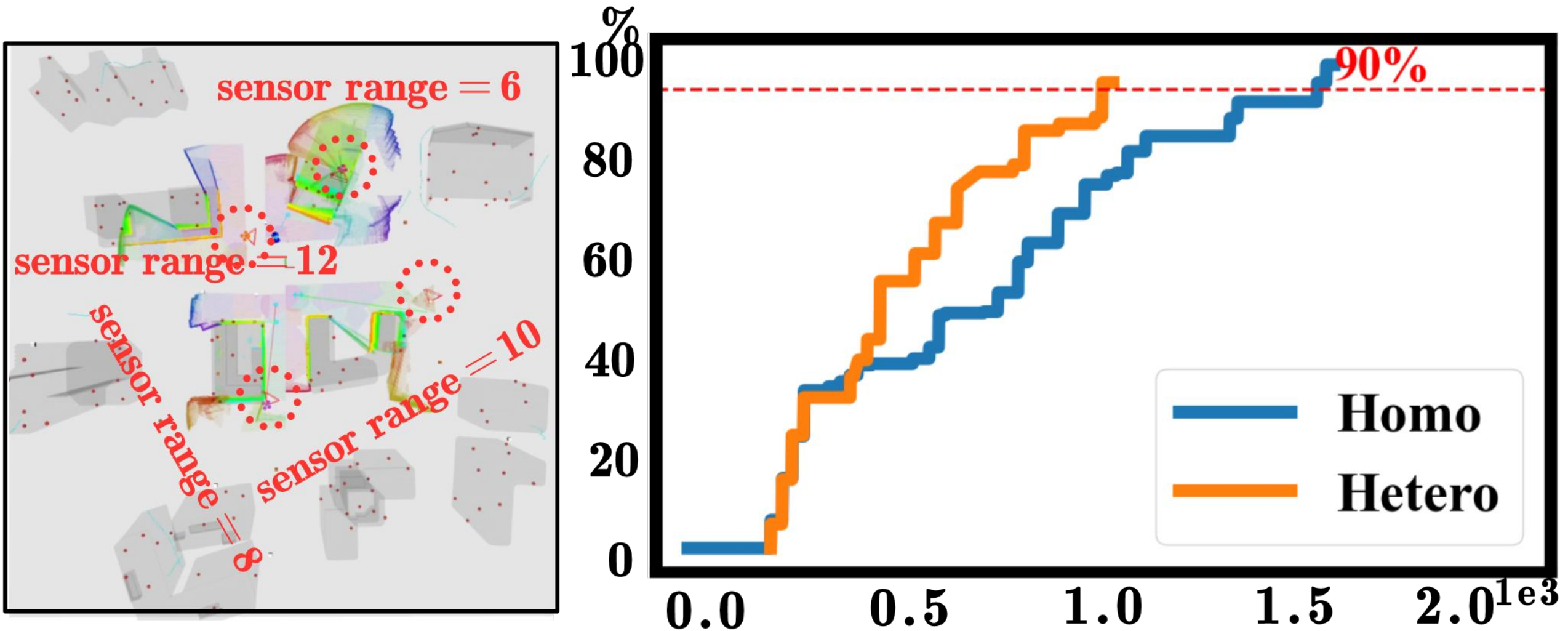}
  \centering
  \caption{
   Heterogeneous sensor range of explorers in Scenario-C with
   $1$ GCS,~$4$ explorers and~$8$ inspectors (\textbf{left});
   comparison of collected features over time (\textbf{right}).
   }
  \label{fig:hetero-exp}
  \vspace{-2mm}
\end{figure}
%--------------------------------------

%--------------------------------------
\begin{figure}[!t]
  \centering
  \includegraphics[width=0.7\linewidth, height=0.5\linewidth]{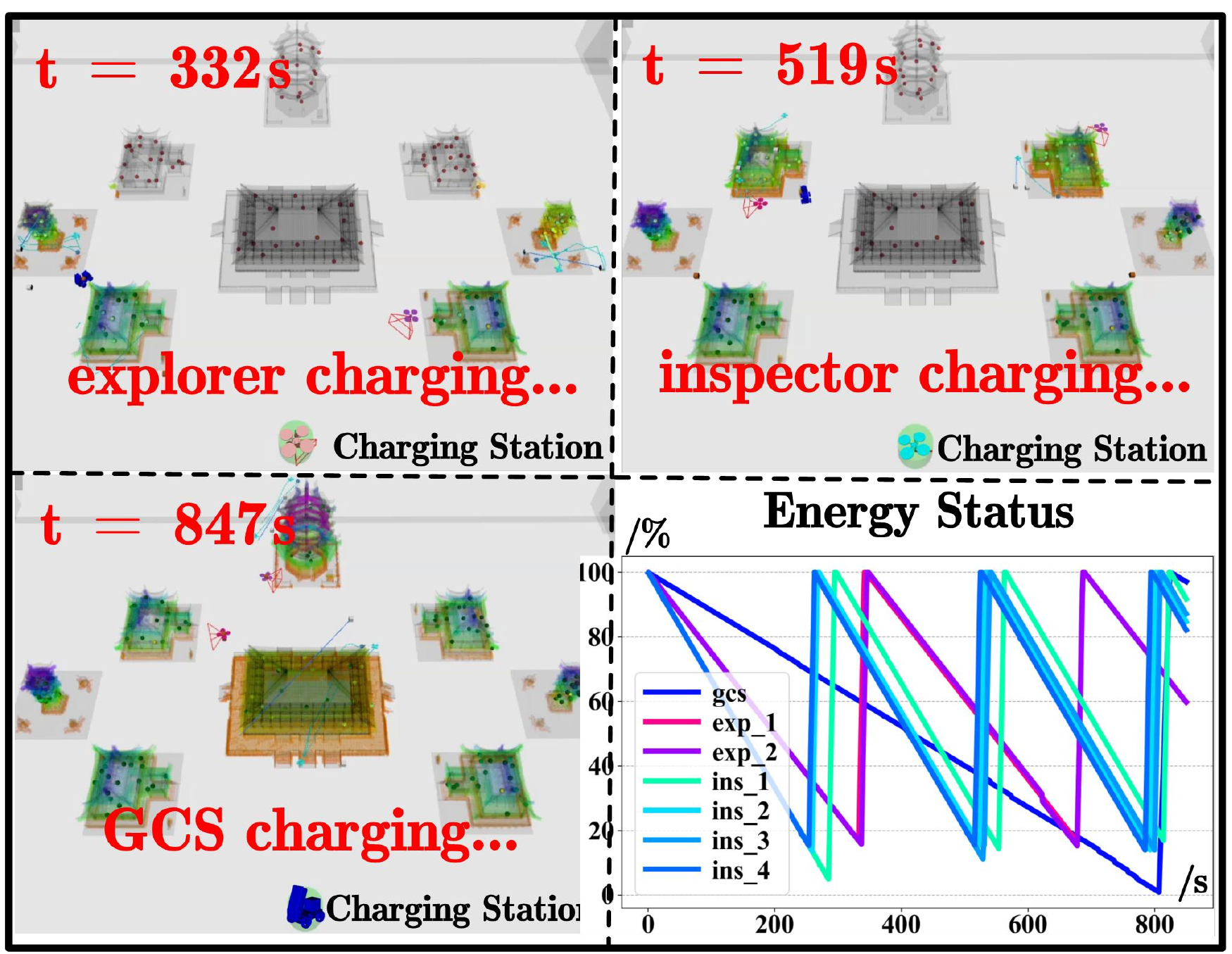}
  \centering
  \caption{
    The representative snapshots of recharging for diffrent role of robots via returning to the fixed charging station
    under Scenario-A and the overall evolution of energy status for entire fleet.
   }
  \label{fig:battery}
  \vspace{-6mm}
\end{figure}
%--------------------------------------

% ===================================
\subsubsection{Heterogeneous Capability of Explorers}

To verify the support for heterogeneous fleets,
the explorers in Scenario-C are set to different sensing ranges
($6m$, $8m$, $10m$, $12m$)
versus homogeneous units (uniform $5m$ sensors).
As shown in Fig.~\ref{fig:hetero-exp},
the heterogeneous system achieves faster feature discovery and inspector dispatch through longer ranges.
This takes $40\%$ less time to collect around~$90\%$ of features ($1005s$ vs. $1656s$),
compared to the homogeneous counterparts.
Notably, explorers with larger sensing ranges are prioritized for larger BBoxes via receding-horizon allocation by GCS ,
which aligns with the time-minimization objective as presented in~\eqref{eq:prb_formulation}.
The results confirm the compatibility of the proposed framework
to heterogeneous fleets,
with the overall performance positively correlating with increased sensor ranges.

%--------------------------------------
\begin{figure*}[!t]
  \centering
  \includegraphics[width=0.99\linewidth,height=0.36\linewidth]{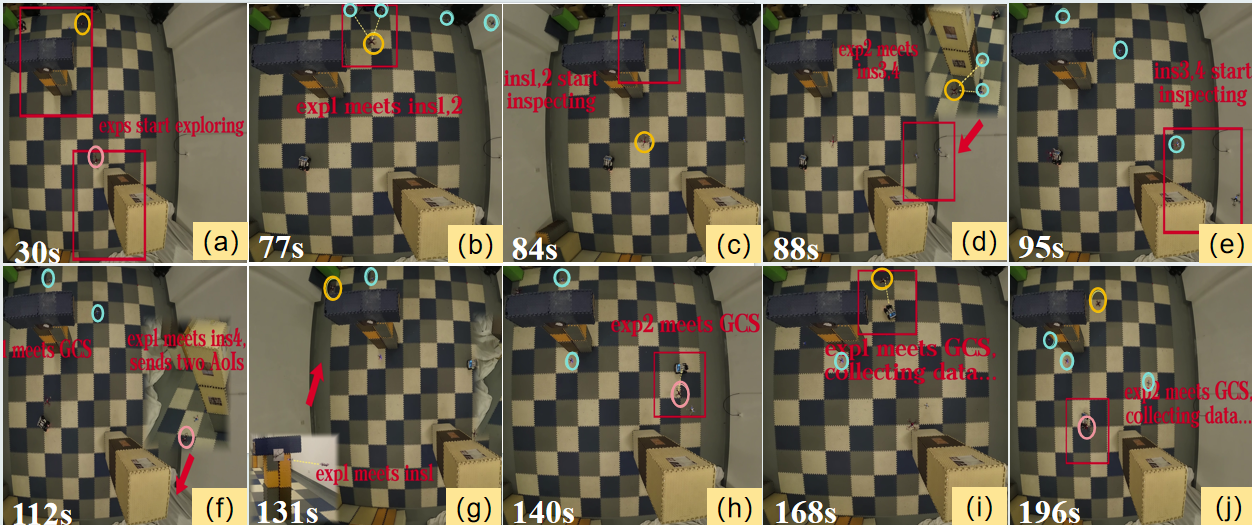}
  \centering
  \caption{
     Snapshots of the hardware experiments involving~$2$ explorers (marked by yellow and pink circles),
     $4$ inspectors (marked by indigo circles), and $1$ GCS for the collaborative inspection
     of $2$ structures and $9$ AoIs.
     \textbf{(a)}: The explorers initiate the exploration;
     \textbf{(b)}: Explorer~1 identifies~$2$ AoIs and coordinates with inspectors~1\&2;
     \textbf{(c)}: AoIs are allocated to inspectors~1\&2;
     \textbf{(d)}: Explorer~2 detects~$2$ additional AoIs and coordinates with inspectors~3\&4;
     \textbf{(e)}: AoIs are allocated to inspectors~3\&4 to start inspecting;
     \textbf{(f)}: Explorer~1 initiates its first data transmission to the GCS;
     \textbf{(g)}: Explorer~1 discovers another AoI and coordinates with inspector~1;
     \textbf{(h)}: Explorer~2 establishes its first GCS communication;
     \textbf{(i)}: Explorer~1 transmits the inspected features and subsequent meetings to the GCS;
     \textbf{(j)}: Explorer~2 transmits the inspected features and subsequent meetings to the GCS.
  }
  \label{fig:exp-snapshot}
  \vspace{-1mm}
\end{figure*}
%--------------------------------------

%========================================
%==============================
\usetikzlibrary{patterns}

\pgfplotstableread[col sep=comma]{
    type,       communication, allocated-BBox, exploration, decision-making, prediction, ins-localPlan
    Scenario-A, 0.0035,        0.003,          0.036,       1.384,           0.000047,   0.050
    Scenario-C, 0.003,         0.005,          0.02,        1.273,           0.000020,   0.050
    Scenario-D, 0.008,         0.004,          0.065,       1.283,           0.000050,   0.070
    }\mytable
\pgfplotstableread[col sep=comma]{
    type,       communication, allocated-BBox, exploration, decision-making, prediction, ins-localPlan
    Ins-time 1, 0.003,         0.003,          0.04,        1.383,           0.000035,    0.003
    Ins-time 3, 0.0035,        0.003,          0.036,       1.384,           0.000047,    0.0035
    Ins-time 5, 0.003,         0.003,          0.055,       1.083,           0.000030,    0.003
    }\Ntable
\pgfplotstableread[col sep=comma]{
    type,   communication, allocated-BBox, exploration, decision-making, prediction, ins-localPlan
    N=4,    0.005,         0.0007,         0.025,        1.13,            0.000267,      0.01
    N=13,   0.008,         0.004,          0.065,        1.283,           0.00005,       0.01
    N=49,   0.005,         0.04,           0.07,         5.305,           0.00006,       0.07
    N=65,   0.005,         0.01,           0.04,         6.806,           0.00005,       0.12
    N=75,   0.004,         0.01,           0.05,         6.855,           0.0003,        0.14
    }\Vtable
  \pgfplotstableread[col sep=comma]{
    type,          ours,     caric,     soar,     mu-cppi
    Scenario-A,    0.118,    0.0307,    0.0535,   0.2655
    Scenario-C,    1.4225,   0.1611,    0.0641,   0.0491
    Scenario-D,    0.357,    0.4465,    0.081,    0.2335
  }\Mtable

\definecolor{v0}{rgb}{0.1, 0.3, 0.8}          % 修改后的蓝色
\definecolor{v1}{rgb}{0.6, 0.4, 0.1}          % 修改后的棕色
\definecolor{v2}{rgb}{0.2, 0.7, 0.4}          % 修改后的浅绿色
\definecolor{v3}{rgb}{0.7, 0.1, 0.5}          % 修改后的紫色
\definecolor{v4}{rgb}{1.0,1.0. 0.0}           % 修改后的
\definecolor{v5}{rgb}{0.7, 0.7, 0.7}          % 修改后的绿色

\pgfplotsset{/pgfplots/error bars/error bar style={thick, black}}
%==============================
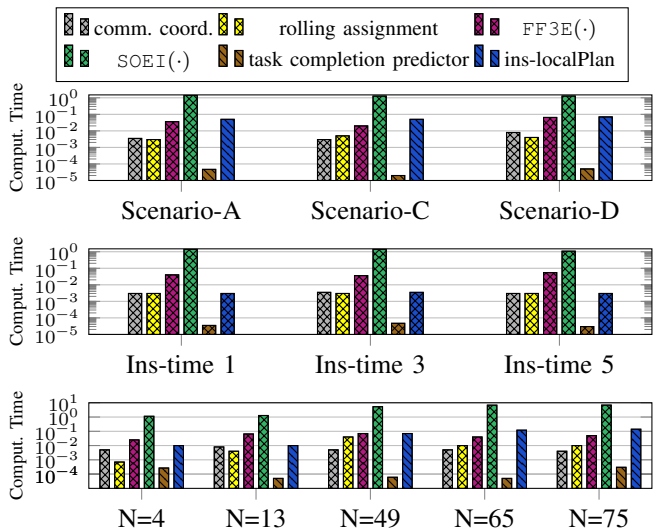
\begin{figure}[t]
	\centering

    \begin{subfigure}[t]{0.50\textwidth}
        \begin{tikzpicture}
          \begin{semilogyaxis}[
            width=\linewidth,
            ybar,
            bar width=5pt,
            % ymode=log,
            log origin = infty,
            ymin=0.00001,
            ymax=1.5,
            xtick pos=bottom,
            ytick={0.00001,0.0001,0.001, 0.01, 0.1, 1},
            yticklabel style = {font=\scriptsize},
            % ytick pos=left,
            minor y tick num=10,
            % ytick align=outside,
            width=1.0\linewidth,
            height=0.30\linewidth,
            ymajorgrids=true,
            enlarge x limits={abs=35pt},
            legend style={at={(0.45,1.15)},
            anchor=south,legend columns=3, font=\footnotesize},
            ylabel={Comput. Time},
            y label style={at={(axis description cs:0.07,0.5)},anchor=south, font=\scriptsize},
            symbolic x coords={Scenario-A, Scenario-C, Scenario-D},
            x label style={font=\tiny},
            xtick=data,
            cycle list={v5, v4, v3, v2, v1, v0},
          ]

              \addplot+[draw=black,fill,postaction={
                pattern=crosshatch
            }] table[x=type,y=communication]{\mytable};
            \addlegendentry{comm. coord.}

              \addplot+[draw=black,fill,postaction={
                pattern=crosshatch
            }] table[x=type,y=allocated-BBox]{\mytable};
            \addlegendentry{rolling assignment}

              \addplot+[draw=black,fill,postaction={
                pattern=crosshatch
            }] table[x=type,y=exploration]{\mytable};
            \addlegendentry{$\texttt{FF3E}(\cdot)$}

              \addplot+[draw=black,fill,postaction={
                pattern=crosshatch
            }] table[x=type,y=decision-making]{\mytable};\
            \addlegendentry{$\texttt{SOEI}(\cdot)$}

              \addplot+[draw=black,fill,postaction={
                pattern=north west lines
            }] table[x=type,y=prediction]{\mytable};
            \addlegendentry{task completion predictor}

            \addplot+[draw=black,fill,postaction={
                pattern=north west lines
            }] table[x=type,y=ins-localPlan]{\mytable};
            \addlegendentry{ins-localPlan}

        \end{semilogyaxis}
        \end{tikzpicture}
      \end{subfigure}

    %=============================================================

    \begin{subfigure}[t]{0.50\textwidth}
        \begin{tikzpicture}
        \begin{semilogyaxis}[
            width=\linewidth,
            ybar,
            bar width=5pt,
            log origin = infty,
            ymin=0.00001,
            ymax=1.5,
            xtick pos=bottom,
            ytick={0.00001,0.0001,0.001, 0.01, 0.1, 1},
            yticklabel style = {font=\scriptsize},
            minor y tick num=10,
            width=1.0\linewidth,
            height=0.30\linewidth,
            ymajorgrids=true,
            enlarge x limits={abs=35pt},
            ylabel={Comput. Time},
            y label style={at={(axis description cs:0.07,0.5)},anchor=south, font=\scriptsize},
            symbolic x coords={Ins-time 1, Ins-time 3, Ins-time 5},
            x label style={font=\tiny},
            xtick=data,
            cycle list={v5, v4, v3, v2, v1, v0},
        ]
            \addplot+[draw=black,fill,postaction={
                pattern=crosshatch
            }] table[x=type,y=communication]{\Ntable};

            \addplot+[draw=black,fill,postaction={
                pattern=crosshatch
            }] table[x=type,y=allocated-BBox]{\Ntable};

            \addplot+[draw=black,fill,postaction={
                pattern=crosshatch
            }] table[x=type,y=exploration]{\Ntable};

            \addplot+[draw=black,fill,postaction={
                pattern=crosshatch
            }] table[x=type,y=decision-making]{\Ntable};

            \addplot+[draw=black,fill,postaction={
                pattern=crosshatch
            }] table[x=type,y=prediction]{\Ntable};

            \addplot+[draw=black,fill,postaction={
                pattern=crosshatch
            }] table[x=type,y=ins-localPlan]{\Ntable};

        \end{semilogyaxis}
        \end{tikzpicture}
    \end{subfigure}

    %============================================================

    \begin{subfigure}[t]{0.50\textwidth}
        \begin{tikzpicture}
          \begin{semilogyaxis}[
            width=\linewidth,
            ybar,
            bar width=3.6pt,
            % ymode=log,
            log origin = infty,
            ymin=0.00001,
            ymax=10,
            xtick pos=bottom,
            ytick={0.0001,0.0001,0.001, 0.01, 0.1, 1, 10},
            yticklabel style = {font=\scriptsize},
            % ytick pos=left,
            minor y tick num=10,
            % ytick align=outside,
            width=1.0\linewidth,
            height=0.30\linewidth,
            ymajorgrids=true,
            enlarge x limits={abs=20pt},
            ylabel={Comput. Time},
            y label style={at={(axis description cs:0.07,0.5)},anchor=south, font=\scriptsize},
            x label style={font=\tiny},
            symbolic x coords={N=4, N=13, N=49,N=65, N=75},
            xtick=data,
            cycle list={v5, v4, v3, v2, v1, v0},
          ]

              \addplot+[draw=black,fill,postaction={
                pattern=crosshatch
            }] table[x=type,y=communication]{\Vtable};

              \addplot+[draw=black,fill,postaction={
                pattern=crosshatch
            }] table[x=type,y=allocated-BBox]{\Vtable};

              \addplot+[draw=black,fill,postaction={
                pattern=crosshatch
            }] table[x=type,y=exploration]{\Vtable};

              \addplot+[draw=black,fill,postaction={
                pattern=crosshatch
            }] table[x=type,y=decision-making]{\Vtable};

              \addplot+[draw=black,fill,postaction={
                pattern=north west lines
            }] table[x=type,y=prediction]{\Vtable};

            \addplot+[draw=black,fill,postaction={
                pattern=north west lines
            }] table[x=type,y=ins-localPlan]{\Vtable};

        \end{semilogyaxis}
        \end{tikzpicture}
      \end{subfigure}

      \caption{
    Computation time of main components of the proposed method:
    for three scenarios (\textbf{Top});
    change of computation time under different inspection time
    in Scenario-C (\textbf{Middle});
    with different fleet sizes in Scenario-A (\textbf{Bottom}).}
  \label{fig:efficiency}
\vspace{-5mm}
\end{figure}
%==============================

%========================================

% ===================================
\subsubsection{Energy-constrained Fleet Management}
\label{subsubsec:energy-manage}

To validate that the proposed framework can handle energy constraints as described in Sec.~\ref{subsec:energy},
a team comprising of 1 GCS
with $\overline{E}_0 = 800~mAh$ energy, 2 explorers ($\overline{E}_i = 400~mAh, \forall i \in \mathcal{N}_{\texttt{e}}$),
and 4 inspectors ($\overline{E}_j = 300~mAh, \forall j \in \mathcal{N}_{\texttt{O}}$)
is deployed within the Scenario-A.
A unified protocol for recharging is activated when the energy level of any robot drops below $\underline{E} = 20\%\overline{E}$,
with all charging stations providing $\beta = 10\%\overline{E}$ charging rate per second.
The key events captured in Fig.~\ref{fig:battery} demonstrate the overall efficiency under energy constraints:
At $t=332$~s, Explorer~1 initiates the protocol of recharging once
it reaches the critical threshold~$\underline{E}$ (as visualized in the plot of energy status),
followed by a similar event for Inspector~3 at $t=540$~s.
The GCS executes its scheduled recharge at $t=780$~s while maintaining the planned communication events
with the explorers.
Throughout the mission duration of $878$~s when all AoIs are inspected,
the team performs in total 1, 2 and 3 recharging events for the GCS, explorers and inspectors, respectively.
The energy levels of all robots are above $5\%$ at all time.

% ========================================
% =========================================
\subsection{Computation Complexity Analysis}
\label{subsec:comp-time-analysis}

To facilitate the broader application of the proposed framework by practitioners,
the computation complexity of the proposed framework
is analyzed across scenarios of varying scales,
different inspection time and fleet sizes.
Specifically,
the computation time mainly consists of three parts:
{(I) the local planning for GCS,
including the coordination of communication events
and the rolling assignment of BBoxes;
(II) the local planning for explorers,
involving the~$\texttt{FF3E}(\cdot)$ for exploration,
the planning via~$\texttt{SOEI}(\cdot)$
and the predictor of completion time;
(III) the local planning for inspectors,
encompassing the update of local plan~$\{\xi_j^+\}$.
}

To begin with,
the computation time of each step above is analyzed in different scenarios,
where~$1$ GCS,~$4$ explorers and~$8$ inspectors are deployed in three scenarios from Fig.~\ref{fig:sys-scene}
under inspection time~$t_q=3s$.
As summarized in Fig.~\ref{fig:efficiency},
all modules except the~$\texttt{SOEI}(\cdot)$ for the explorers,
take less than~$0.1s$ across all scenarios.
{The~$\texttt{SOEI}(\cdot)$ algorithm},
which relies on an iterative optimization process,
dominates the computation time of the local planning of GCS.
Nonetheless,
the total computation time of each step is less than~$1.3s$ on average.
Second,
the same analysis is examined for different inspection time~$t_q$,
i.e., $1s$, $3s$, and $5s$ in Scenario-C with~$1$ GCS,~$4$ explorers
and~$8$ inspectors.
It shows that
the inspection time slightly affects the computation time of~$\texttt{SOEI}(\cdot)$module,
but the overall computation time of each step is less than~$0.25s$ on average.
The same analysis is performed across different fleet sizes,
i.e., $4$ ($1$ GCS,~$1$ explorer,~$2$ inspectors),
$13$ ($1$ GCS,~$4$ explorers,~$8$ inspectors),
$49$ ($1$ GCS,~$8$ explorers,~$40$ inspectors),
$65$ ($5$ GCS,~$10$ explorers,~$50$ inspectors),
and $75$ ($5$ GCS,~$10$ explorer,~$60$ inspectors) in Scenario-A.
It shows that the computation time grows steadily as the fleet size increases,
ranging from~$0.13s$ to~$6.91s$ on average,
as the planning requires more interactions to converge
when the number of explorers and inspectors increases.
Nonetheless, the overall computation for $N=75$ remains below~$8.82s$,
showcasing its scalability to large-scale robotic fleets.

% ========================================
% =================================
\subsection{Hardware Experiments}
\label{subsec:real-exp}

%--------------------------------------
\begin{figure}[!t]
  \centering
  \includegraphics[width=0.8\linewidth]{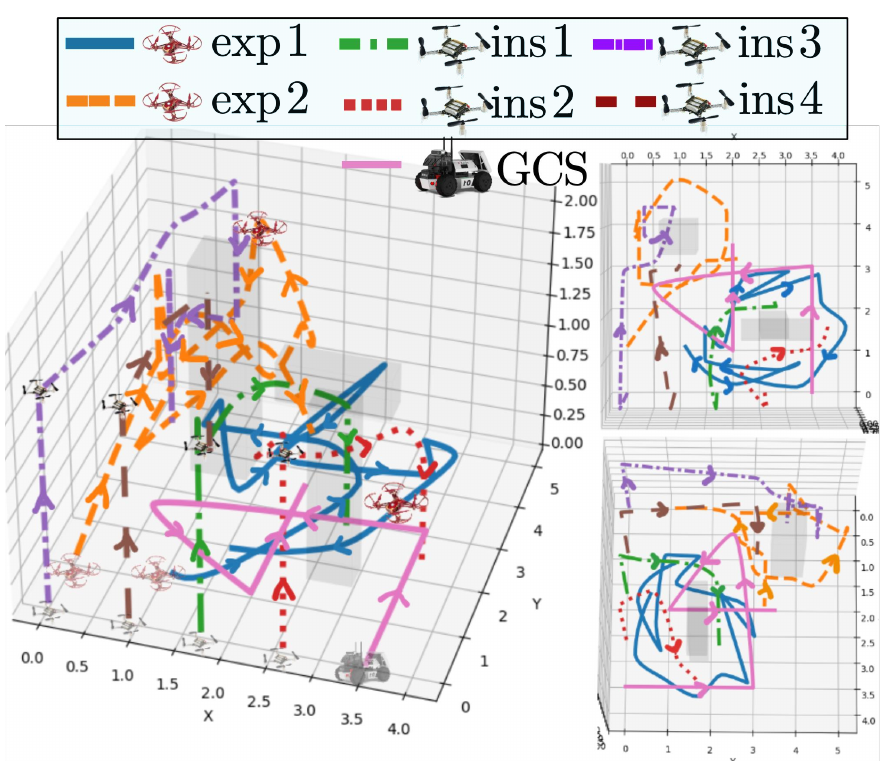}
  \centering
  \caption{
     Complete trajectories of all $8$ robots over the mission duration of $200s$,
     with arrows indicating the timeline in different of views:
     Front view (\textbf{left}), top view (\textbf{right-top}), and side view (\textbf{right-bottom}).
   }
  \label{fig:exp-trajectory}
  \vspace{-4mm}
\end{figure}
%--------------------------------------

%--------------------------------------
\begin{figure}[!t]
  \centering
  \includegraphics[width=0.7\linewidth, height=0.3\linewidth]{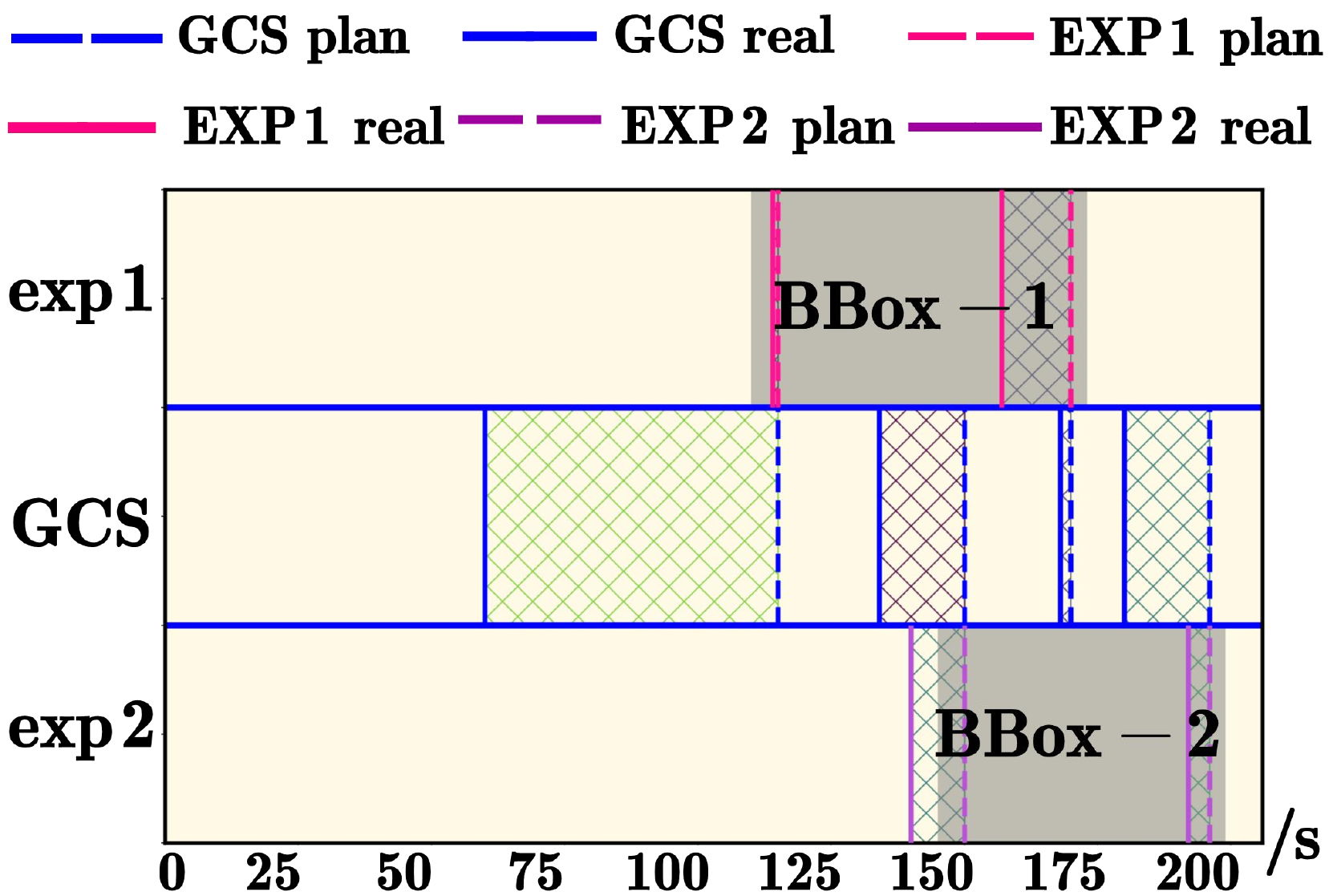}
  \centering
  \caption{
    The planned and actual meeting events between the GCS and two explorers
    in the hardware experiments.
    The planned meeting time for the GCS (blue dashed lines)
    aligns with the explorers (pink and purple dashed lines),
    indicating the confirmed meeting event and the target BBox.
   }
  \label{fig:exp-gcs-exp}
  \vspace{-4mm}
\end{figure}
%--------------------------------------

\subsubsection{Experimental Setup}
\label{subsubsec:real-exp-setup}

Hardware experiments are also conducted to validate the practical feasibility of the proposed framework.
As shown in Fig.~\ref{fig:exp-snapshot},
there is a $~5m\times5m\times3m$ arena containing two mini-nature 3D structures,
a T-shape structure of size~$1.8m\times0.3m\times1.9m$
and a rectangular structure of size~$0.5m \times 0.5m \times 2.0m$.
In total,~$9$ features of interest as cracks are printed at unknown locations on their surfaces.
Moreover, two ``Tello" UAVs~\cite{tello} equipped with monocular cameras serve as explorers,
to perform exploration via 3D reconstruction.
These explorers navigate at the speed of~$0.7m/s$ with a kinodynamic planner for obstacle avoidance
and the communication range of $0.4m$.
In addition,
four ``Crazyflie" drones~\cite{crazyflie} operate as inspectors at $0.3m/s$,
employing an improved A$^\star$ algorithm with priority-based collision avoidance and a communication range of $0.2m$.
The AoIs on the surfaces are identified via the YOLOv7 package,
the semantic segmentation module guides the FOV of the explorers towards to the target buildings,
and reconstruction of the 3D environment by fusing RGB images with estimated depth information,
which is predicted from monocular images of ``Tello" using the MiDas network~\cite{birkl2023midas}. 
Lastly, one ``LIMO"~\cite{limo} four-differential ground robot acts as the moveable GCS,
utilizing a EAI XL2 LiDAR for the real-time mapping and the \texttt{move\_base} package for autonomous navigation.
All robots are tracked by an OptiTrack motion capture system with~$20$ infrared cameras for positioning data,
while all inter-robot communications are strictly limited to Line-of-Sight (LOS) constraints.
It should be noted that the communication constraints among the robots are enforced centrally via a central node,
including the relative distance, obstacle occlusion and bandwidth.

\subsubsection{Results}
\label{subsubsec:real-exp-result}

%--------------------------------------
\begin{figure}[!t]
    \centering
    \includegraphics[width=0.8\linewidth]{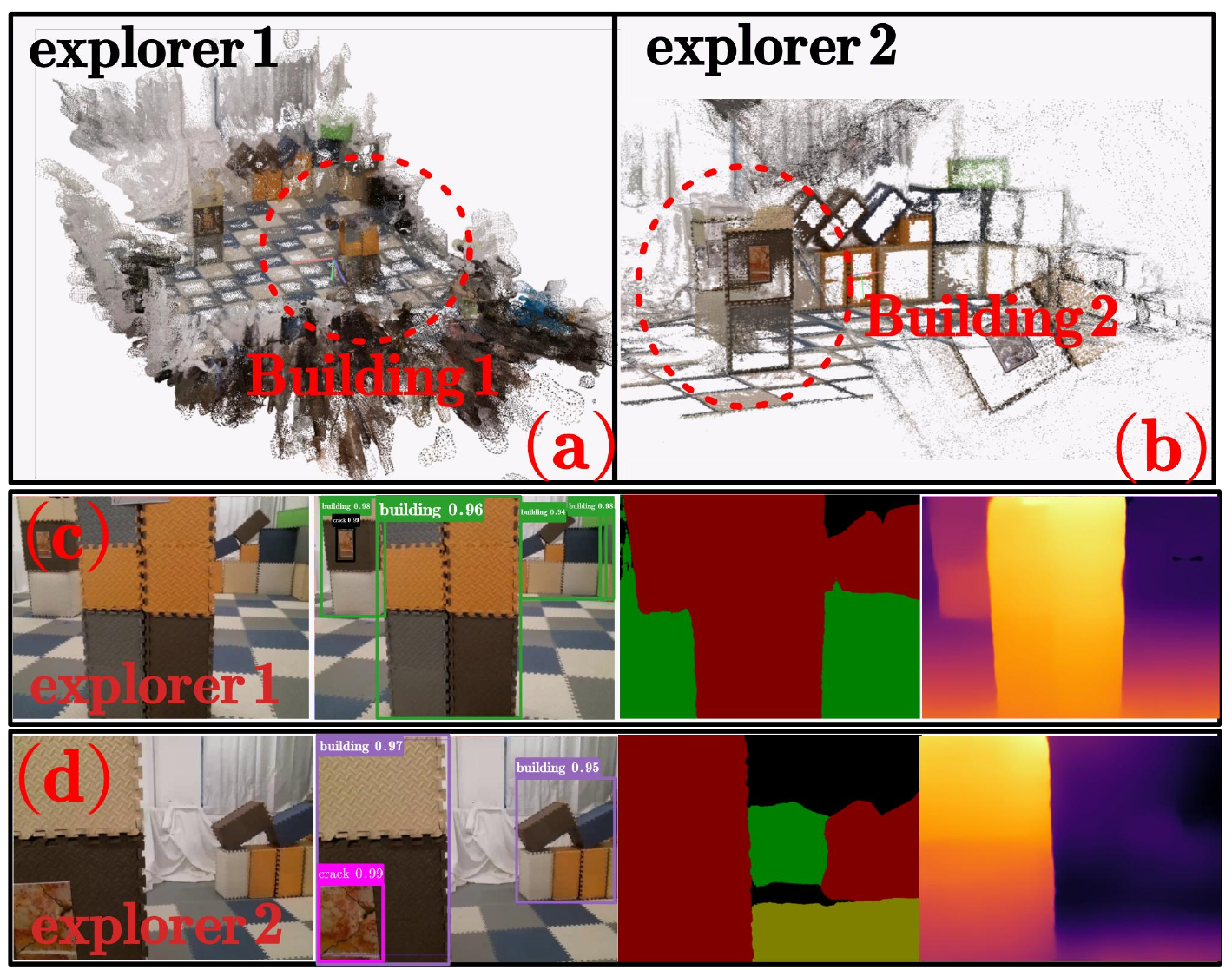}
    \centering
    \caption{
     Perception results in the 3D environment via~$2$ explorers with the monocular cameras,
     including:
     re-constructed 3D environment for explorer~1(\textbf{(a)}) and explorer~2(\textbf{(b)}),
     recognition results of the AoIs and building structures with the 
     raw camera images, AoI detection, semantic segmentation and depth estimation for explorer~1 (\textbf{(c)}),
     and explorer~2 (\textbf{(d)}).
    }
    \label{fig:exp-reconstruction}
    \vspace{-4mm}
  \end{figure}
%--------------------------------------

% ---------------------------------
\begin{figure}[!t]
    \centering
    \includegraphics[width=0.7\linewidth,height=0.3\linewidth]{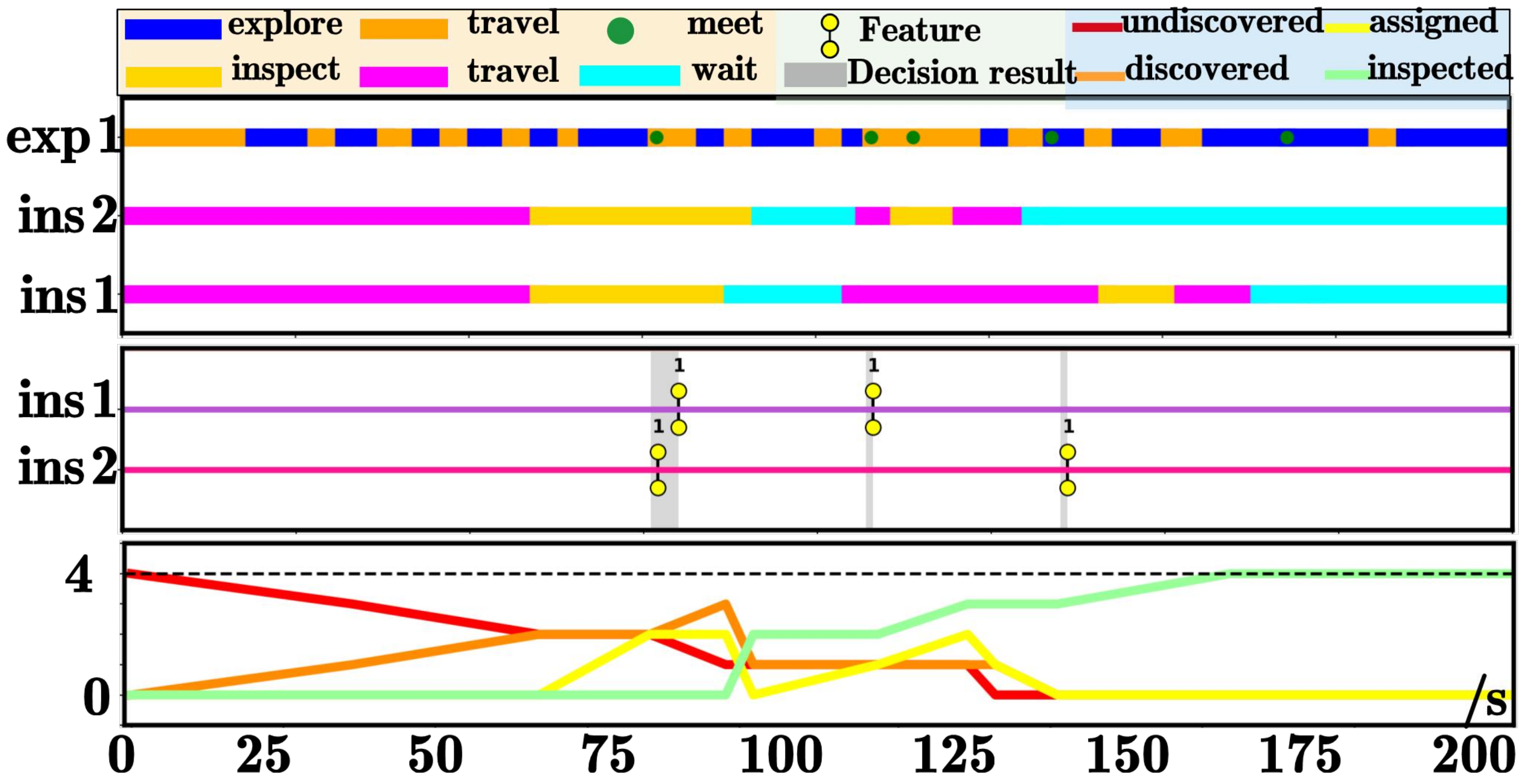}
    \centering
    \caption{
      \textbf{Top}: Evolution of the robot status within one subgroup;
      \textbf{Middle}: Planning results during the meeting events between the explorer 1
      and the inspectors 1 and 2;
      \textbf{Bottom}: Evolution of number of features being discovered, assigned and inspected
      over time.
     }
    \label{fig:exp-exp-ins}
    \vspace{-4mm}
  \end{figure}
  %--------------------------------------

The hardware experiment validates the performance of the proposed framework during the~$200s$ mission.
As shown in Fig.~\ref{fig:exp-snapshot},
$2$ explorers initiate the exploration at $t=30s$, as marked by red boxes in Fig.~\ref{fig:exp-snapshot}(a).
Explorer~1 identifies~$2$ AoIs that might contain cracks by $t=77s$ and coordinates with
inspectors~1\&2 through subsequent communication meetings as shown in Fig.~\ref{fig:exp-snapshot}(b).
The AoIs are allocated following the $\texttt{SOEI}$ algorithm,
which directs the inspectors to their respective regions in Fig.~\ref{fig:exp-snapshot}(c).
Simultaneously, explorer~2 detects~$2$ additional AoIs at $t=88s$  in Fig.~\ref{fig:exp-snapshot}(d) and coordinates with inspectors~3\&4 in Figs.~\ref{fig:exp-snapshot}(e-f).
Concurrently, explorer~1 initiates its first data transmission to the movable GCS at $t=112s$ in Fig.~\ref{fig:exp-snapshot}(f),
including the inspected features and the subsequent meetings.
At $t=131s$, explorer~1 discovers another AoI and coordinates with inspector~1,
while explorer~2 establishes its first GCS communication at $t=140s$ in Fig.~\ref{fig:exp-snapshot}(h).
Both explorers strictly follow the confirmed meeting events,
with the last communication to GCS at $t=168s$ and $t=196s$ shown in Figs.~\ref{fig:exp-snapshot}(i-j).
By then, all 3D structures and inspected AoIs are available at the GCS.
Furthermore, the trajectories of all robots are recorded in Fig.~\ref{fig:exp-trajectory},
which indicate that (i) the explorers achieve a complete coverage and all AoIs are successfully inspected, as shown in Fig.~\ref{fig:exp-reconstruction};
(ii) the GCS coordinates with both explorers for the optimal meeting events at four locations, as shown in Fig.~\ref{fig:exp-gcs-exp};
and (iii) all robots are collision-free with other robots and the 3D structures.
Lastly, the detailed evolution of robot status within the first subgroup is summarized in Fig.~\ref{fig:exp-exp-ins},
which validates the robustness of the proposed scheme under fluctuations in navigation time due to motion uncertainties and collision avoidance.
Detailed videos of the hardware experiments can be found in the supplementary material.

%%========================================
\section{Conclusion}
\label{sec:conclusion}

This work tackles the challenge of simultaneous exploration, inspection,
and communication in large-scale, unknown 3D environments,
via heterogeneous robotic fleets under limited communication.
The proposed framework SLEI3D
employs a multi-layer and multi-rate strategy,
dividing the fleet into subgroups based on sensing capabilities,
explorers equipped with long-range sensors identify areas of interest (AoIs),
while inspectors with close-range sensors inspect the features.
A ground control station (GCS) allocates tasks and facilitates communication
using an intermittent protocol between the GCS and explorers,
and a proactive protocol between explorers and inspectors to
streamline task allocation and data collection.
Extensive simulations validate its scalability and applicability
to diverse scenarios and large-scale robotic fleets.

Future work includes:
(I)  enhancing the exploration efficiency of subgroups
by enabling multi-robot exploration,
     which necessitates more sophisticated communication protocols;
(II) developing exploration strategies that are independent of prior information,
     such as bounding boxes;
(III) incorporating diverse feature types and inspector roles to
      handle more complex task requirements;
{(IV) integrating more realistic communication models,
      including Non-Line-of-Sight (NLOS), signal attenuation, and multi-path effects,
      to enhance the system's robustness for real-world complex environments.}

%%========================================

%========================================
\bibliographystyle{IEEEtran}
\bibliography{contents/root}

\end{document}